\documentclass{article} 
\usepackage{iclr2021_conference,times}


\usepackage{amsmath,amsfonts,bm}









\def\eqref#1{equation~\ref{#1}}









\def\1{\bm{1}}










\def\mF{{\bm{F}}}

\DeclareMathAlphabet{\mathsfit}{\encodingdefault}{\sfdefault}{m}{sl}
\SetMathAlphabet{\mathsfit}{bold}{\encodingdefault}{\sfdefault}{bx}{n}

\def\gA{{\mathcal{A}}}
\def\gB{{\mathcal{B}}}
\def\gC{{\mathcal{C}}}
\def\gD{{\mathcal{D}}}

\def\gO{{\mathcal{O}}}
\def\gP{{\mathcal{P}}}

\def\gS{{\mathcal{S}}}



\def\sR{{\mathbb{R}}}

\def\sZ{{\mathbb{Z}}}








\newcommand{\E}{\mathbb{E}}

\newcommand{\KL}{D_{\mathrm{KL}}}



\DeclareMathOperator*{\argmax}{arg\,max}
\DeclareMathOperator*{\argmin}{arg\,min}

\usepackage{amsmath}
\usepackage{amssymb}
\usepackage{graphicx}
\usepackage{xcolor}
\usepackage{subcaption}
\usepackage{amsmath, amssymb,amsthm}
\usepackage[font=small]{caption}
\usepackage{hyperref}
\definecolor{mydarkblue}{rgb}{0,0.08,0.45}
\hypersetup{ %
    pdftitle={},
    pdfauthor={},
    pdfsubject={},
    pdfkeywords={},
    pdfborder=0 0 0,
    pdfpagemode=UseNone,
    colorlinks=true,
    linkcolor=mydarkblue,
    citecolor=mydarkblue,
    filecolor=mydarkblue,
    urlcolor=mydarkblue,
    pdfview=FitH}

\usepackage{url}
\usepackage{amsmath}
\usepackage{amssymb}
\usepackage{graphicx}
\usepackage{subcaption}
\usepackage{amsmath, amssymb,amsthm}
\usepackage{todonotes}
\usepackage{enumitem}
\usepackage{wrapfig}
\usepackage{bbm}
\usepackage{subcaption}
\usepackage{algorithm}
\usepackage{algpseudocode}
\newtheorem{theorem}{Theorem}
\newtheorem{lemma}{Lemma}
\setlength{\belowdisplayskip}{1pt} \setlength{\belowdisplayshortskip}{1pt}
\setlength{\abovedisplayskip}{1pt} \setlength{\abovedisplayshortskip}{1pt}
\newtheorem{innercustomthm}{Theorem}

\newtheorem{innercustomlemma}{Lemma}

\usepackage{xspace}
\newcommand{\algoFull}{\textit{Conservative Safety Critics}} 
\newcommand{\algoName}{{CSC}\xspace} 

\title{Conservative Safety Critics for Exploration}


\author{Homanga Bharadhwaj$^1$\thanks{Work done during HB's (virtual) visit to Sergey Levine's lab at UC Berkeley}, Aviral Kumar$^2$, Nicholas Rhinehart$^2$,\\ \textbf{Sergey Levine}$^2$,  \textbf{Florian Shkurti}$^1$, \textbf{Animesh Garg}$^1$\\
$^1$University of Toronto, Vector Institute\\
$^2$University of California Berkeley\\
\texttt{homanga@cs.toronto.edu} 
}


\newcommand{\HB}[1]{}
\newcommand{\Aviral}[1]{}

%

\iclrfinalcopy 
\begin{document}

\maketitle

\begin{abstract}
Safe exploration presents a major challenge in reinforcement learning (RL): when active data collection requires deploying partially trained policies, we must ensure that these policies avoid catastrophically unsafe regions, while still enabling trial and error learning. In this paper, we target the problem of safe exploration in RL by learning a conservative safety estimate of environment states through a critic, and provably upper bound the likelihood of catastrophic failures at every training iteration. We theoretically characterize the tradeoff between safety and policy improvement, show that the safety constraints are likely to be satisfied with high probability during training, derive provable convergence bounds for our approach, which is no worse asymptotically than standard RL, and  demonstrate the efficacy of the proposed approach on a suite of challenging navigation, manipulation, and locomotion tasks. Empirically, we show that the proposed approach can achieve competitive task performance while incurring significantly lower catastrophic failure rates during training than prior methods. Videos are at this url \url{https://sites.google.com/view/conservative-safety-critics/}
\end{abstract}

\section{Introduction}
 Reinforcement learning (RL) is a powerful framework for learning-based control because it can enable agents to learn to make decisions automatically through trial and error. However, in the real world, the cost of those trials -- and those errors -- can be quite high: 
 a quadruped learning to run as fast as possible, might fall down and crash, and then be unable to attempt further trials due to extensive physical damage. 
 However, learning complex skills without any failures at all is likely impossible. Even humans and animals regularly experience failure, but quickly learn from their mistakes and behave \emph{cautiously} in risky situations. In this paper, our goal is to develop safe exploration methods for RL that similarly exhibit \emph{conservative} behavior, erring on the side of caution in particularly dangerous settings, and limiting the number of catastrophic failures.

A number of previous approaches have tackled this problem of safe exploration, often by formulating the problem as a constrained Markov decision process (CMDP)~\citep{saferlsurvey,cmdp}. However, most of these approaches require additional assumptions, like assuming access to a function that can be queried to check if a state is safe~\citep{saved}, assuming access to a default safe controller~\citep{krause1,krause2}, assuming knowledge of all the unsafe states~\citep{jaimefisac}, and only obtaining safe policies after training converges, while being unsafe during the training process~\citep{tessler2018reward,dalal2018safe}.

In this paper, we propose a general safe RL algorithm, with bounds on the probability of failures during training. Our method only assumes access to a sparse (e.g., binary) indicator for \emph{catastrophic failure}, in  the standard RL setting.
We train a \emph{conservative} safety critic that overestimates the probability of catastrophic failure, building on tools in the recently proposed conservative Q-learning framework~\citep{cql} for offline RL. In order to bound the likelihood of catastrophic failures at every iteration, we impose a KL-divergence constraint on successive policy updates so that the stationary distribution of states induced by the old and the new policies are not arbitrarily different. Based on the safety critic's value, we consider a chance constraint denoting probability of failure, and optimize the policy through primal-dual gradient descent.

Our key contributions in this paper are designing an algorithm that we refer to as \algoFull~(\algoName), that learns a conservative estimate of how safe a state is, using this conservative estimate for safe-exploration and policy updates, and theoretically providing upper bounds on the probability of failures throughout
training. Through empirical evaluation in five separate simulated robotic control domains spanning manipulation, navigation, and locomotion, we show that \algoName is able to learn effective policies while reducing the rate of catastrophic failures by up to 50\% over prior safe exploration methods.

\section{Preliminaries}
\label{sec:prelim}

We describe the problem setting of a constrained MDP~\citep{cmdp} specific to our approach and the conservative Q learning~\citep{cql} framework that we build on in our algorithm.

\noindent\textbf{Constrained MDPs.} We take a constrained RL view of safety~\citep{saferlsurvey,cpo}, and define safe exploration as the process of ensuring the constraints of the constrained MDP (CMDP) are satisfied while exploring the environment to collect data samples. A CMDP is a tuple $(\gS,\gA,P,R,\gamma,\mu,\gC)$, where $\gS$ is the state space, $\gA$ is the action space, $P:\gS\times\gA\times\gS\rightarrow[0,1]$ is a transition kernel, $R:\gS\times\gA\rightarrow\sR$ is a task reward function, $\gamma\in(0,1)$ is a discount factor, $\mu$ is a starting state distribution, and $\gC=\{(c_i:\gS\rightarrow\{0,1\}, \chi_i\in\sR)|i\in\sZ\}$ is a set of (safety) constraints that the agent must satisfy, with  constraint functions $c_i$ taking values either $0$ (\emph{alive}) or $1$ (\emph{failure}) and limits $\chi_i$ defining the maximal allowable amount of non-satisfaction, in terms of expected probability of failure. A stochastic  policy $\pi:\gS\rightarrow\gP(\gA)$ is a mapping from states to action distributions, and the set of all stationary policies is denoted by $\Pi$. Without loss of generality, we can consider a \textit{single} constraint, where $\gC$ denotes the constraint satisfaction function  $C:\gS\rightarrow\{0,1\}$, ($C\equiv \mathbbm{1}\{failure\}$) similar to the task reward function, and an upper limit $\chi$.   Note that since we assume only a sparse binary indicator of failure from the environment $C(s)$, in purely online training, the agent \textit{must} fail a few times during training, and hence $0$ failures is impossible. However, we will discuss how we can minimize the number of failures to a small rate, for constraint satisfaction.

We define discounted state distribution of a policy $\pi$ as $d^\pi(s)=(1-\gamma)\sum_{t=0}^\infty\gamma^tP(s_t=s|\pi)$, the state value function as $ V_R^\pi(s) = \E_{\tau\sim\pi}\left[R(\tau)|s_0=s\right]$, the state-action value function as $ Q_R^\pi(s,a) = \E_{\tau\sim\pi}\left[R(\tau)|s_0=s,a_0=a\right]$, and the advantage function as $A_R^\pi(s,a)=Q_R^\pi(s,a)-V_R^\pi(s)$. We define similar quantities for the constraint function, as $V_C$, $Q_C$, and $A_C$. So, we have $ V_R^\pi(\mu) = \E_{\tau\sim\pi}\left[\sum_{t=0}^{\infty}R(s_t,a_t)\right] $ and $V_C^\pi(\mu)$ denoting the average episodic failures, which can also be interpreted as \textit{expected probability of failure} since $V_C^\pi(\mu) = \E_{\tau\sim\pi}\left[\sum_{t=0}^{\infty}C(s_t)\right] = \E_{\tau\sim\pi}[\mathbbm{1}\{failure\}] = \mathbb{P}(failure|\mu)$. For policy parameterized as $\pi_\phi$, we denote $d^\pi(s)$ as $\rho_\phi(s)$. Note that although  $C:\gS\rightarrow\{0,1\}$ takes on binary values in our setting, the function $V_C^\pi(\mu)$ is a continuous function of the policy $\pi$.


\noindent\textbf{Conservative Q Learning.} CQL~\citep{cql} is a method for offline/batch RL~\citep{lange2012batch,levine2020offline} that aims to learn a $Q$-function such that the expected value of a policy under the learned $Q$ function lower bounds its true value, preventing over-estimation due to out-of-distribution actions as a result. In addition to training Q-functions via standard Bellman error, CQL minimizes the expected $Q$-values under a particular distribution of actions, $\mu(a|s)$, and maximizes the expected Q-value under the on-policy distribution, $\pi(a|s)$. CQL in and of itself might lead to unsafe exploration, whereas we will show in Section~\ref{sec:method}, how the theoretical tool introduced in CQL can be used to devise a safe RL algorithm. 

\section{The Conservative Safe-Exploration Framework}
\label{sec:method}

\begin{figure}
    \centering
    \includegraphics[width=0.82\textwidth]{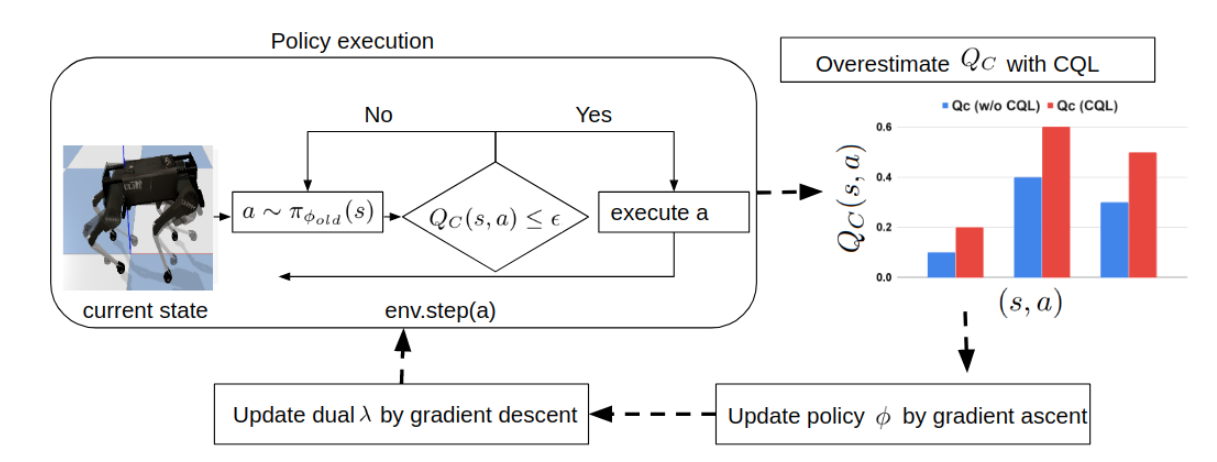}
    \vspace{-10pt}
    \caption{\textbf{CSC (Algorithm~\ref{alg:cscmainalgo})}. $env.step(a)$ steps the simulator to the next state $s'$ and provides $R(s,a)$ and $C(s')$ values to the agent. If $C(s')=1$ (\textit{failure}), episode terminates. $Q_C$ is the learned safety critic.\vspace{-10pt}}
    \label{fig:firstfig}
    \vspace{-5pt}
\end{figure}
\vspace{-5pt}

In this section we describe our safe exploration framework. The safety constraint $C(s)$ defined in Section~\ref{sec:prelim} is an indicator of catastrophic failure: $C(s)=1$ when a state $s$ is unsafe and $C(s)=0$ when it is not, and we ideally desire $C(s)=0$ $\forall{s\in\gS}$ that the agent visits. Since we do not make any assumptions in the problem structure
for RL (for example a known dynamics model), we cannot guarantee this, but can at best reduce the \textit{probability of failure} in every episode. So, we formulate the constraint as $V_C^\pi(\mu) = \E_{\tau\sim\pi}\left[\sum_{t=0}^{\infty}C(s_t)\right]\leq\chi$, where $\chi\in[0,1)$ denotes \textit{probability of failure}. Our approach is motivated by the insight that by being ``conservative" with respect to how safe a state is, and hence by over-estimating this probability of failure, we can effectively ensure constrained exploration.

Figure~\ref{fig:firstfig} provides an overview of the approach. The key idea of our algorithm is to train a conservative safety critic denoted as $Q_C(s,a)$, that overestimates how unsafe a particular state is and modifies the exploration strategy to appropriately account for this safety under-estimate (by overestimating the probability of failure). During policy evaluation in the environment, we use the safety critic $Q_C(s,a)$ to reduce the chance of catastrophic failures by checking whether taking action $a$ in state $s$ has $Q_C(s,a)$ less than a threshold $\epsilon$. If not, we re-sample $a$ from the current policy $\pi(a|s)$.

We now discuss our algorithm more formally. We start by discussing the procedure for learning the safety critic $Q_C$, then discuss how we incorporate this in the policy gradient updates, and finally discuss how we perform safe exploration~\citep{saferlsurvey} during policy execution in the environment.

\noindent\textbf{Overall objective.} Our objective is to learn an optimal policy $\pi^*$ that maximizes task rewards, while respecting the constraint on expected probability of failures.
\begin{equation}\label{eq:cmdpobjective}
    \pi^* = \argmax_{\pi\in\Pi_C} ~~V_R^\pi(\mu)   \;\;\;\;\; \text{where~~~~} \Pi_C = \{\pi\in\Pi: V_C^\pi(\mu)\leq\chi\}
\end{equation}


\noindent\textbf{Learning the safety critic.}
The safety critic $Q_C$ is used to obtain an estimate of how unsafe a particular state is, by providing an estimate of \textit{probability of failure}, that will be used to guide exploration. We desire the estimates to be ``conservative'', in the sense that the probability of failure should be an \textit{over-estimate} of the actual probability so that the agent can err on the side of caution while exploring. To train such a critic $Q_C$, we incorporate tools from CQL to estimate $Q_C$ through updates similar to those obtained by reversing the sign of $\alpha$ in Equation 2 of CQL($\mathcal{H}$) ~\citep{cql}. This gives us an \textit{upper bound} on $Q_C$ instead of a lower bound, as ensured by CQL. We denote the over-estimated advantage corresponding to this safety critic as $\hat{A}_C$.

Formally the safety critic is trained via the following objective, where the objective inside $\argmin$ is called $CQL(\zeta)$, $\zeta$ parameterizes $Q_C$, and $k$ denotes the $k^{\text{th}}$ update iteration.
\begin{equation} \label{eq:cqloureqnmain}
\begin{split}
  \hat{Q}^{k+1}_C&\leftarrow\argmin_{Q_C}~~ \textcolor{red}{\alpha} \cdot \left(-\E_{s\sim\gD_{env}, a\sim\pi_{\phi}(a|s)}[Q_C(s,a)]+\E_{(s,a)\sim\gD_{env}}[Q_C(s,a)] \right)\\&+ \frac{1}{2}\E_{(s,a,s',c)\sim\gD_{env}}\left[\left(Q_C(s,a) - \hat{\gB}^{\pi_\phi}\hat{Q}_C^k(s,a)\right)^2\right]
    \end{split}
\end{equation}
Here, $\hat{\gB}^{\pi_\phi}$ is the empirical Bellman operator discussed in section 3.1 and equation 2 of \citet{cql}. $\alpha$ is a weight that varies the importance of the first term in equation 2, and controls the magnitude of value over-estimation, as we now highlight in red above. For states sampled from the replay buffer $\gD_{env}$, the first term seeks to maximize the expectation of $Q_C$ over actions sampled from the current policy, while the second term seeks to minimize the expectation of $Q_C$ over actions sampled from the replay buffer. $\gD_{env}$ can include off-policy data, and also offline-data (if available). We interleave the gradient descent updates for training of $Q_C$, with gradient ascent updates for policy $\pi_\phi$ and gradient descent updates for Lagrange multiplier $\lambda$, which we describe next.  

\noindent\textbf{Policy learning.}  Since we want to learn policies that obey the constraint we set in terms of the safety critic, we can solve the objective in equation~\ref{eq:cmdpobjective} via:
\begin{equation} \label{trpomod}
\begin{split}
    &\max_{\pi_\phi}~~ \mathbb{E}_{s\sim\rho_{\phi},a\sim\pi_{\phi}}\left[A_R^{\pi_{\phi}}(s,a)\right]  \;\;\;\;
      \text{s.t.}\;\;\;\;  \mathbb{E}_{s\sim\rho_{\phi},a\sim\pi_{\phi}}Q_C(s,a)\leq\chi
    \end{split}
\end{equation}  
We can construct a Lagrangian and solve the policy optimization problem through primal dual gradient descent
\begin{align} 
    &\max_{\pi_\phi}\min_{\lambda\geq 0} \mathbb{E}_{s\sim\rho_{\phi},a\sim\pi_{\phi}}\left[A_R^{\pi_{\phi}}(s,a) - \lambda\left( Q_C(s,a) - \chi\right)\right] \;\;\;\;\nonumber
              \end{align} 
We can apply vanilla policy gradients or some actor-critic style Q-function approximator for optimization. Here, $Q_C$ is the safety critic trained through CQL as described in equation~\ref{eq:cqloureqnmain}. We defer specific implementation details for policy learning to the final paragraph of this section.

\setlength{\textfloatsep}{0.7pt}
\begin{algorithm}
\small
\caption{CSC: safe exploration with conservative safety critics}
\label{alg:cscmainalgo}
\begin{algorithmic}[1]
    \State Initialize  $V^r_\theta$ (task value fn), $Q^s_\zeta$ (safety critic), policy $\pi_\phi$, $\lambda$,  $\mathcal{D}_{env}$, thresholds
    $\epsilon,\delta,\chi$. 
    \State Set $\hat{V}_C^{\pi_{\phi_{old}}}(\mu)\leftarrow \chi$. \Comment{$\hat{V}_C^{\pi_{\phi_{old}}}(\mu)$ denotes avg. failures in the \textit{previous} epoch.} 
\For{epochs until convergence}\Comment{Execute actions in the environment. Collect on-policy samples.}
     \For{episode $e$ in \{1, \dots, M\}}
    \State Set $\epsilon\leftarrow(1-\gamma)(\chi-\hat{V}_C^{\pi_{\phi_{old}}}(\mu))$ 
    \State Sample $a\sim\pi_{\phi_{old}}(s)$. Execute $a$ iff $Q_C(s, a)\leq \epsilon$. Else, resample $a$. 
    \State Obtain next state $s'$, $r=R(s,a)$, $c=C(s')$.
    \State  $\gD_{env}\leftarrow\gD_{env}\cup\{(s,a,s',r,c)\}$ \Comment{If available, $\gD_{env}$ can be seeded with off-policy/offline data}

\EndFor
    \State Store the average episodic failures  $\hat{V}_C^{\pi_{\phi_{old}}}(\mu)\leftarrow\sum_{e=1}^M\hat{V}_C^e$
        \For{step $t$ in \{1, \dots, N\}}\Comment{Policy and Q function updates using $\mathcal{D}_{env}$}
        \State Gradient ascent on $\phi$ and (Optionally) add Entropy regularization (Appendix~\ref{sec:pgupdatesappendix})
         \State Gradient updates for the Q-function \mbox{$\zeta := \zeta - \eta_Q \nabla_\zeta CQL(\zeta)$}
        \State Gradient descent step on Lagrange multiplier $\lambda$ (Appendix~\ref{sec:pgupdatesappendix})
    \EndFor
  \State  $\phi_{old}\leftarrow\phi$
    \EndFor
\end{algorithmic}
\end{algorithm}

\noindent\textbf{Executing rollouts (i.e., safe exploration).} Since we are interested in minimizing the number of constraint violations while exploring the environment, we do not simply execute the learned policy iterate in the environment for active data collection. Rather, we query the safety critic $Q_C$ to obtain an estimate of how unsafe an action is and choose an action that is safe via rejection sampling. Formally, we sample an action $a\sim\pi_{\phi_{old}}(s)$, and check if $Q_C(s,a)\leq\epsilon$.

We keep re-sampling actions $\pi_{\phi_{old}}(s)$ until this condition is met, and once met, we execute that action in the environment. In practice, we execute this loop for $100$ iterations, and choose the action $a$ among all actions in state $s$ for which $Q_C(s,a)\leq\epsilon$ and the value of $Q_C(s,a)$ is minimum. If no such action $a$ is found that maintains $Q_C(s,a)\leq\epsilon$, we just choose $a$ for which $Q_C(s,a)$ is minimum (although above the threshold).

Here, $\epsilon$ is a threshold that varies across iterations and is defined as $\epsilon = (1-\gamma)(\chi-\hat{V}_C^{\pi_{\phi_{old}}}(\mu))$ where, $\hat{V}_C^{\pi_{\phi_{old}}}(\mu)$ is the average episodic failures in the \textit{previous} epoch, denoting a sample estimate of the true $V_C^{\pi_{\phi_{old}}}(\mu)$. This value of $\epsilon$ is theoretically obtained such that Lemma~\ref{ref:lemma1} holds.

In the replay buffer $\gD_{env}$, we store tuples of the form $(s,a,s',r,c)$, where $s$ is the previous state, $a$ is the action executed, $s'$ is the next state, $r$ is the task reward from the environment, and $c=C(s')$, the constraint value. In our setting, $c$ is binary, with $0$ denoting a \textit{live} agent and $1$ denoting \textit{failure}.

\noindent\textbf{Overall algorithm.} 
Our overall algorithm, shown in Algorithm~\ref{alg:cscmainalgo}, executes policy rollouts in the environment by respecting the constraint $Q_C(s,a)\leq\epsilon$, stores the observed data tuples in the replay buffer $\gD_{env}$, and uses the collected tuples to train a safety value function $Q_C$ using equation~\ref{eq:cqloureqnmain}, update the policy and the dual variable $\lambda$ following the optimization objective in equation~\ref{eq:lagrangian1}.

\noindent\textbf{Implementation details.} Here, we discuss the specifics of the implementation for policy optimization. We consider the surrogate policy improvement problem~\cite{policyimprovement}:
\begin{equation} \label{trpomod}
\begin{split}
    &\max_{\pi_\phi}~~ \mathbb{E}_{s\sim\rho_{\phi_{old}},a\sim\pi_{\phi}}\left[A_R^{\pi_{\phi_{old}}}(s,a)\right]  \;\;\;\;
     \\& \text{s.t.}\;\;\;\; \mathbb{E}_{s\sim\rho_{\phi_{old}}}[D_{\text{KL}}(\pi_{\phi_{old}}(\cdot|s) || \pi_{\phi}(\cdot|s))] \leq \delta \;\;\;\;
   \text{and}\;\;\;\; V_C^{\pi_\phi}(\mu)\leq\chi
    \end{split}
\end{equation}  
Here, we have introduced a $D_{\text{KL}}$ constraint to ensure successive policies are \textit{close} in order to help obtain bounds on the expected failures of the new policy in terms of the expected failures of the old policy in Section~\ref{sec:theory}. 
We replace the $D_{\text{KL}}(\pi_{\phi_{old}}(\cdot|s) || \pi_{\phi}(\cdot|s))$ term by its second order Taylor expansion (expressed in terms of the Fisher Information Matrix $\mF$) and enforce the resulting constraint exactly~\citep{trpo}. Following equation~\ref{eq:trpomodifiedeq1appendix} (Appendix~\ref{sec:pgupdatesappendix}) we have,
\begin{align}
 &\max_{\pi_\phi}~~ \mathbb{E}_{s\sim\rho_{\phi_{old}},a\sim\pi_{\phi}}\left[A_R^{\pi_{\phi_{old}}}(s,a)\right]  \;\;\;\; \text{s.t.}\;\;\;\; V_C^{\pi_{\phi_{old}}}(\mu) + \frac{1}{1-\gamma}\mathbb{E}_{s\sim \rho_{\phi_{old}}, a\sim\pi_{\phi}}  \left[A_C(s,a)\right] \leq \chi\;\;\;\; \nonumber
      \\ & \text{s.t.}\;\;\;\; \mathbb{E}_{s\sim\rho_{\phi_{old}}}[D_{\text{KL}}(\pi_{\phi_{old}}(\cdot|s) || \pi_{\phi}(\cdot|s))] \leq \delta \label{eq:trpomodifiedeq1main}
\end{align}
We replace the true $A_C$ by the learned over-estimated $\hat{A}_C$, and consider the Lagrangian dual of this constrained problem, which we can solve by alternating gradient descent as shown below. 
\begin{align} 
    &\max_{\pi_\phi}\min_{\lambda\geq 0} \mathbb{E}_{s\sim\rho_{\phi_{old}},a\sim\pi_{\phi}}\left[A_R^{\pi_{\phi_{old}}}(s,a)\right]  - \lambda\left( V_C^{\pi_{\phi_{old}}}(\mu) + \frac{1}{1-\gamma}\mathbb{E}_{s\sim \rho_{\phi_{old}}, a\sim\pi_{\phi}}  \left[\hat{A}_C(s,a)\right] - \chi\right)\;\;\;\;\nonumber
     \\ & s.t.\;\;\;\; \frac{1}{2}(\phi-\phi_{old})^T\mF(\phi-\phi_{old})\leq \delta  \label{eq:lagrangian1}
          \end{align} 
Note that although we use FIM for the updates, we can also apply vanilla policy gradients or some actor-critic style Q-function approximator to optimize equation~\ref{eq:lagrangian1}. Detailed derivations of the gradient updates are in Appendix~\ref{sec:pgupdatesappendix}.

\section{Theoretical Analysis}
\label{sec:theory}
In this section, we aim to theoretically analyze our approach, showing that the expected probability of failures is bounded after each policy update throughout the learning process, while ensuring that the convergence rate to the optimal solution is only mildly bottlenecked by the additional safety constraint. 

Our main result, stated in Theorem~\ref{ref:theorem1}, bounds the expected 
probability of failure
of the policy that results from Equation~\ref{eq:trpomodifiedeq1main}. To prove this, we first state a Lemma that shows that the constraints in Equation~\ref{eq:trpomodifiedeq1main} are satisfied with high probability
during the policy updates. Detailed proofs of all the Lemmas and Theorems are in Appendix~\ref{sec:proofsappendix}.

\noindent\textbf{Notation.} Let $\epsilon_C=\max_s|\mathbb{E}_{a\sim\pi_{\phi_{new}}}A_C(s,a)|$ and $\Delta$ be the amount of overestimation in the expected advantage value generated from the safety critic, $\E_{s\sim \rho_{\phi_{old'}},a\sim\pi_{\phi_{old}}}[\hat{A}_C(s,a)]$ as per equation~\ref{eq:cqloureqnmain}, such that $\Delta = \E_{s\sim \rho_{\phi_{old'}},a\sim\pi_{\phi_{old}}}[\hat{A}_C(s,a)-A_C(s,a)] $. Let $\zeta$ denote the sampling error in the estimation of $V_C^{\pi_{\phi_{old}}}(\mu)$ by its sample estimate $\hat{V}_C^{\pi_{\phi_{old}}}(\mu)$ (i.e. $\zeta = |\hat{V}_C^{\pi_{\phi_{old}}}(\mu) -V_C^{\pi_{\phi_{old}}}(\mu)| $) and $N$ be the number of samples used in the estimation of $V_C$. \textcolor{black}{Let $\text{Reg}_C(T)$ be the total cumulative failures incurred by running Algorithm 1 until $T$ samples are collected from the environment.}
\textcolor{black}{We first show that when using Algorithm~\ref{alg:cscmainalgo}, we can upper bound the expectation probability of failure for each policy iterate $\pi_{\phi_{old}}$.} 
\begin{lemma}\label{ref:lemma1}
If we follow Algorithm~\ref{alg:cscmainalgo}, during policy updates via Equation~\ref{eq:trpomodifiedeq1main}, the following is satisfied with high probability $\geq 1-\omega$ 
\[V_C^{\pi_{\phi_{old}}}(\mu) +\frac{1}{1-\gamma}\mathbb{E}_{s\sim \rho_{\phi_{old}}, a\sim\pi_{\phi}}  \left[A_C(s,a)\right] \leq \chi + \zeta - \frac{\Delta}{1-\gamma}\]
Here, $\zeta$ captures sampling error in the estimation of $V_C^{\pi_{\phi_{old}}}(\mu)$ and we have $\zeta\leq\frac{C'\sqrt{\log(1/\omega)}}{|N|}$, where $C'$ is a constant independent of $\omega$ obtained from union bounds and concentration inequalities~\citep{cql} and $N$ is the number of samples used in the estimation of $V_C$.
\end{lemma}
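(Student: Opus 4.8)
The plan is to unwind the left-hand side by substituting in the two quantities that Algorithm~\ref{alg:cscmainalgo} actually controls --- the rejection-sampling threshold $\epsilon$ and the overestimated advantage $\hat{A}_C$ produced by the conservative critic --- and then absorb the two error sources, $\Delta$ (critic overestimation) and $\zeta$ (Monte-Carlo error in $\hat{V}_C^{\pi_{\phi_{old}}}(\mu)$), into the stated slack. The only probabilistic ingredient will be a concentration bound on $\zeta$; everything else is a deterministic chain of inequalities.

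First I would record the effect of the rollout loop: every action that is actually executed in a state $s$ satisfies $Q_C(s,a)\le\epsilon$, so the behavior policy (rejection-sampled $\pi_{\phi_{old}}$) is supported only on pairs with $\hat{Q}_C(s,a)\le\epsilon$, and therefore $\mathbb{E}_{s\sim\rho_{\phi_{old}},a\sim\pi_{\phi}}[\hat{Q}_C(s,a)]\le\epsilon$. Since $C$ is a $\{0,1\}$ indicator, $\hat{Q}_C$ and $\hat{V}_C^{\pi_{\phi_{old}}}$ are nonnegative, so $\hat{A}_C(s,a)=\hat{Q}_C(s,a)-\hat{V}_C^{\pi_{\phi_{old}}}(s)\le\hat{Q}_C(s,a)$ pointwise, which gives $\mathbb{E}[\hat{A}_C(s,a)]\le\epsilon$.

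Next I would chain three inequalities. Writing $A_C=\hat{A}_C-(\hat{A}_C-A_C)$ and using the definition of $\Delta$,
\[
V_C^{\pi_{\phi_{old}}}(\mu)+\tfrac{1}{1-\gamma}\mathbb{E}[A_C(s,a)]=V_C^{\pi_{\phi_{old}}}(\mu)+\tfrac{1}{1-\gamma}\mathbb{E}[\hat{A}_C(s,a)]-\tfrac{\Delta}{1-\gamma}.
\]
Bounding $\mathbb{E}[\hat{A}_C]\le\epsilon$ and then substituting $\epsilon=(1-\gamma)(\chi-\hat{V}_C^{\pi_{\phi_{old}}}(\mu))$ collapses the right-hand side to $V_C^{\pi_{\phi_{old}}}(\mu)-\hat{V}_C^{\pi_{\phi_{old}}}(\mu)+\chi-\tfrac{\Delta}{1-\gamma}$; bounding the first difference by $\zeta=|V_C^{\pi_{\phi_{old}}}(\mu)-\hat{V}_C^{\pi_{\phi_{old}}}(\mu)|$ then yields exactly $\chi+\zeta-\tfrac{\Delta}{1-\gamma}$.

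What remains --- and what I expect to be the only delicate part --- is the high-probability control of $\zeta$: $\hat{V}_C^{\pi_{\phi_{old}}}(\mu)$ is the empirical average of $N$ episodic failure counts, each lying in $[0,1]$ because an episode terminates at its first failure, so Hoeffding's inequality together with a union bound of the type used in \citet{cql} gives $\zeta\le C'\sqrt{\log(1/\omega)}/|N|$ with probability at least $1-\omega$, for a constant $C'$ independent of $\omega$. I would also be explicit about two bookkeeping points on which the chain rests: the relevant expectation is effectively taken under the executed (rejection-sampled) policy, which is what legitimizes $\mathbb{E}[\hat{Q}_C]\le\epsilon$, and the argument presumes the rejection step succeeds in returning an action below the threshold --- if it cannot, that event must be folded into the failure probability.
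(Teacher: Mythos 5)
Your proof is correct and follows essentially the same route as the paper's: both arguments start from the executed-action guarantee $Q_C(s,a)\le\epsilon$ from line 6 of Algorithm~\ref{alg:cscmainalgo}, pass to $\hat{A}_C\le\epsilon$ (you justify this via $\hat{A}_C=\hat{Q}_C-\hat{V}_C\le\hat{Q}_C$ and nonnegativity of $\hat{V}_C$, a step the paper asserts without comment), substitute $\epsilon=(1-\gamma)(\chi-\hat{V}_C^{\pi_{\phi_{old}}}(\mu))$, and then trade $\hat{A}_C$ for $A_C+\Delta$ and $\hat{V}_C$ for $V_C\pm\zeta$ with the same concentration bound on $\zeta$. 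Your two flagged bookkeeping caveats (the expectation being under the executed policy rather than $\pi_\phi$, and the possibility that rejection sampling fails to find an admissible action) are real and are glossed over by the paper as well, so they do not distinguish your argument from the original.
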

This lemma intuitively implies that the constraint on the safety critic in equation~\ref{eq:trpomodifiedeq1main} is satisfied with a high probability, when we note that the RHS can be made small as $N$ becomes large.

Lemma~\ref{ref:lemma1} had a bound in terms of $V_C^{\pi_{\phi_{old}}}(\mu)$ for the old policy $\pi_{\phi_{old}}$, but not for the updated policy $\pi_{\phi_{new}}$. We now show that the expected probability of failure for the policy $\pi_{\phi_{new}}$ resulting from solving equation~\ref{eq:trpomodifiedeq1main}, $ V_C^{\pi_{\phi_{new}}}(\mu)$ is bounded with a high probability. 

\begin{theorem}
\label{ref:theorem1}

Consider policy updates that solve the constrained optimization problem defined in Equation~\ref{eq:trpomodifiedeq1main}.
With high probability $\geq 1-\omega$, we have the following upper bound on expected probability of failure $ V_C^{\pi_{\phi_{new}}}(\mu)$ for $\pi_{\phi_{new}}$ during every policy update iteration:
\begin{equation}
    V_C^{\pi_{\phi_{new}}}(\mu)  \leq \chi + \zeta - \frac{\Delta}{1-\gamma} + \frac{\sqrt{2\delta}\gamma\epsilon_C}{(1-\gamma)^2} \;\;\;\;\;\;\text{where}\;\;\;\;\;\; \zeta\leq\frac{C'\sqrt{\log(1/\omega)}}{|N|}
\end{equation}

\end{theorem}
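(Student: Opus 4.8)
The plan is to combine Lemma~\ref{ref:lemma1}, which already controls the surrogate constraint evaluated at $\pi_{\phi_{old}}$ and $\pi_\phi$, with a standard trust-region-style bound that relates the true cost value function $V_C^{\pi_{\phi_{new}}}(\mu)$ of the updated policy to the surrogate quantity $V_C^{\pi_{\phi_{old}}}(\mu) + \frac{1}{1-\gamma}\E_{s\sim\rho_{\phi_{old}},a\sim\pi_\phi}[A_C(s,a)]$. Concretely, I would invoke the performance-difference / surrogate-advantage inequality (as in the CPO analysis of \citet{cpo} and \citet{trpo}): for any two policies $\pi_{\phi_{new}}$ and $\pi_{\phi_{old}}$,
\[
V_C^{\pi_{\phi_{new}}}(\mu) \;\le\; V_C^{\pi_{\phi_{old}}}(\mu) + \frac{1}{1-\gamma}\E_{s\sim\rho_{\phi_{old}},\,a\sim\pi_{\phi_{new}}}\!\left[A_C(s,a)\right] + \frac{\gamma\,\epsilon_C}{(1-\gamma)^2}\sqrt{2\,\E_{s\sim\rho_{\phi_{old}}}\!\left[D_{\mathrm{KL}}(\pi_{\phi_{new}}\|\pi_{\phi_{old}})(s)\right]},
\]
where $\epsilon_C = \max_s|\E_{a\sim\pi_{\phi_{new}}}A_C(s,a)|$ is exactly the notation introduced before the theorem. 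I would cite this as a known result rather than rederiving it, noting only that it follows from writing the cost-value gap as a telescoping sum of advantages over the trajectory distribution and then bounding the mismatch between $\rho_{\phi_{new}}$ and $\rho_{\phi_{old}}$ by the total-variation distance, which in turn is bounded by the square-root of the KL via Pinsker.

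The next step is to plug in the two constraints that the optimization in Equation~\ref{eq:trpomodifiedeq1main} enforces. First, the KL trust-region constraint gives $\E_{s\sim\rho_{\phi_{old}}}[D_{\mathrm{KL}}(\pi_{\phi_{old}}\|\pi_\phi)(s)]\le\delta$, so the penalty term above is at most $\frac{\sqrt{2\delta}\,\gamma\,\epsilon_C}{(1-\gamma)^2}$. (I would remark that the KL appears with its arguments in the opposite order in the bound versus the constraint; for the second-order Taylor approximation used in the algorithm the two are equal to leading order, and more carefully one can work with the symmetrized quantity or simply note this is the order at which the analysis operates, as is standard in the TRPO/CPO line of work.) Second, the surrogate cost constraint — combined with Lemma~\ref{ref:lemma1} — tells us that with probability at least $1-\omega$,
\[
V_C^{\pi_{\phi_{old}}}(\mu) + \frac{1}{1-\gamma}\E_{s\sim\rho_{\phi_{old}},\,a\sim\pi_\phi}\!\left[A_C(s,a)\right] \;\le\; \chi + \zeta - \frac{\Delta}{1-\gamma},
\]
with $\zeta\le C'\sqrt{\log(1/\omega)}/|N|$. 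Substituting this into the first display yields
\[
V_C^{\pi_{\phi_{new}}}(\mu) \;\le\; \chi + \zeta - \frac{\Delta}{1-\gamma} + \frac{\sqrt{2\delta}\,\gamma\,\epsilon_C}{(1-\gamma)^2},
\]
which is exactly the claimed bound, and the high-probability qualifier is inherited directly from Lemma~\ref{ref:lemma1} since the trust-region and performance-difference steps are deterministic.

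The main obstacle I anticipate is a bookkeeping subtlety rather than a deep one: making sure the advantage terms are evaluated under a consistent policy. Lemma~\ref{ref:lemma1} is phrased with $a\sim\pi_\phi$ (the optimization variable / new policy) and states sampled from $\rho_{\phi_{old}}$, and the surrogate used inside the optimization uses the \emph{learned overestimate} $\hat{A}_C$ rather than the true $A_C$; the passage from $\hat A_C$ to $A_C$ is what absorbs the $-\Delta/(1-\gamma)$ correction, so I must be careful to track which advantage appears where and to keep the $\Delta$ definition ($\Delta=\E[\hat A_C - A_C]$) pointing the right way so the sign works out. A secondary delicate point is the KL-order mismatch noted above; I would handle it by stating explicitly that the bound is obtained to the order at which the KL is approximated (the regime in which the algorithm operates), which is consistent with how Theorem~1 is stated. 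Everything else — the telescoping-advantage identity, Pinsker, and the union-bound / concentration estimate for $\zeta$ — is routine and can be cited from \citet{trpo,cpo,cql}.
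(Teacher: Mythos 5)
Your proposal follows essentially the same route as the paper's own proof: the CPO/TRPO performance-difference bound for the cost value function, Pinsker (plus Jensen) to convert total variation to the KL trust-region radius $\delta$, and then substitution of Lemma~\ref{ref:lemma1} to absorb the surrogate term into $\chi + \zeta - \frac{\Delta}{1-\gamma}$. The bookkeeping points you flag — the $\hat{A}_C$ versus $A_C$ passage carrying the $-\Delta/(1-\gamma)$ correction, and the KL argument-order mismatch handled at the order of the Taylor approximation — are exactly the (tacit) steps in the paper's argument, so the proposal is correct.
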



So far we have shown that, with high probability, we can satisfy the constraint in the objective during policy updates
(Lemma~\ref{ref:lemma1}) and obtain an upper bound on the expected probability of failure of the updated policy $\pi_{\phi_{new}}$ (Theorem~\ref{ref:theorem1}). \textcolor{black}{The key insight from Theorem~\ref{ref:theorem1} is that if we execute policy $\pi_{\phi_{new}}$ in the environment, the probability of failing is upper-bounded by a small number depending on the specified safety threshold $\chi$. Since the probability of failure is bounded, if we execute $\pi_{\phi_{new}}$ for multiple episodes, the total number of failures is bounded as well.}

We now bound the task performance in terms of policy return and show that incorporating and satisfying safety constraints during learning does not severely affect the convergence rate to the optimal solution for task performance. Theorem~\ref{th:theorem2} builds upon and relies on the assumptions in~\citep{alekh} and extends it to our constrained policy updates in equation~\ref{eq:trpomodifiedeq1main}.

\begin{theorem}[Convergence rate for policy gradient updates with the safety constraint]
\label{th:theorem2}
If we run the policy gradient updates through equation~\ref{eq:trpomodifiedeq1main}, for policy $\pi_\phi$,
 with $\mu$ as the starting state distribution, with
$\phi^{(0)}=0$, learning rate $\eta>0$, and choose $\alpha$ as mentioned in the discussion of Theorem~\ref{ref:theorem1}, then for all policy update iterations $T>0$ we have, with probability $\geq 1-\omega$,
\[
V_R^\ast(\mu) - V_R^{(T)}(\mu) \leq   \frac{\log |\mathcal{A}|}{\eta T}  +\frac{1}{(1-\gamma)^2T} + K\frac{\sum_{t=0}^{T-1}\lambda^{(t)}}{\eta T} \;\;\;\text{where} \;\;\; K \leq (1-\chi) + \frac{4\sqrt{2\delta}\gamma}{(1-\gamma)^2} 
\]
\end{theorem}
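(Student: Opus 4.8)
\textbf{Proof proposal for Theorem~\ref{th:theorem2}.}

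The plan is to import the policy-gradient convergence machinery of \citet{alekh} for the unconstrained softmax (or natural) policy gradient and then treat our Lagrangian update in Equation~\ref{eq:lagrangian1} as a perturbed version of their update, where the perturbation is precisely the gradient of the penalty term $-\lambda^{(t)}\big(V_C^{\pi_{\phi_{old}}}(\mu) + \tfrac{1}{1-\gamma}\E_{s\sim\rho_{\phi_{old}},a\sim\pi_\phi}[\hat A_C(s,a)] - \chi\big)$. Concretely, I would first recall the mirror-descent/regret decomposition that underlies their $O(1/T)$ rate: writing the softmax parameter update as an online linear optimization step, the standard analysis gives a bound of the form $V_R^\ast(\mu) - \tfrac{1}{T}\sum_{t=0}^{T-1} V_R^{(t)}(\mu) \le \tfrac{\log|\mathcal{A}|}{\eta T} + \tfrac{1}{(1-\gamma)^2 T}$, the first term being the Bregman/entropy diameter term and the second the smoothness term coming from the $\tfrac{1}{1-\gamma}$ scaling of advantages. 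Since we run gradient \emph{ascent} on the same objective plus a linear penalty, the per-step linearized objective picks up the extra term $\lambda^{(t)} \cdot \nabla_\phi(\text{penalty})$, and carrying this through the telescoping sum of the regret analysis contributes an additional term proportional to $\tfrac{1}{\eta T}\sum_{t=0}^{T-1}\lambda^{(t)}$ times a bound on the norm/magnitude of that penalty's contribution.

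The second step is to control the constant $K$ multiplying $\tfrac{\sum_t \lambda^{(t)}}{\eta T}$. Here I would bound the penalty term evaluated along the trajectory: $V_C^{\pi_{\phi_{old}}}(\mu) \le 1$ (it is a probability of failure), and more precisely it is $\le \chi$ by the constraint-satisfaction guarantee of Lemma~\ref{ref:lemma1} (up to the $\zeta$ and $\Delta$ corrections, which I would fold into the high-probability event), so the ``baseline'' part contributes at most $(1-\chi)$. The advantage-correction part $\tfrac{1}{1-\gamma}\E[\hat A_C(s,a)]$ is controlled using the same performance-difference/Pinsker argument used to prove Theorem~\ref{ref:theorem1}: the change in the constraint surrogate between successive policies is at most $\tfrac{\sqrt{2\delta}\gamma\epsilon_C}{(1-\gamma)^2}$ because of the $D_{\text{KL}}\le\delta$ trust-region constraint, and $\E[\hat A_C]$ differs from its overestimate-corrected value by $\Delta$. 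Summing the two pieces yields $K \le (1-\chi) + \tfrac{4\sqrt{2\delta}\gamma}{(1-\gamma)^2}$, where the factor $4$ absorbs the constants from applying Pinsker's inequality and the two-sided advantage bounds on both the old-policy and new-policy terms.

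The third step is bookkeeping: the theorem is stated for $V_R^{(T)}(\mu)$, not the average iterate, so I would invoke the monotonic-improvement (or best-iterate) property that \citet{alekh} establish for this family of updates — the surrogate objective is non-decreasing up to the penalty, so the final iterate is at least as good as the running average, converting the Cesàro bound into a last-iterate bound. Finally, all the approximation-error terms ($\zeta$ from finite-sample estimation of $V_C$, $\Delta$ from the conservative critic) live on the same high-probability event $\ge 1-\omega$ used in Lemma~\ref{ref:lemma1} and Theorem~\ref{ref:theorem1}, so a single union bound delivers the ``with probability $\ge 1-\omega$'' qualifier, and the choice of $\alpha$ referenced from Theorem~\ref{ref:theorem1} is what makes $\Delta$ a controlled (non-negative, bounded) quantity rather than an adversarial one.

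\textbf{Main obstacle.} The delicate part is the second step: cleanly propagating the Lagrange-multiplier-weighted penalty gradient through the \citet{alekh} regret analysis \emph{without} assuming $\lambda^{(t)}$ is bounded a priori — which is why the bound is stated in terms of the realized sum $\sum_{t=0}^{T-1}\lambda^{(t)}$ rather than a clean $O(1/T)$ — and simultaneously showing that this extra term does not destroy the smoothness/Lipschitz constants that the $O(1/T)$ rate relies on. One has to verify that adding the linear constraint penalty keeps the composite objective smooth with essentially the same constant (the penalty is linear in the occupancy measure, hence does not worsen smoothness), and that the trust-region constraint $\tfrac12(\phi-\phi_{old})^T\mF(\phi-\phi_{old})\le\delta$ is compatible with the step size $\eta$ assumed by the base analysis; reconciling the natural-gradient/Fisher geometry of our updates with the softmax-parameterization geometry of the cited result is where most of the technical care is needed.
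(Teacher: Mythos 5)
Your proposal takes essentially the same route as the paper's proof: it re-runs the \citet{alekh} NPG/mirror-descent analysis with the Lagrangian-modified advantage $A_R^{(t)}-\lambda^{(t)}A_C$, telescopes the KL-to-$\pi^\star$ terms to obtain $\frac{\log|\mathcal{A}|}{\eta T}+\frac{1}{(1-\gamma)^2T}$, isolates the extra term proportional to $\sum_{t}\lambda^{(t)}$, and converts to a last-iterate bound via monotonic improvement, all on the same high-probability event as Lemma~\ref{ref:lemma1}. The only (minor) divergence is in how $K$ is justified --- the paper bounds the $\pi^\star$-versus-$\pi^{(t)}$ constraint-advantage difference two-sidedly by $2(\chi+\zeta-\tfrac{\Delta}{1-\gamma})$ via Lemma~\ref{ref:lemma1} and then uses the lower bound on $\Delta$ enforced by the choice of $\alpha$ to reach $(1-\chi)+\tfrac{4\sqrt{2\delta}\gamma}{(1-\gamma)^2}$, whereas you attribute the second summand to a Pinsker/trust-region argument on successive iterates --- but both are constant-chasing steps arriving at the same expression.
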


Since the value of the dual variables $\lambda$ strictly decreases during gradient descent updates (Algorithm 1), $\sum_{t=0}^{T-1}\lambda^{(t)}$ is upper-bounded. So, we see that the additional term proportional to $K$ introduced in the convergence rate (compared to~\citep{alekh}) due to the safety constraint is upper bounded, and can be made small with a high probability by choosing $\alpha$ appropriately. In addition, we note that the safety threshold $\chi$ helps tradeoff the convergence rate by modifying the magnitude of $K$ (a low $\chi$ means a stricter safety threshold, and a higher value of $K$, implying a larger RHS and slower convergence). We discuss some practical considerations of the theoretical results in Appendix~\ref{sec:practical}.

\textcolor{black}{So far we have demonstrated that the resulting policy iterates from our algorithm all satisfy the desired safety constraint of the CMDP which allows for a maximum safety violation of $\chi$ for every intermediate policy. While this result ensures that the probability of failures is bounded, it does not elaborate on the total failures incurred by the algorithm. In our next result, we show that the cumulative failures until a certain number of samples $T$ of the algorithm grows sublinearly when executing Algorithm~\ref{alg:cscmainalgo}, provided the safety threshold $\chi$ is set in the right way.}


\begin{theorem}\color{black}[Number of cumulative safety failures grows sublinearly]\label{theorem:regret}
Let $\chi$ in Algorithm~\ref{alg:cscmainalgo} be time-dependent such that $\chi_t = \mathcal{O}(1/\sqrt{t})$. Then, the total number of cumulative safety violations until when $T$ transition samples have been collected by Algorithm~\ref{alg:cscmainalgo}, $\text{Reg}_C(T)$, scales sub-linearly with $T$, i.e., $\text{Reg}_C(T)$ =  $\gO(\sqrt{|\gS||\gA|T})$.
\end{theorem}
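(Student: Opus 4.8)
The plan is to express $\text{Reg}_C(T)$ as a sum of per-episode failure indicators, pass from those realized failures to their conditional expectations through a martingale concentration bound, control each conditional expectation with Theorem~\ref{ref:theorem1} instantiated at the time-varying threshold $\chi_t$, and then sum the per-episode bounds against the schedule $\chi_t = \mathcal{O}(1/\sqrt t)$.

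\emph{Step 1: from realized failures to expected failures.} Let $K \le T$ be the number of episodes completed within the first $T$ environment transitions and let $F_k \in \{0,1\}$ indicate a catastrophic failure in episode $k$; since $C$ is a binary terminal indicator, each episode contributes at most one failure and $\text{Reg}_C(T) = \sum_{k=1}^{K} F_k$. Conditioned on the history $\mathcal{H}_{k-1}$ up to the start of episode $k$, $\mathbb{E}[F_k \mid \mathcal{H}_{k-1}] = V_C^{\pi^{(k)}}(\mu)$, where $\pi^{(k)}$ is the rejection-sampled iterate deployed during episode $k$. Hence $M_k := \sum_{j \le k}\!\big(F_j - \mathbb{E}[F_j \mid \mathcal{H}_{j-1}]\big)$ is a martingale with increments in $[-1,1]$, and Azuma--Hoeffding gives, with probability at least $1-\omega'$,
\[
\text{Reg}_C(T) \;\le\; \sum_{k=1}^{K} V_C^{\pi^{(k)}}(\mu) \;+\; \mathcal{O}\!\left(\sqrt{T\log(1/\omega')}\right).
\]

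\emph{Step 2: per-episode bound from Theorem~\ref{ref:theorem1}.} Applying Theorem~\ref{ref:theorem1} in episode $k$ with threshold $\chi_k$, and union-bounding the high-probability event over the at most $T$ policy iterations, we obtain
\[
V_C^{\pi^{(k)}}(\mu) \;\le\; \chi_k + \zeta_k - \frac{\Delta_k}{1-\gamma} + \frac{\sqrt{2\delta_k}\,\gamma\,\epsilon_C}{(1-\gamma)^2}.
\]
The term $-\Delta_k/(1-\gamma)$ is nonpositive since the CQL-trained critic is conservative ($\Delta_k \ge 0$), so it only tightens the bound; the trust-region term is handled by shrinking $\delta_k = \mathcal{O}(1/k)$ so that $\sum_k \sqrt{\delta_k} = \mathcal{O}(\sqrt K)$; and $\zeta_k \le C'\sqrt{\log(1/\omega)}/|N_k|$ is $o(\sqrt K)$ once the per-iteration sample count $N_k$ grows at least linearly.

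\emph{Step 3: summation, and the main obstacle.} With $\chi_k = c/\sqrt k$ we have $\sum_{k=1}^{K}\chi_k \le 2c\sqrt K \le 2c\sqrt T$, so combining with Step~1, $\text{Reg}_C(T) = \mathcal{O}(\sqrt T)$ up to the estimation-error contributions; the dependence on the state--action cardinalities enters through those errors, since the per-step error of the safety-critic advantage estimate at a state--action pair visited $n$ times scales as $\tilde{\mathcal{O}}(1/\sqrt n)$ by the CQL concentration bounds \citep{cql}, and summing along the $T$ collected transitions with Cauchy--Schwarz, $\sum_{s,a}\sqrt{n_{s,a}} \le \sqrt{|\mathcal{S}||\mathcal{A}|\sum_{s,a}n_{s,a}} = \sqrt{|\mathcal{S}||\mathcal{A}|\,T}$, yields $\text{Reg}_C(T) = \mathcal{O}(\sqrt{|\mathcal{S}||\mathcal{A}|\,T})$. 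The summation itself is elementary; the delicate part is the simultaneous bookkeeping of the three estimation terms of Theorem~\ref{ref:theorem1} against the sample budget and the cardinalities $|\mathcal{S}|,|\mathcal{A}|$, keeping the union bound over a growing number of iterations polylogarithmic in $T$, and checking that the rejection-sampling rollout (with $\epsilon = (1-\gamma)(\chi - \hat{V}_C^{\pi_{\phi_{old}}}(\mu))$ as in Lemma~\ref{ref:lemma1}) leaves intact the identification of $\mathbb{E}[F_k\mid\mathcal{H}_{k-1}]$ with $V_C^{\pi^{(k)}}(\mu)$ that Step~1 relies on.
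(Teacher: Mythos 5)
Your proposal is correct and, at its core, follows the same route as the paper: instantiate the per-iterate bound of Theorem~\ref{ref:theorem1} with the time-varying threshold, integrate $\sum_t \chi_t = \mathcal{O}(\sqrt{T})$ for $\chi_t \propto 1/\sqrt{t}$, and obtain the $\sqrt{|\gS||\gA|}$ factor by aggregating the sampling errors over visitation counts via the pigeonhole bound $\sum_t n_t(s,a)/\sqrt{N_t(s,a)} = \mathcal{O}(\sqrt{N_k(s,a)})$ followed by Cauchy--Schwarz, $\sum_{s,a}\sqrt{N_k(s,a)} \le \sqrt{|\gS||\gA|\,T}$. You differ in two places, both defensible. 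First, you define $\text{Reg}_C(T)$ as the sum of \emph{realized} failure indicators and insert an Azuma--Hoeffding step to pass to the conditional expectations $V_C^{\pi^{(k)}}(\mu)$; the paper instead defines the regret directly as $\sum_t V_C^{\pi_t}(\mu)$, so no concentration step is needed there. Your version is the more honest reading of ``total number of cumulative safety violations,'' costs only an extra $\mathcal{O}(\sqrt{T\log(1/\omega')})$ that is absorbed into the stated rate, and you rightly flag (as the paper does not) that the executed policy is the rejection-sampled one, so the identification of $\mathbb{E}[F_k\mid\mathcal{H}_{k-1}]$ with a $V_C$ of a named policy needs care. Second, you neutralize the trust-region term $\sqrt{2\delta}\gamma\epsilon_C/(1-\gamma)^2$ by shrinking $\delta_k = \mathcal{O}(1/k)$, which modifies the algorithm beyond what the theorem statement assumes; the paper instead drops the combined term $\frac{\Delta}{1-\gamma} - \frac{\sqrt{2\delta}\gamma\epsilon_C}{(1-\gamma)^2}$ as nonnegative, which is justified by the earlier choice of $\alpha$ (equation~\ref{eq:alphabound}) forcing $\Delta > \frac{2\sqrt{2\delta}\gamma}{1-\gamma}$ together with $\epsilon_C \le 2$. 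If you want your proof to match the theorem as stated, replace your $\delta_k$ schedule with that appeal to the conservatism gap $\Delta$; otherwise your argument is a valid alternative that trades a condition on $\alpha$ for a decaying trust region.
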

\textcolor{black}{A proof is provided in Appendix~\ref{sec:proofsappendix}. Theorem~\ref{theorem:regret} is in many ways similar to a typical regret bound for exploration~\citep{russo2019worst,jaksch2010near}, though it measures the total number of safety violations. This means that training with Algorithm~\ref{alg:cscmainalgo} will converge to a ``safe'' policy that incurs no failures at a quick, $\mathcal{O}(\sqrt{T})$ rate.}

\section{Experiments}
\label{sec:experiments}
Through experiments on continuous control environments of varying complexity, we aim to empirically evaluate the agreement between empirical performance and theoretical guidance by understanding the following questions:

\begin{itemize}
\itemsep0em 
    \item How \textit{safe} is \algoName in terms of constraint satisfaction during training?
    \item How does learning of \textit{safe policies} trade-off with task performance during training?
\end{itemize}
\subsection{Experimental Setup}\vspace*{-0.2cm}
\noindent\textbf{Environments.} In each environment, shown in Figure~\ref{fig:simenvs}, we define a task objective that the agent must achieve and a criteria for \textit{catastrophic failure}. The goal is to solve the task without dying. In \textit{\textbf{point agent/car navigation avoiding traps}}, the agent must navigate a maze while avoiding traps. The agent has a health counter that decreases every timestep that it spends within a trap. When the counter hits $0$, the agent gets \textit{trapped} and \textit{dies}. In \textit{\textbf{Panda push without toppling}}, a 7-DoF Franka Emika Panda arm must push a vertically placed block across the table to a goal location without the block toppling over. \textit{Failure} is defined as when the block topples. In \textit{\textbf{Panda push within boundary}}, the Panda arm must be controlled to push a block across the table to a goal location without the block going outside a rectangular constraint region. \textit{Failure} occurs when the block center of mass ($(x,y)$ position) move outside the constraint region. In \textit{\textbf{Laikago walk without falling}}, an 18-DoF Laikago quadruped robot must walk without falling. The agent is rewarded for walking as fast as possible (or trotting) and \textit{failure} occurs when the robot falls. Since quadruped walking is an \textit{extremely} challenging task, for all the baselines, we initialize the agent's policy with a controller that has been trained to keep the agent standing, while not in motion.

\noindent\textbf{Baselines and comparisons.} We compare \algoName to three prior methods: constrained policy optimization (\textit{\textbf{CPO}})~\citep{cpo}, a standard unconstrained RL method~\citep{trpo} which we call \textbf{\textit{Base}} (comparison with SAC~\citep{sac} in Appendix Figure~\ref{fig:trposac}), \textcolor{black}{an algorithm similar to Base, called \textbf{\textit{BaseShaped}} that modifies the reward $R(s,a)$ as $R(s,a)-PC(s)$ where $P=10$ and $C(s)$ is $1$ when a failure occurs and is $0$ otherwise.} We also consider a method that extends Leave No Trace~\citep{lnt} to our setting, which we refer to as \textbf{\textit{Q ensembles}}. This last comparison is the most similar to our approach, in that it also implements a safety critic (adapted from LNT's backward critic), but instead of using our conservative updates, the safety critic uses an ensemble for epistemic uncertainty estimation, as proposed by~\citet{lnt}.

There are other safe RL approaches which we cannot compare against, as they make multiple additional assumptions, such as the availability of a function that can be queried to determine if a state is safe or not~\cite{saved}, availability of a default safe policy for the task~\cite{krause1,krause2}, and  prior knowledge of the location of unsafe states~\citep{jaimefisac}. In addition to the baselines (Figure~\ref{fig:sim_results}), we analyze variants of our algorithm with different safety thresholds through ablation studies (Figure~\ref{fig:tradeoff}). We also analyze \algoName and the baselines by seeding with a small amount of offline data in the Appendix~\ref{sec:seeding}.

\begin{figure}[t]
    \centering
    \includegraphics[width=0.95\textwidth]{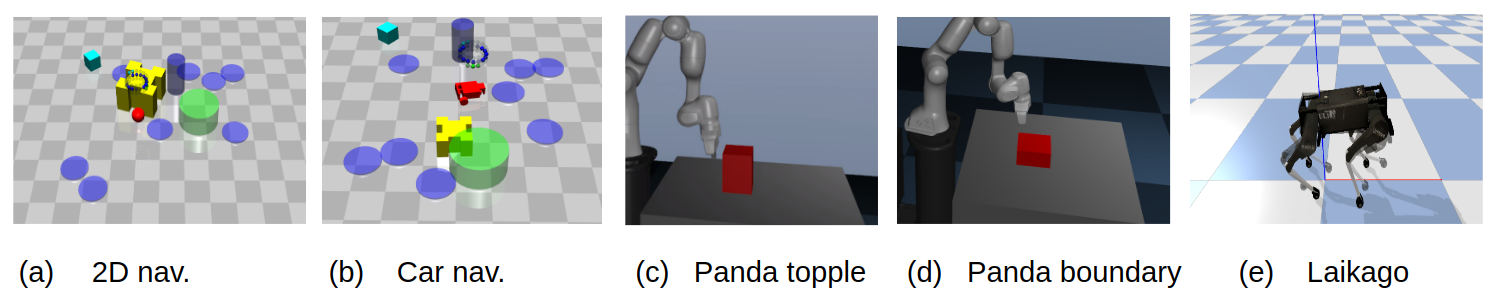}
        \vspace{-10pt}
    \caption{Illustrations of the five environments in our experiments: (a) 2D Point agent
    navigation avoiding traps. (b) Car navigation avoiding traps. (c) Panda push without toppling. (d) Panda push within boundary. (e) Laikago walk without falling. }
    \label{fig:simenvs}
\end{figure}

\subsection{Empirical Results}

\noindent\textbf{Comparable or better performance with significantly lower failures during training.} In Figure~\ref{fig:sim_results}, we observe that CSC has significantly lower average failures per episode, and hence lower cumulative failures during the entire training process. Although the failures are significantly lower for our method, task performance and convergence of average task rewards is comparable to or better than all prior methods, including the \emph{Base} method, corresponding to an unconstrained RL algorithm. While the \emph{CPO} and \emph{Q-ensembles} baselines also achieve near $0$ average failures eventually, we see that \algoName achieves this very early on during training.

\noindent\textbf{\algoName trades off performance with safety constraint satisfaction, based on the safety-threshold $\chi$.} In Figure~\ref{fig:tradeoff},
we plot variants of our method with different safety constraint thresholds $\chi$. Observe that: (a) when the threshold is set to a lower value (stricter constraint), the number of avg. failures per episode decreases in all the environments, and (b) the convergence rate of the task reward is lower when the safety threshold is stricter. These observations empirically complement our theoretical guarantees in Theorems~\ref{ref:theorem1} and~\ref{th:theorem2}. We note that there are quite a few failures even in the case where $\chi=0.0$, which is to be expected in practice because in the initial stages of training there is high function approximation error in the learned critic $Q_C$. However, we observe that the average episodic failures quickly drop below the specified threshold after about 500 episodes of training.

\begin{figure}
\hspace*{-0.4cm}
    \centering
    \includegraphics[width=0.95\textwidth]{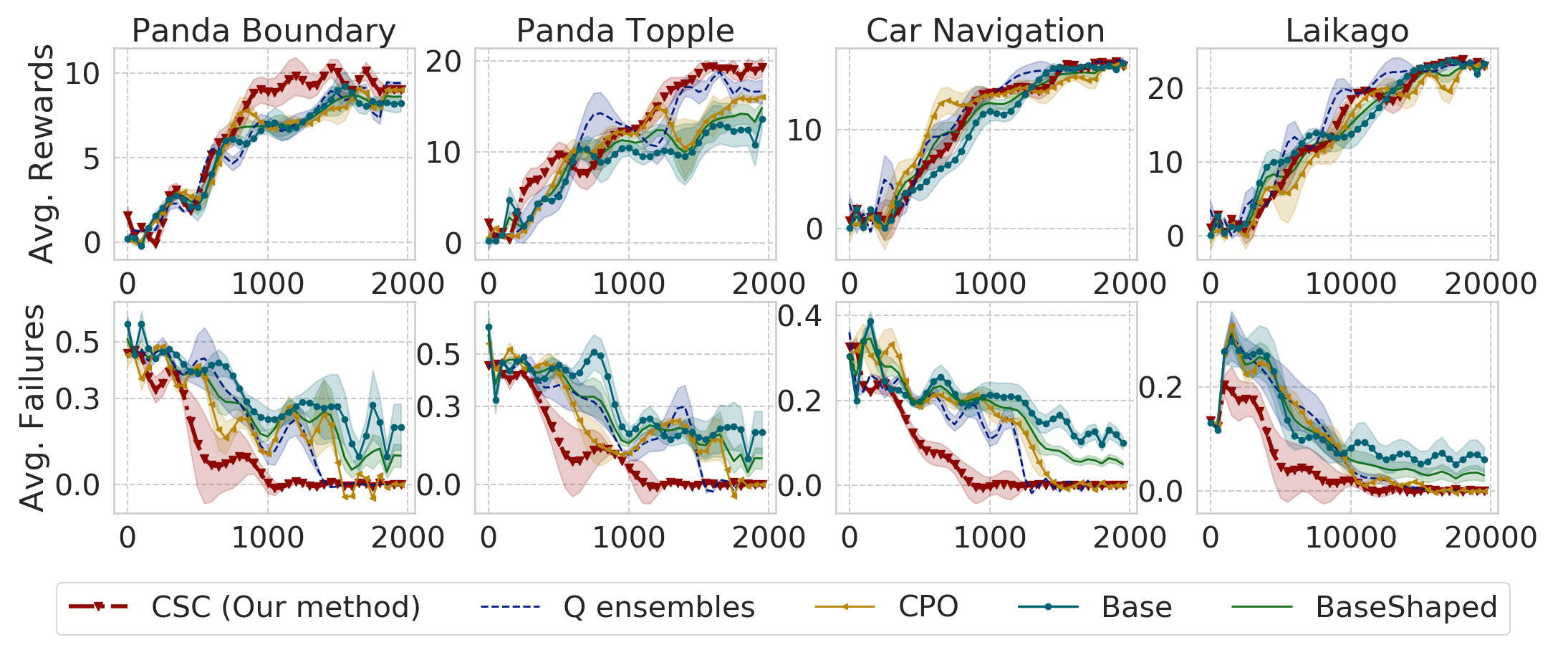}
    \caption{\textbf{Top row:} Average task rewards (higher is better). \textbf{Bottom row:} Average catastrophic failures (lower is better). \textbf{x-axis:} Number of episodes (each episode has 500 steps). Results on four of the five environments
       we consider for our experiments. For each environment, we plot the average task reward, the average episodic failures, and the cumulative episodic failures. The safety threshold is $\chi=0.03$ for all the baselines in all the environments. Results are over four random seeds. Detailed results including plots of cumulative failures are in Fig.~\ref{fig:sim_resultsappendix} of the Appendix.\vspace*{0.3cm}}
    \label{fig:sim_results}
\end{figure}

\vspace*{0.3cm}
\section{Related Work}
\vspace*{-0.3cm}

We discuss prior safe RL and safe control methods under three subheadings

\noindent\textbf{Assuming prior domain knowledge of the problem structure.} Prior works have attempted to solve safe exploration in the presence of structural assumptions about the environment or safety structures. For example, \citet{krause1,krause2} assume access to a safe set of environment states, and a default safe policy, while in~\cite{fisac2018general,dean2019safely}, knowledge of system dynamics is assumed and \citep{jaimefisac} assume access to a distance metric on the state space. SAVED~\citep{saved} learns a kernel density estimate over unsafe states, and assumes access to a set of user demonstrations and a user specified function that can be queried to determine whether a state is safe or not. In contrast to these approaches, our method does not assume any prior knowledge from the user, or domain knowledge of the problem setting, except a binary signal from the environment indicating when a catastrophic failure has occurred.

\noindent\textbf{Assuming a continuous safety cost function.} CPO~\citep{cpo}, and~\citep{chow2019lyapunov} assume a cost function can be queried from the environment at every time-step and the objective is to keep the cumulative costs within a certain limit. This assumption limits the generality of the method in scenarios where only minimal feedback, such as binary reward feedback is provided (additional details in section~\ref{sec:relationtocpo}).

\citep{stooke2020responsive} devise a general modification to the Lagrangian by incorporating two additional terms in the optimization of the dual variable. SAMBA~\citep{samba} has a learned GP dynamics model and a continuous constraint cost function that encodes safety. The objective is to minimize task cost function while maintaining the $\text{CVAR}_\alpha$ of cumulative costs below a threshold. In the work of~\cite{dalal2018safe,continuosu1,continuosu2,continuous3}, only the optimal policy is learned to be safe, and there are no safety constraint satisfactions during training. In contrast to these approaches, we assume only a binary signal from the environment indicating when a catastrophic failure has occurred. Instead of minimizing expected costs, our constraint formulation directly seeks to constrain the expected probability of failure.

\noindent\textbf{Safety through recoverability.} Prior works have attempted to devise resetting mechanisms to recover the policy to a base configuration from (near) a potentially unsafe state. LNT~\citep{lnt} trains both a forward policy for solving a task, and a reset goal-conditioned policy that kicks in when the agent is in an \textit{unsafe} state and learns an ensemble of critics, which is substantially more complex than our approach of a learned safety critic, which can give rise to a simple but provable safe exploration algorithm. Concurrently to us, SQRL~\citep{sqrl} developed an approach also using safety critics such that during the pre-training phase, the agent explores both safe and unsafe states in the environment for training the critic. 

In control theory, a number of prior works have focused on Hamilton-Jacobi-Isaacs (HJI) reachability analysis~\citep{hjreachabilitytutorial} for providing safety constraint satisfactions and obtaining control inputs for dynamical systems~\citep{sylviahjbinitialization,reachability,reachability2}. Our method does not require knowledge of the system dynamics or regularity conditions on the state-space, which are crucial for computing unsafe states using HJI reachability.
\begin{figure}
\hspace*{-1.0cm}
    \centering
    \includegraphics[width=1\textwidth]{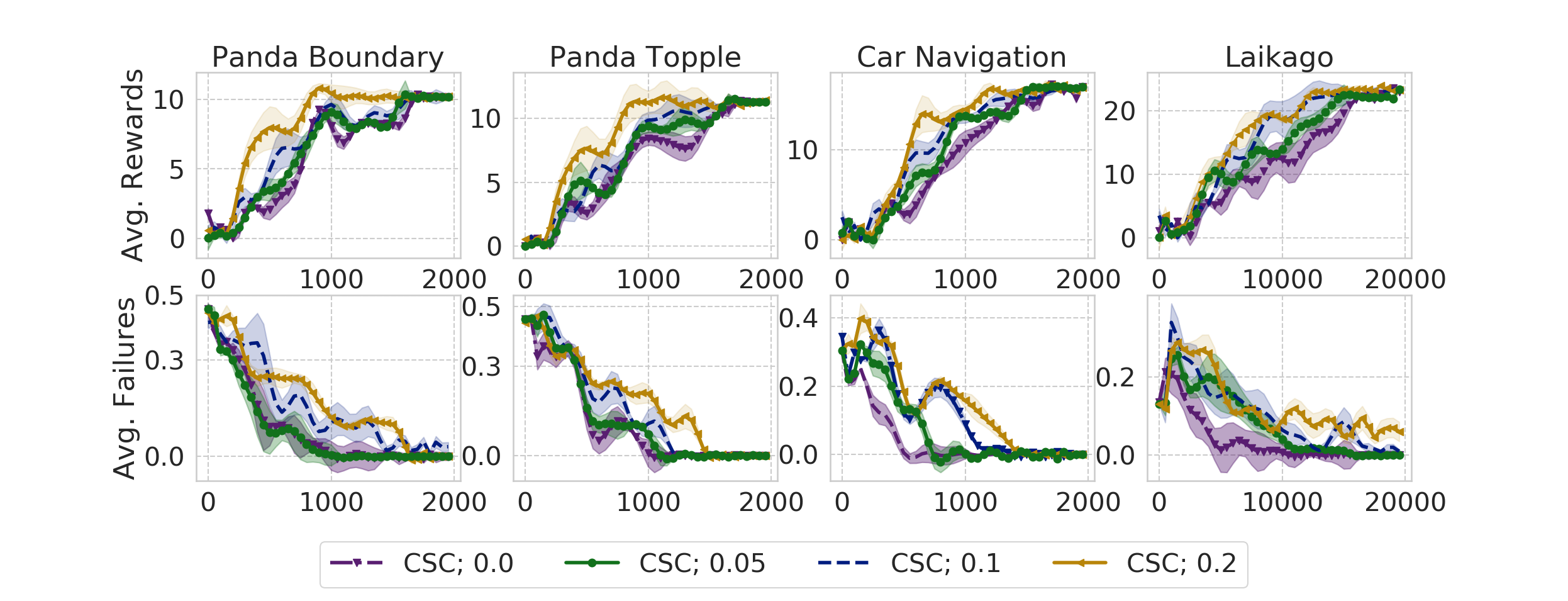}
    \caption{\textbf{Top row:} Average task rewards (higher is better). \textbf{Bottom row:} Average catastrophic failures (lower is better). \textbf{x-axis:} Number of episodes (each episode has 500 steps). Results on four of the five environments we consider for our experiments. For each environment we plot the average task reward, the average episodic failures, and the cumulative episodic failures. All the plots are for our method (CSC) with different safety thresholds $\chi$, specified in the legend. From the plots it is evident that our method can naturally trade-off safety for task performance depending on how strict the safety threshold is set to.
     Results are over four random seeds. Detailed results including plots of cumulative failures are in Fig.~\ref{fig:tradeoffappendix} of the Appendix.}
    \label{fig:tradeoff}
\end{figure}
\vspace*{-0.2cm}
\section{Discussion, Limitations, and Conclusion}
\vspace*{-0.2cm}
We introduced a safe exploration algorithm to learn a conservative safety critic that estimates the probability of failure for each candidate state-action tuple, and uses this to constrain policy evaluation and policy improvement. We provably demonstrated that the probability of failures is bounded throughout training and provided convergence results showing how ensuring safety does not severely bottleneck task performance. We empirically validated our theoretical results and showed that we achieve high task performance while incurring low accidents during training. 

While our theoretical results demonstrated that the probability of failures is bounded with a high probability, one limitation is that we still observe non-zero failures empirically even when the threshold $\chi$ is set to $0$. This is primarily because of neural network function approximation error in the early stages of training the safety critic, which we cannot account for precisely in the theoretical results, and also due to the fact that we bound the \textit{probability} of failures, which in practice means that the \textit{number} of failures is also bounded, but non-zero. We also showed that if we set the constraint threshold in an appropriate time-varying manner, training with CSC incurs cumulative failures that scales at most sub-linearly with the number of transition samples in the environment.

Although our approach bounds the probability of failure and is general in the sense that it does not assume access any user-specified constraint function, in situations where the task is difficult to solve, for example due to stability concerns of the agent, our approach will fail without additional assumptions. In such situations, some interesting future work directions would be to develop a curriculum of tasks to start with simple tasks where safety is easier to achieve, and gradually move towards more difficult tasks, such that the learned knowledge from previous tasks is not forgotten.
\vspace*{-0.2cm}
\subsection*{Acknowledgement}\vspace*{-0.2cm}
We thank Vector Institute, Toronto and the Department of Computer Science, University of Toronto
for compute support. We thank Glen Berseth and Kevin Xie for helpful initial discussions about the
project, Alexandra Volokhova, Arthur Allshire, Mayank Mittal, Samarth Sinha, and Irene Zhang for
feedback on the paper, and other members of the UofT CS Robotics Group for insightful discussions
during internal presentations and reading group sessions. Finally, we are grateful to the anonymous ICLR 2021 reviewers for their feedback in helping improve the paper.

\bibliography{iclr2021_conference}

\begin{thebibliography}{38}
\providecommand{\natexlab}[1]{#1}
\providecommand{\url}[1]{\texttt{#1}}
\expandafter\ifx\csname urlstyle\endcsname\relax
  \providecommand{\doi}[1]{doi: #1}\else
  \providecommand{\doi}{doi: \begingroup \urlstyle{rm}\Url}\fi

\bibitem[Achiam et~al.(2017)Achiam, Held, Tamar, and Abbeel]{cpo}
Joshua Achiam, David Held, Aviv Tamar, and Pieter Abbeel.
\newblock Constrained policy optimization.
\newblock \emph{arXiv preprint arXiv:1705.10528}, 2017.

\bibitem[Agarwal et~al.(2019)Agarwal, Kakade, Lee, and Mahajan]{alekh}
Alekh Agarwal, Sham~M Kakade, Jason~D Lee, and Gaurav Mahajan.
\newblock Optimality and approximation with policy gradient methods in markov
  decision processes.
\newblock \emph{arXiv preprint arXiv:1908.00261}, 2019.

\bibitem[Altman(1999)]{cmdp}
Eitan Altman.
\newblock \emph{Constrained Markov decision processes}, volume~7.
\newblock CRC Press, 1999.

\bibitem[Bajcsy et~al.(2019)Bajcsy, Bansal, Bronstein, Tolani, and
  Tomlin]{reachability}
Andrea Bajcsy, Somil Bansal, Eli Bronstein, Varun Tolani, and Claire~J Tomlin.
\newblock An efficient reachability-based framework for provably safe
  autonomous navigation in unknown environments.
\newblock In \emph{2019 IEEE 58th Conference on Decision and Control (CDC)},
  pp.\  1758--1765. IEEE, 2019.

\bibitem[Bansal et~al.(2017)Bansal, Chen, Herbert, and
  Tomlin]{hjreachabilitytutorial}
Somil Bansal, Mo~Chen, Sylvia Herbert, and Claire~J Tomlin.
\newblock Hamilton-jacobi reachability: A brief overview and recent advances.
\newblock In \emph{2017 IEEE 56th Annual Conference on Decision and Control
  (CDC)}, pp.\  2242--2253. IEEE, 2017.

\bibitem[Berkenkamp et~al.(2017)Berkenkamp, Turchetta, Schoellig, and
  Krause]{krause2}
Felix Berkenkamp, Matteo Turchetta, Angela Schoellig, and Andreas Krause.
\newblock Safe model-based reinforcement learning with stability guarantees.
\newblock In \emph{Advances in neural information processing systems}, pp.\
  908--918, 2017.

\bibitem[Chow et~al.(2019)Chow, Nachum, Faust, Duenez-Guzman, and
  Ghavamzadeh]{chow2019lyapunov}
Yinlam Chow, Ofir Nachum, Aleksandra Faust, Edgar Duenez-Guzman, and Mohammad
  Ghavamzadeh.
\newblock Lyapunov-based safe policy optimization for continuous control.
\newblock \emph{arXiv preprint arXiv:1901.10031}, 2019.

\bibitem[Cowen-Rivers et~al.(2020)Cowen-Rivers, Palenicek, Moens, Abdullah,
  Sootla, Wang, and Ammar]{samba}
Alexander~I Cowen-Rivers, Daniel Palenicek, Vincent Moens, Mohammed Abdullah,
  Aivar Sootla, Jun Wang, and Haitham Ammar.
\newblock Samba: Safe model-based \& active reinforcement learning.
\newblock \emph{arXiv preprint arXiv:2006.09436}, 2020.

\bibitem[Dalal et~al.(2018)Dalal, Dvijotham, Vecerik, Hester, Paduraru, and
  Tassa]{dalal2018safe}
Gal Dalal, Krishnamurthy Dvijotham, Matej Vecerik, Todd Hester, Cosmin
  Paduraru, and Yuval Tassa.
\newblock Safe exploration in continuous action spaces.
\newblock \emph{arXiv preprint arXiv:1801.08757}, 2018.

\bibitem[Dean et~al.(2019)Dean, Tu, Matni, and Recht]{dean2019safely}
Sarah Dean, Stephen Tu, Nikolai Matni, and Benjamin Recht.
\newblock Safely learning to control the constrained linear quadratic
  regulator.
\newblock In \emph{2019 American Control Conference (ACC)}, pp.\  5582--5588.
  IEEE, 2019.

\bibitem[Eysenbach et~al.(2017)Eysenbach, Gu, Ibarz, and Levine]{lnt}
Benjamin Eysenbach, Shixiang Gu, Julian Ibarz, and Sergey Levine.
\newblock Leave no trace: Learning to reset for safe and autonomous
  reinforcement learning.
\newblock \emph{arXiv preprint arXiv:1711.06782}, 2017.

\bibitem[Fisac et~al.(2018)Fisac, Akametalu, Zeilinger, Kaynama, Gillula, and
  Tomlin]{fisac2018general}
Jaime~F Fisac, Anayo~K Akametalu, Melanie~N Zeilinger, Shahab Kaynama, Jeremy
  Gillula, and Claire~J Tomlin.
\newblock A general safety framework for learning-based control in uncertain
  robotic systems.
\newblock \emph{IEEE Transactions on Automatic Control}, 64\penalty0
  (7):\penalty0 2737--2752, 2018.

\bibitem[Fisac et~al.(2019)Fisac, Lugovoy, Rubies-Royo, Ghosh, and
  Tomlin]{jaimefisac}
Jaime~F Fisac, Neil~F Lugovoy, Vicen{\c{c}} Rubies-Royo, Shromona Ghosh, and
  Claire~J Tomlin.
\newblock Bridging hamilton-jacobi safety analysis and reinforcement learning.
\newblock In \emph{2019 International Conference on Robotics and Automation
  (ICRA)}, pp.\  8550--8556. IEEE, 2019.

\bibitem[Garc{\i}a \& Fern{\'a}ndez(2015)Garc{\i}a and
  Fern{\'a}ndez]{saferlsurvey}
Javier Garc{\i}a and Fernando Fern{\'a}ndez.
\newblock A comprehensive survey on safe reinforcement learning.
\newblock \emph{Journal of Machine Learning Research}, 16\penalty0
  (1):\penalty0 1437--1480, 2015.

\bibitem[Grbic \& Risi(2020)Grbic and Risi]{continuous3}
Djordje Grbic and Sebastian Risi.
\newblock Safe reinforcement learning through meta-learned instincts.
\newblock \emph{arXiv preprint arXiv:2005.03233}, 2020.

\bibitem[Haarnoja et~al.(2018)Haarnoja, Zhou, Abbeel, and Levine]{sac}
Tuomas Haarnoja, Aurick Zhou, Pieter Abbeel, and Sergey Levine.
\newblock Soft actor-critic: Off-policy maximum entropy deep reinforcement
  learning with a stochastic actor.
\newblock \emph{arXiv preprint arXiv:1801.01290}, 2018.

\bibitem[Herbert et~al.(2019)Herbert, Bansal, Ghosh, and
  Tomlin]{sylviahjbinitialization}
Sylvia~L Herbert, Somil Bansal, Shromona Ghosh, and Claire~J Tomlin.
\newblock Reachability-based safety guarantees using efficient initializations.
\newblock In \emph{2019 IEEE 58th Conference on Decision and Control (CDC)},
  pp.\  4810--4816. IEEE, 2019.

\bibitem[Jaksch et~al.(2010)Jaksch, Ortner, and Auer]{jaksch2010near}
Thomas Jaksch, Ronald Ortner, and Peter Auer.
\newblock Near-optimal regret bounds for reinforcement learning.
\newblock \emph{Journal of Machine Learning Research}, 11\penalty0 (4), 2010.

\bibitem[Kakade \& Langford(2002)Kakade and Langford]{kakadeandlangford}
Sham Kakade and John Langford.
\newblock Approximately optimal approximate reinforcement learning.
\newblock In \emph{ICML}, volume~2, pp.\  267--274, 2002.

\bibitem[Koller et~al.(2018)Koller, Berkenkamp, Turchetta, and Krause]{krause1}
Torsten Koller, Felix Berkenkamp, Matteo Turchetta, and Andreas Krause.
\newblock Learning-based model predictive control for safe exploration.
\newblock In \emph{2018 IEEE Conference on Decision and Control (CDC)}, pp.\
  6059--6066. IEEE, 2018.

\bibitem[Kumar et~al.(2020)Kumar, Zhou, Tucker, and Levine]{cql}
Aviral Kumar, Aurick Zhou, George Tucker, and Sergey Levine.
\newblock Conservative q-learning for offline reinforcement learning.
\newblock \emph{arXiv preprint arXiv:2006.04779}, 2020.

\bibitem[Lange et~al.(2012)Lange, Gabel, and Riedmiller]{lange2012batch}
Sascha Lange, Thomas Gabel, and Martin Riedmiller.
\newblock Batch reinforcement learning.
\newblock In \emph{Reinforcement learning}, pp.\  45--73. Springer, 2012.

\bibitem[Leung et~al.(2018)Leung, Schmerling, Chen, Talbot, Gerdes, and
  Pavone]{reachability2}
Karen Leung, Edward Schmerling, Mo~Chen, John Talbot, J~Christian Gerdes, and
  Marco Pavone.
\newblock On infusing reachability-based safety assurance within probabilistic
  planning frameworks for human-robot vehicle interactions.
\newblock In \emph{International Symposium on Experimental Robotics}, pp.\
  561--574. Springer, 2018.

\bibitem[Levine(2018)]{deeprlcourse}
Sergey Levine.
\newblock Deep reinforcement learning course, 2018.
\newblock URL
  \url{http://rail.eecs.berkeley.edu/deeprlcourse-fa18/static/slides/}.

\bibitem[Levine et~al.(2020)Levine, Kumar, Tucker, and Fu]{levine2020offline}
Sergey Levine, Aviral Kumar, George Tucker, and Justin Fu.
\newblock Offline reinforcement learning: Tutorial, review, and perspectives on
  open problems.
\newblock \emph{arXiv preprint arXiv:2005.01643}, 2020.

\bibitem[Paternain et~al.(2019{\natexlab{a}})Paternain, Calvo-Fullana, Chamon,
  and Ribeiro]{continuosu2}
Santiago Paternain, Miguel Calvo-Fullana, Luiz~FO Chamon, and Alejandro
  Ribeiro.
\newblock Safe policies for reinforcement learning via primal-dual methods.
\newblock \emph{arXiv preprint arXiv:1911.09101}, 2019{\natexlab{a}}.

\bibitem[Paternain et~al.(2019{\natexlab{b}})Paternain, Chamon, Calvo-Fullana,
  and Ribeiro]{continuosu1}
Santiago Paternain, Luiz Chamon, Miguel Calvo-Fullana, and Alejandro Ribeiro.
\newblock Constrained reinforcement learning has zero duality gap.
\newblock In \emph{Advances in Neural Information Processing Systems}, pp.\
  7555--7565, 2019{\natexlab{b}}.

\bibitem[Peng et~al.(2020)Peng, Coumans, Zhang, Lee, Tan, and
  Levine]{peng2020learning}
Xue~Bin Peng, Erwin Coumans, Tingnan Zhang, Tsang-Wei Lee, Jie Tan, and Sergey
  Levine.
\newblock Learning agile robotic locomotion skills by imitating animals.
\newblock \emph{arXiv preprint arXiv:2004.00784}, 2020.

\bibitem[Ray et~al.(2019)Ray, Achiam, and Amodei]{ray2019benchmarking}
Alex Ray, Joshua Achiam, and Dario Amodei.
\newblock Benchmarking safe exploration in deep reinforcement learning.
\newblock \emph{arXiv preprint arXiv:1910.01708}, 2019.

\bibitem[Russo(2019)]{russo2019worst}
Daniel Russo.
\newblock Worst-case regret bounds for exploration via randomized value
  functions.
\newblock In \emph{Advances in Neural Information Processing Systems}, pp.\
  14433--14443, 2019.

\bibitem[Schulman et~al.(2015{\natexlab{a}})Schulman, Levine, Abbeel, Jordan,
  and Moritz]{trpo}
John Schulman, Sergey Levine, Pieter Abbeel, Michael Jordan, and Philipp
  Moritz.
\newblock Trust region policy optimization.
\newblock In \emph{International conference on machine learning}, pp.\
  1889--1897, 2015{\natexlab{a}}.

\bibitem[Schulman et~al.(2015{\natexlab{b}})Schulman, Moritz, Levine, Jordan,
  and Abbeel]{gae}
John Schulman, Philipp Moritz, Sergey Levine, Michael Jordan, and Pieter
  Abbeel.
\newblock High-dimensional continuous control using generalized advantage
  estimation.
\newblock \emph{arXiv preprint arXiv:1506.02438}, 2015{\natexlab{b}}.

\bibitem[Srinivasan et~al.(2020)Srinivasan, Eysenbach, Ha, Tan, and Finn]{sqrl}
Krishnan Srinivasan, Benjamin Eysenbach, Sehoon Ha, Jie Tan, and Chelsea Finn.
\newblock Learning to be safe: Deep rl with a safety critic.
\newblock \emph{arXiv preprint arXiv:2010.14603}, 2020.

\bibitem[Stooke et~al.(2020)Stooke, Achiam, and Abbeel]{stooke2020responsive}
Adam Stooke, Joshua Achiam, and Pieter Abbeel.
\newblock Responsive safety in reinforcement learning by pid lagrangian
  methods.
\newblock \emph{arXiv preprint arXiv:2007.03964}, 2020.

\bibitem[Sutton(2020)]{policyimprovement}
Rich Sutton.
\newblock Reinforcement learning book, 2020.
\newblock URL \url{http://incompleteideas.net/book/first/ebook/node42.html}.

\bibitem[Tessler et~al.(2018)Tessler, Mankowitz, and Mannor]{tessler2018reward}
Chen Tessler, Daniel~J Mankowitz, and Shie Mannor.
\newblock Reward constrained policy optimization.
\newblock \emph{arXiv preprint arXiv:1805.11074}, 2018.

\bibitem[Thananjeyan et~al.(2020)Thananjeyan, Balakrishna, Rosolia, Li,
  McAllister, Gonzalez, Levine, Borrelli, and Goldberg]{saved}
Brijen Thananjeyan, Ashwin Balakrishna, Ugo Rosolia, Felix Li, Rowan
  McAllister, Joseph~E Gonzalez, Sergey Levine, Francesco Borrelli, and Ken
  Goldberg.
\newblock Safety augmented value estimation from demonstrations (saved): Safe
  deep model-based rl for sparse cost robotic tasks.
\newblock \emph{IEEE Robotics and Automation Letters}, 5\penalty0 (2):\penalty0
  3612--3619, 2020.

\bibitem[Zhu et~al.(2020)Zhu, Wong, Mandlekar, and
  Mart\'{i}n-Mart\'{i}n]{robosuite}
Yuke Zhu, Josiah Wong, Ajay Mandlekar, and Roberto Mart\'{i}n-Mart\'{i}n.
\newblock robosuite: A modular simulation framework and benchmark for robot
  learning.
\newblock In \emph{arXiv preprint arXiv:2009.12293}, 2020.

\end{thebibliography}
\bibliographystyle{iclr2021_conference}

\newpage
\appendix
\section{Appendix}

\subsection{Proofs of all theorems and lemmas}
\label{sec:proofsappendix}

\noindent\textbf{Note.} During policy updates via Equation~\ref{eq:trpomodifiedeq1main}, the $D_{\text{KL}}$ constraint is satisfied with high probability if we follow Algorithm 1.

Following the steps in the Appendix~\ref{sec:pgupdatesappendix}, we can write the \textbf{gradient ascent step for $\phi$} as 
\begin{equation} \label{eq:policygradmodifiedmainbacktrack}
\begin{split}
\phi \leftarrow   \phi_{old} + \beta\mF^{-1}\nabla_{\phi_{old}}\tilde{J}(\phi_{old})\;\;\;\; \beta = \beta^j\sqrt{\frac{2\delta}{\nabla_{\phi_{old}}\tilde{J}(\phi_{old})^T\mF\nabla_{\phi_{old}}\tilde{J}(\phi_{old})}}
       \end{split}
\end{equation}
 $\mF$ can be estimated with samples as 
\begin{equation} \label{FIM}
\begin{split}
\mF= \mathbb{E}_{s\sim\rho_{\phi_{old}}}\left[ \mathbb{E}_{a\sim\pi_{\phi_{old}}}\left[\nabla_{\phi_{old}}\log\pi_{\phi_{old}}(\nabla_{\phi_{old}}\log\pi_{\phi_{old}})^T\right]\right]
       \end{split}
\end{equation}
Here $\beta^j$ is the backtracking coefficient and we perform backtracking line search with exponential decay. $\nabla_{\phi_{old}}\tilde{J}(\phi_{old})$ is calculated as, 
\begin{equation} \label{policygrad2modified}
\begin{split}
\nabla_{\phi_{old}}\tilde{J}(\phi_{old}) = \mathbb{E}_{s\sim\rho_{\phi_{old}},a\sim\pi_{\phi_{old}}}\left[\nabla_{\phi_{old}}\log\pi_{\phi_{old}}(a|s)\tilde{A}_R^{\pi_{\phi_{old}}}\right]
       \end{split}
\end{equation}

After every update, we check if $\Bar{D}_{\text{KL}}(\phi||\phi_{old})\leq\delta$, and if not we decay $\beta^j=\beta^j(1-\beta^j)^j$, set $j\leftarrow j+1$ and repeat for $L$ steps until $\Bar{D}_{\text{KL}}\leq\delta$ is satisfied. If this is not satisfied after $L$ steps, we backtrack, and do not update $\phi$ i.e. set $\phi\leftarrow\phi_{old}$.

\begin{innercustomlemma}
If we follow Algorithm~\ref{alg:cscmainalgo}, during policy updates via equation~\ref{eq:trpomodifiedeq1main}, the following is satisfied with high probability $\geq 1-\omega$ 
\[V_C^{\pi_{\phi_{old}}}(\mu) +\frac{1}{1-\gamma}\mathbb{E}_{s\sim \rho_{\phi_{old}}, a\sim\pi_{\phi}}  \left[A_C(s,a)\right] \leq \chi + \zeta - \frac{\Delta}{1-\gamma}\]
Here, $\zeta$ captures sampling error in the estimation of $V_C^{\pi_{\phi_{old}}}(\mu)$ and we have $\zeta\leq\frac{C\sqrt{\log(1/\omega)}}{|N|}$, where $C$ is a constant and $N$ is the number of samples used in the estimation of $V_C$.
\end{innercustomlemma}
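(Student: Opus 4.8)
\textbf{Proof proposal for Lemma~\ref{ref:lemma1}.}

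The plan is to unwind the definition of the threshold $\epsilon$ that Algorithm~\ref{alg:cscmainalgo} uses during rollouts, and connect it to the left-hand side of the claimed inequality through the advantage-decomposition identity for $V_C$. First I would recall the standard performance-difference/advantage identity: for any two policies, $V_C^{\pi_{\phi}}(\mu) = V_C^{\pi_{\phi_{old}}}(\mu) + \frac{1}{1-\gamma}\E_{s\sim\rho_{\phi},a\sim\pi_{\phi}}[A_C(s,a)]$, and note that the left-hand side of the lemma is a ``surrogate'' version of this with the state distribution $\rho_{\phi}$ replaced by $\rho_{\phi_{old}}$ (the usual TRPO-style surrogate for $V_C^{\pi_{\phi}}(\mu)$). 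So the quantity we must bound is exactly the surrogate estimate of $V_C^{\pi_{\phi}}(\mu)$ appearing in Equation~\ref{eq:trpomodifiedeq1main}.

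The core step is the rejection-sampling argument. During data collection, every executed action $a$ in state $s$ satisfies $Q_C(s,a) \le \epsilon$, hence $\hat{A}_C(s,a) = Q_C(s,a) - V_C(s) \le \epsilon$ pointwise (using that the learned safety critic's value baseline is consistent with $Q_C$), so in particular $\E_{s\sim\rho_{\phi_{old}},a\sim\pi_{\phi}}[\hat{A}_C(s,a)] \le \epsilon$ where $\pi_{\phi}$ here is the rejection-filtered behaviour policy. Plugging in $\epsilon = (1-\gamma)(\chi - \hat{V}_C^{\pi_{\phi_{old}}}(\mu))$ gives $\frac{1}{1-\gamma}\E[\hat{A}_C(s,a)] \le \chi - \hat{V}_C^{\pi_{\phi_{old}}}(\mu)$, i.e. $\hat{V}_C^{\pi_{\phi_{old}}}(\mu) + \frac{1}{1-\gamma}\E[\hat{A}_C(s,a)] \le \chi$. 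Now I would substitute the true quantities for the estimated ones: write $\hat{V}_C^{\pi_{\phi_{old}}}(\mu) \ge V_C^{\pi_{\phi_{old}}}(\mu) - \zeta$ by definition of the sampling error $\zeta$, and $\hat{A}_C(s,a) \ge A_C(s,a) + \Delta$ (recall $\Delta$ is defined so that $\E[\hat{A}_C - A_C] = \Delta$ and $\hat A_C$ over-estimates, hence $\Delta \ge 0$), which after rearranging yields $V_C^{\pi_{\phi_{old}}}(\mu) + \frac{1}{1-\gamma}\E_{s\sim\rho_{\phi_{old}},a\sim\pi_{\phi}}[A_C(s,a)] \le \chi + \zeta - \frac{\Delta}{1-\gamma}$, which is the claim. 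The probabilistic qualifier enters only through the bound on $\zeta$: a concentration inequality (Hoeffding / the union-bound argument cited from~\citep{cql}) applied to the $N$-sample Monte Carlo estimate $\hat V_C^{\pi_{\phi_{old}}}(\mu)$ of $V_C^{\pi_{\phi_{old}}}(\mu)$ gives $\zeta \le C\sqrt{\log(1/\omega)}/|N|$ with probability at least $1-\omega$.

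The main obstacle — and the place where the argument is slightly delicate — is justifying that the pointwise bound $Q_C(s,a)\le\epsilon$ on executed actions translates faithfully into a bound on the \emph{expectation} $\E_{s\sim\rho_{\phi_{old}},a\sim\pi_{\phi}}[\hat A_C(s,a)]$ that appears in the surrogate objective. The subtlety is that $\pi_{\phi}$ in the constraint of Equation~\ref{eq:trpomodifiedeq1main} is the optimization variable (the new policy), whereas the rejection-sampled behaviour distribution in the algorithm is a re-weighted version of $\pi_{\phi_{old}}$; one needs the $D_{\text{KL}}$ constraint (satisfied w.h.p., as noted just before the lemma in the appendix) to control the mismatch, and one needs to argue that solving the constrained program cannot produce a $\pi_{\phi}$ violating the bound — i.e. feasibility is inherited from the behaviour policy. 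I would handle this by observing that the behaviour policy used for data collection is itself a feasible point of the program (it satisfies the $Q_C\le\epsilon$ screening by construction), so the constraint set is nonempty and the optimum $\pi_{\phi_{new}}$ also satisfies $\E[\hat A_C]\le\epsilon$; the rest is the bookkeeping substitution of $\zeta$ and $\Delta$ described above, together with the relation between $A_C$ and the over-estimated $\hat A_C$.
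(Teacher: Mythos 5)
Your proposal is correct and follows essentially the same route as the paper: unwind the rejection-sampling threshold $\epsilon=(1-\gamma)(\chi-\hat{V}_C^{\pi_{\phi_{old}}}(\mu))$ to get the pointwise bound $\hat{A}_C(s,a)\le\epsilon$, pass to the expectation to obtain $\hat{V}_C^{\pi_{\phi_{old}}}(\mu)+\frac{1}{1-\gamma}\E[\hat{A}_C]\le\chi$, then substitute $\hat{V}_C = V_C-\zeta$ and $\E[\hat{A}_C]=\E[A_C]+\Delta$ and invoke the concentration bound on $\zeta$. Your added discussion of the mismatch between the executed (rejection-filtered) actions and the expectation under the optimization variable $\pi_\phi$ addresses a step the paper's proof asserts without comment, but it does not change the argument.
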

\begin{proof}
Based on line 6 of Algorithm~\ref{alg:cscmainalgo}, for every rollout $\{(s, a)\}$, the following holds:
\begin{equation} \label{constraintsatisfaction}
\begin{split}
 & Q_C(s, a)\leq (1-\gamma)(\chi-\hat{V}_C^{\pi_{\phi_{old}}}(\mu))) \;\;\;\;\forall{(s, a)}\\
\implies & \hat{A}_C(s, a)\leq (1-\gamma)(\chi-\hat{V}_C^{\pi_{\phi_{old}}}(\mu))) \;\;\;\;\forall{(s, a)}\\
\implies & \hat{V}_C^{\pi_{\phi_{old}}}(\mu) + \frac{1}{1-\gamma}\hat{A}_C(s, a)\leq \chi \;\;\;\;\forall{(s, a)}\\
\implies & \hat{V}_C^{\pi_{\phi_{old}}}(\mu) + \frac{1}{1-\gamma}\mathbb{E}_{s\sim \rho_{\phi_{old}}, a\sim\pi_{\phi}}  \left[\hat{A}_C(s, a)\right]\leq \chi \\
    \end{split}
\end{equation}
We note that we can only compute a sample estimate $\hat{V}_C^{\pi_{\phi_{old}}}(\mu)$  instead of the true quantity $V_C$ which can introduce \textit{sampling error} in practice. In order to ensure that $\hat{V}_C^{\pi_{\phi_{old}}}(\mu)$ is not much lesser than $V_C^{\pi_{\phi_{old}}}(\mu)$, we can obtain a bound on their difference. Note that if $\hat{V}_C^{\pi_{\phi_{old}}}(\mu) \geq V_C^{\pi_{\phi_{old}}}(\mu)$, the Lemma holds directly, so we only need to consider the less than case.

Let $\hat{V}_C^{\pi_{\phi_{old}}}(\mu) = V_C^{\pi_{\phi_{old}}}(\mu)-\zeta$. With high probability $\geq 1-\omega$, we can ensure $\zeta\leq\frac{C'\sqrt{\log(1/\omega)}}{|N|}$, where $C'$ is a constant independent of $\omega$ (obtained from union bounds and concentration inequalities) and $N$ is the number of samples used in the estimation of $V_C$. In addition, our estimate of $\mathbb{E}_{s\sim \rho_{\phi_{old}}, a\sim\pi_{\phi}}  \left[\hat{A}_C(s, a)\right]$ is an overestimate of the true $\mathbb{E}_{s\sim \rho_{\phi_{old}}, a\sim\pi_{\phi}}  \left[A_C(s, a)\right]$, and we denote their difference by $\Delta$.

So, with high probability $\geq 1-\omega$, we have
\begin{equation} \label{constraintsatisfactionlastbit}
\begin{split}
&  \hat{V}_C^{\pi_{\phi_{old}}}(\mu) + \frac{1}{1-\gamma}\mathbb{E}_{s\sim \rho_{\phi_{old}}, a\sim\pi_{\phi}}  \left[\hat{A}_C(s, a)\right]\leq \chi \\
 \implies & V_C^{\pi_{\phi_{old}}}(\mu) +\frac{1}{1-\gamma}\mathbb{E}_{s\sim \rho_{\phi_{old}}, a\sim\pi_{\phi}}  \left[A_C(s,a)\right] \leq \chi + \zeta - \frac{\Delta}{1-\gamma}
    \end{split}
\end{equation}
\end{proof}

\begin{innercustomthm}
\label{ref:theorem1proof}
Consider policy updates that solve the constrained optimization problem defined in equation~\ref{eq:trpomodifiedeq1main}.
With high probability $\geq 1-\omega$, we have the following upper bound on expected probability of failure $ V_C^{\pi_{\phi_{new}}}(\mu)$ for $\pi_{\phi_{new}}$ during every policy update iteration
\begin{equation}
    V_C^{\pi_{\phi_{new}}}(\mu)  \leq \chi + \zeta - \frac{\Delta}{1-\gamma} + \frac{\sqrt{2\delta}\gamma\epsilon_C}{(1-\gamma)^2} \;\;\;\;\;\;\text{where}\;\;\;\;\;\; \zeta\leq\frac{C\sqrt{\log(1/\omega)}}{|N|}
\end{equation}
\end{innercustomthm}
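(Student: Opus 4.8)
\textbf{Proof proposal for Theorem~\ref{ref:theorem1}.}

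The plan is to combine Lemma~\ref{ref:lemma1}, which controls the constraint for the \emph{old} policy $\pi_{\phi_{old}}$, with a standard trust-region-style bound that transfers this control to the \emph{new} policy $\pi_{\phi_{new}}$. The key observation is that the left-hand side of Lemma~\ref{ref:lemma1}, namely $V_C^{\pi_{\phi_{old}}}(\mu) + \frac{1}{1-\gamma}\mathbb{E}_{s\sim\rho_{\phi_{old}}, a\sim\pi_{\phi_{new}}}[A_C(s,a)]$, is precisely the first-order surrogate for $V_C^{\pi_{\phi_{new}}}(\mu)$ about $\pi_{\phi_{old}}$. So the strategy is: (i) invoke Lemma~\ref{ref:lemma1} to bound that surrogate by $\chi + \zeta - \frac{\Delta}{1-\gamma}$ with probability $\geq 1-\omega$; (ii) bound the gap between $V_C^{\pi_{\phi_{new}}}(\mu)$ and its surrogate using the CPO/TRPO-type inequality; and (iii) use the $D_{\text{KL}}$ constraint $\mathbb{E}_{s\sim\rho_{\phi_{old}}}[D_{\text{KL}}(\pi_{\phi_{old}}\|\pi_{\phi_{new}})]\leq\delta$ enforced in Equation~\ref{eq:trpomodifiedeq1main}, together with Pinsker's inequality, to turn the divergence bound into the explicit term $\frac{\sqrt{2\delta}\,\gamma\epsilon_C}{(1-\gamma)^2}$.

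Concretely, step (ii) uses the known surrogate-bound result (Achiam et al., CPO): for any two policies $\pi'$ and $\pi$,
\[
V_C^{\pi'}(\mu) \leq V_C^{\pi}(\mu) + \frac{1}{1-\gamma}\mathbb{E}_{s\sim\rho_{\pi}, a\sim\pi'}[A_C(s,a)] + \frac{2\gamma\,\epsilon_C^{\pi'}}{(1-\gamma)^2}\,\mathbb{E}_{s\sim\rho_\pi}\!\left[D_{\text{TV}}(\pi'(\cdot|s)\,\|\,\pi(\cdot|s))\right],
\]
where $\epsilon_C^{\pi'}=\max_s|\mathbb{E}_{a\sim\pi'}A_C(s,a)|$, which matches the quantity $\epsilon_C$ defined in the Notation paragraph. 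Setting $\pi'=\pi_{\phi_{new}}$, $\pi=\pi_{\phi_{old}}$, I then bound the total-variation term: by Jensen and Pinsker's inequality, $\mathbb{E}_{s\sim\rho_{\phi_{old}}}[D_{\text{TV}}(\pi_{\phi_{new}}\|\pi_{\phi_{old}})] \leq \sqrt{\tfrac{1}{2}\mathbb{E}_{s\sim\rho_{\phi_{old}}}[D_{\text{KL}}(\pi_{\phi_{old}}\|\pi_{\phi_{new}})]} \leq \sqrt{\delta/2}$ (using the symmetrized/second-order treatment of the KL that the paper already adopts via the Fisher matrix, so the direction of the KL is immaterial to leading order). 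Substituting gives the extra term $\frac{2\gamma\epsilon_C}{(1-\gamma)^2}\cdot\sqrt{\delta/2} = \frac{\sqrt{2\delta}\,\gamma\epsilon_C}{(1-\gamma)^2}$. Adding this to the bound from step (i) yields
\[
V_C^{\pi_{\phi_{new}}}(\mu) \leq \chi + \zeta - \frac{\Delta}{1-\gamma} + \frac{\sqrt{2\delta}\,\gamma\epsilon_C}{(1-\gamma)^2},
\]
with the same high-probability guarantee $\geq 1-\omega$ inherited from Lemma~\ref{ref:lemma1}, and with $\zeta\leq\frac{C'\sqrt{\log(1/\omega)}}{|N|}$ carried over unchanged.

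The main obstacle I anticipate is handling the high-probability bookkeeping cleanly: the bound in step (i) holds only on the event (probability $\geq 1-\omega$) where the sampling-error bound on $\zeta$ holds \emph{and} where the rejection-sampling step in line~6 of Algorithm~\ref{alg:cscmainalgo} actually succeeds in keeping every executed $(s,a)$ below the threshold $\epsilon$, and I must make sure step (ii)'s deterministic inequality composes with this event without an extra union-bound loss. A secondary subtlety is justifying the use of $\epsilon_C = \max_s|\mathbb{E}_{a\sim\pi_{\phi_{new}}}A_C(s,a)|$ (new policy) rather than the old policy in the CPO bound, and confirming that the KL constraint is enforced for $\pi_{\phi_{new}}$ relative to $\pi_{\phi_{old}}$ as written; both follow from the exact-enforcement of the quadratic Fisher constraint in Equation~\ref{eq:lagrangian1}, but this should be stated explicitly. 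The remaining algebra is routine substitution.
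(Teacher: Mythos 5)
Your proposal is correct and follows essentially the same route as the paper's own proof: apply the CPO-style surrogate bound relating $V_C^{\pi_{\phi_{new}}}$ to $V_C^{\pi_{\phi_{old}}}$ plus the expected advantage, convert the total-variation term to the KL trust-region radius via Jensen and Pinsker to obtain $\frac{\sqrt{2\delta}\gamma\epsilon_C}{(1-\gamma)^2}$, and then invoke Lemma~\ref{ref:lemma1} to bound the surrogate by $\chi+\zeta-\frac{\Delta}{1-\gamma}$ on the same high-probability event. The subtleties you flag (KL direction, $\epsilon_C$ defined via the new policy, no extra union-bound loss) are handled the same way, and no less informally, in the paper.
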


Here, $\epsilon_C=\max_s|\mathbb{E}_{a\sim\pi_{\phi_{new}}}A_C(s,a)|$ and $\Delta$ is the overestimation in $\E_{s\sim \rho_{\phi_{old'}},a\sim\pi_{\phi_{old}}}[A_C(s,a)]$ due to CQL. 

\begin{proof}
$C(s)$ denotes the value of the constraint function from the environment in state $s$. This is analogous to the task reward function $R(s,a)$. In our case $C(s)$ is a binary indicator of whether a catastrophic failure has occurred, however the analysis we present holds even when $C(s)$ is a shaped continuous cost function.
\[
   C(s) = 
\begin{cases}
    1 ,& \mathbbm{1}\{failure\} = 1\\
    0,              & \text{otherwise}
\end{cases}
\]
Let $V_R^{\pi_\phi}(\mu)$ denotes the discounted task rewards obtained in expectation by executing policy $\pi_\phi$ for one episode, and let $V_C^{\pi_\phi}(\mu)$ denote the corresponding constraint values.

\begin{equation} \label{maintheory}
\begin{split}
    \max_{\pi_\phi} V_R^{\pi_\phi}(\mu) \;\;\;\;s.t.\;\;\;\; V_C^{\pi_\phi}(\mu) \leq \chi
    \end{split}
\end{equation}

From the TRPO~\citep{trpo} and CPO~\citep{cpo} papers, following similar derivations, we obtain the following bounds
\begin{equation} \label{taskrewtheory}
\begin{split}
    V_R^{\pi_\phi}(\mu) - V_R^{\pi_{\phi_{old}}}(\mu) \geq \frac{1}{1-\gamma}\mathbb{E}_{s\sim \rho_{\phi_{old}}, a\sim\pi_{\phi}} \left[ A_R^{\pi_{\phi_{old}}}(s,a) - \frac{2\gamma\epsilon_R}{1-\gamma}\text{D}_{TV}(\pi_{\phi}||\pi_{\phi_{old}})[s]\right]
    \end{split}
\end{equation}
Here, $A_R^{\pi_{\phi}}$ is the advantage function corresponding to the task rewards and $\epsilon_R=\max_s|\mathbb{E}_{a\sim\pi_{\phi}}A_R^{\pi_{\phi}}(s,a)|$. $\text{D}_{TV}$ is the total variation distance. We also have,

\begin{equation} \label{costtheorymain}
\begin{split}
   V_C^{\pi_\phi}(\mu) - V_C^{\pi_{\phi_{old}}}(\mu) \leq \frac{1}{1-\gamma}\mathbb{E}_{s\sim \rho_{\phi_{old}}, a\sim\pi_{\phi}} \left[ A_C^{\pi_{\phi_{old}}}(s,a) + \frac{2\gamma\epsilon_C}{1-\gamma}\text{D}_{TV}(\pi_{\phi}||\pi_{\phi_{old}})[s]\right]
    \end{split}
\end{equation}
Here, $A_C^{\pi_{\phi_{old}}}$ is the advantage function corresponding to the costs and $\epsilon_C=\max_s|\mathbb{E}_{a\sim\pi_{\phi}}A_C^{\pi_{\phi_{old}}}(s,a)|$. In our case, $A_C$ is defined in terms of the safety Q function $Q_C(s,a)$, and CQL can bound its expectation directly. To see this, note that, by definition $\mathbb{E}_{s\sim \rho_{\phi_{old}}, a\sim\pi_{\phi}} \left[ A_C^{\pi_{\phi_{old}}}(s,a)\right] = \mathbb{E}_{s\sim \rho_{\phi_{old}}, a\sim\pi_{\phi}}\left[Q_\zeta(s,a)\right] - \mathbb{E}_{s\sim \rho_{\phi_{old}}, a\sim\pi_{\phi_{old}}}\left[Q_\zeta(s,a)\right]$. Here, the RHS is precisely the term in equation 2 of~\citep{cql} that is bounded by CQL. We get an overstimated advantage $\hat{A}_C(s,a)$ from training the safety critic $Q_C$ through updates in equation~\ref{eq:cqloureqnmain}. . Let $\Delta$ denote the expected magnitude of over-estimate $\mathbb{E}_{s\sim \rho_{\phi_{old}}, a\sim\pi_{\phi}}  \left[\hat{A}_C(s, a)\right] = \mathbb{E}_{s\sim \rho_{\phi_{old}}, a\sim\pi_{\phi}}  \left[A_C(s, a)\right] + \Delta$, where $\Delta$ is positive. Note that replacing $A_C$, by its over-estimate $\hat{A}_C$, the inequality in~\ref{costtheorymain} above still holds.

Using Pinsker's inequality, we can convert the bounds in terms of $\text{D}_{KL}$ instead of $\text{D}_{TV}$,
\begin{equation}\label{pinsker}
    \text{D}_{TV}(p||q) \leq \sqrt{\text{D}_{KL}(p||q)/2}
\end{equation}
By Jensen's inequality,
\begin{equation}\label{pinsker}
  \mathbb{E}[\sqrt{\text{D}_{KL}(p||q)/2}] \leq \sqrt{\mathbb{E}[\text{D}_{KL}(p||q)]/2}
\end{equation}
So, we can replace the $\mathbb{E}[\text{D}_{TV}(p||q)]$ terms in the bounds by $ \sqrt{\mathbb{E}[\text{D}_{KL}(p||q)]}$. Then, inequation~\ref{costtheorymain} becomes,
\begin{equation} \label{costtheorymodified}
\begin{split}
   V_C^{\pi_\phi}(\mu) - V_C^{\pi_{\phi_{old}}}(\mu) \leq \frac{1}{1-\gamma}\left[\mathbb{E}_{s\sim \rho_{\phi_{old}}, a\sim\pi_{\phi}}  \left[A_C^{\pi_{\phi_{old}}}(s,a)\right] + \frac{2\gamma\epsilon_C}{1-\gamma}\sqrt{ \mathbb{E}_{s\sim \rho_{\phi_{old}}, a\sim\pi_{\phi}}\left[\text{D}_{KL}(\pi_{\phi}||\pi_{\phi_{old}})[s]\right]} \right]
    \end{split}
\end{equation}

Re-visiting our objective in equation~\ref{eq:trpomodifiedeq1main}, 
\begin{equation} \label{trpomodifiedeq}
\begin{split}
    &\max_{\pi_\phi} \mathbb{E}_{s\sim\rho_{\phi_{old}},a\sim\pi_{\phi}}\left[A_R^{\pi_{\phi_{old}}}(s,a)\right]  \;\;\;\;
     \\ & s.t.\;\;\;\; \mathbb{E}_{s\sim\rho_{\phi_{old}}}[D_{\text{KL}}(\pi_{\phi_{old}}(\cdot|s) || \pi_{\phi}(\cdot|s))] \leq \delta
    \\ & s.t.\;\;\;\; V_C^{\pi_\phi}(\mu) \leq \chi
    \end{split}
\end{equation}
From inequation~\ref{costtheorymodified} we note that instead of of constraining $V_C^{\pi_\phi}(\mu)$ we can constrain an upper bound on this. Writing the constraint in terms of the current policy iterate $\pi_{\phi_{old}}$ using equation~\ref{costtheorymodified},
\begin{equation} \label{trpomodifiedeq}
\begin{split}
  \pi_{\phi_{new}}  = &\max_{\pi_\phi} \mathbb{E}_{s\sim\rho_{\phi_{old}},a\sim\pi_{\phi}}\left[A_R^{\pi_{\phi_{old}}}(s,a)\right]  \;\;\;\;
     \\ & s.t.\;\;\;\; \mathbb{E}_{s\sim\rho_{\phi_{old}}}[D_{\text{KL}}(\pi_{\phi_{old}}(\cdot|s) || \pi_{\phi}(\cdot|s))] \leq \delta
    \\ & s.t.\;\;\;\; V_C^{\pi_{\phi_{old}}}(\mu) + \frac{1}{1-\gamma}\mathbb{E}_{s\sim \rho_{\phi_{old}}, a\sim\pi_{\phi}}  \left[A_C^{\pi_{\phi_{old}}}(s,a)\right] + \beta\sqrt{ \mathbb{E}_{s\sim\rho_{\phi_{old}}}[D_{\text{KL}}(\pi_{\phi_{old}}(\cdot|s) || \pi_{\phi}(\cdot|s))] } \leq \chi
    \end{split}
\end{equation}

As there is already a bound on $D_{\text{KL}}(\pi_{\phi_{old}}(\cdot|s) || \pi_{\phi}(\cdot|s))] $, getting rid of the redundant term, we define the following optimization problem, which we actually optimize for 

\begin{equation} \label{eq:trpomodifiedeq1appendix}
\begin{split}
  \pi_{\phi_{new}}  = &\max_{\pi_\phi} \mathbb{E}_{s\sim\rho_{\phi_{old}},a\sim\pi_{\phi}}\left[A_R^{\pi_{\phi_{old}}}(s,a)\right]  \;\;\;\;
     \\ & s.t.\;\;\;\; \mathbb{E}_{s\sim\rho_{\phi_{old}}}[D_{\text{KL}}(\pi_{\phi_{old}}(\cdot|s) || \pi_{\phi}(\cdot|s))] \leq \delta
    \\ & s.t.\;\;\;\; V_C^{\pi_{\phi_{old}}}(\mu) + \frac{1}{1-\gamma}\mathbb{E}_{s\sim \rho_{\phi_{old}}, a\sim\pi_{\phi}}  \left[A_C^{\pi_{\phi_{old}}}(s,a)\right] \leq \chi
    \end{split}
\end{equation}

\noindent\textbf{Upper bound on expected probability of failures.} If $\pi_{\phi_{new}}$ is updated using equation~\ref{eq:trpomodifiedeq1main}, then we have the following upper bound on $V_C^{\pi_{\phi_{new}}}(\mu)$ 
\begin{equation} \label{costupperbound}
\begin{split}
 V_C^{\pi_{\phi_{new}}}(\mu) \leq V_C^{\pi_{\phi_{old}}}(\mu) + \frac{1}{1-\gamma}\mathbb{E}_{s\sim \rho_{\phi_{old}}, a\sim\pi_{\phi}}  \left[A_C^{\pi_{\phi_{old}}}\right] + \frac{2\gamma\epsilon_C}{(1-\gamma)^2}\sqrt{ \mathbb{E}_{s\sim \rho_{\phi_{old}}, a\sim\pi_{\phi}}\left[\text{D}_{KL}(\pi_{\phi}||\pi_{\phi_{old}})[s]\right]}
    \end{split}
\end{equation}
If we ensure $ V_C^{\pi_{\phi_{old}}}(\mu) + \frac{1}{1-\gamma}\mathbb{E}_{s\sim \rho_{\phi_{old}}, a\sim\pi_{\phi}}  \left[A_C^{\pi_{\phi_{old}}}(s,a)\right] \leq \chi$ holds by following Algorithm 1,we have the following upper bound on $V_C^{\pi_{\phi_{new}}}(\mu)$
\begin{equation} \label{costupperboundfinal}
\begin{split}
 V_C^{\pi_{\phi_{new}}}(\mu) \leq \chi + \frac{\sqrt{2\delta}\gamma\epsilon_C}{(1-\gamma)^2}
    \end{split}
\end{equation}
Here, $\epsilon_C=\max_s|\mathbb{E}_{a\sim\pi_{\phi_{new}}}A_C^{\pi_{\phi_{old}}}(s,a)|$. 

Now, instead of $A_C(s,a)$, we have an over-estimated advantage estimate $\hat{A}_C(s,a)$ obtained by training the safety critic $Q_C$ through CQL as in equation~\ref{eq:cqloureqnmain}. Let $\Delta$ denote the expected magnitude of over-estimate $\mathbb{E}_{s\sim \rho_{\phi_{old}}, a\sim\pi_{\phi}}  \left[\hat{A}_C(s, a)\right] = \mathbb{E}_{s\sim \rho_{\phi_{old}}, a\sim\pi_{\phi}}  \left[A_C(s, a)\right] + \Delta$, where $\Delta$ is positive.

From Lemma~\ref{ref:lemma1}, we are able to ensure the following with high probability $\geq 1-\omega$
\[V_C^{\pi_{\phi_{old}}}(\mu) +\frac{1}{1-\gamma}\mathbb{E}_{s\sim \rho_{\phi_{old}}, a\sim\pi_{\phi}}  \left[A_C(s,a)\right] \leq \chi + \zeta - \frac{\Delta}{1-\gamma}\]
By combining this with the upper bound on $V_C^{\pi_{\phi_{new}}}(\mu)$ from inequality~\ref{costupperbound}, we obtain with probability $\geq 1-\omega$
\begin{equation}
    V_C^{\pi_{\phi_{new}}}(\mu)  \leq \chi + \zeta - \frac{\Delta}{1-\gamma} + \frac{\sqrt{2\delta}\gamma\epsilon_C}{(1-\gamma)^2} \;\;\;\;\;\;\text{where}\;\;\;\;\;\; \zeta\leq\frac{C'\sqrt{\log(1/\omega)}}{|N|}
\end{equation}
\end{proof}

Since $\epsilon_C$ depends on the optimized policy $\pi_{\phi_{new}}$, it can't be calculated exactly prior to the update. As we cap $Q_C(s,a)$ to be $\leq 1$, therefore, the best bound we can construct for $\epsilon_C$ is the trivial bound $\epsilon_C\leq 2$.
Now, in order to have $V_C^{\pi_{\phi_{new}}}(\mu)  < \chi$, we require $\Delta>\frac{2\sqrt{2\delta}\gamma}{1-\gamma}+(1-\gamma)\zeta$. To guarantee this, replacing $\Delta$ by the exact overestimation term from CQL, we have the following condition on $\alpha$:
\begin{equation}\label{eq:alphabound}
\begin{split}
    \alpha > \frac{\mathcal{G}_{c,T}}{1-\gamma}\cdot\max_{s\sim\rho_{\phi_{old'}}}\left(\frac{1}{|\sqrt{\mathcal{D}_{\phi_{old'}}}|} +\frac{2\sqrt{2\delta}\gamma + (1-\gamma)^2\zeta}{\mathcal{G}_{c,T}}\right)\left[\mathbb{E}_{a\sim\pi_{\phi_{old}}}\left(\frac{\pi_{\phi_{old}}}{\pi_{\phi_{old'}}} - 1\right)\right]^{-1}
    \end{split}
\end{equation}
Here, $\mathcal{G}_{c,T}$ is a constant depending on the concentration properties of the safety 
constraint function $C(s,a)$  and the state transition operator $T(s'|s,a)$~\citep{cql}. $\phi_{old'}$ denotes the parameters of the policy $\pi$ in the iteration before $\phi_{old}$. Now, with probability $\geq 1-\omega$, we have $\zeta\leq\frac{C'\sqrt{\log(1/\omega)}}{|N|}$. So, if $\alpha$ is chosen as follows
\begin{equation}\label{eq:alphabound1}
\begin{split}
    \alpha > \frac{\mathcal{G}_{c,T}}{1-\gamma}\cdot\max_{s\sim\rho_{\phi_{old'}}}\left(\frac{1}{|\sqrt{\mathcal{D}_{\phi_{old'}}}|} +\frac{2\sqrt{2\delta}\gamma + (1-\gamma)^2\frac{C'\sqrt{\log(1/\omega)}}{|N|}}{\mathcal{G}_{c,T}}\right)\left[\mathbb{E}_{a\sim\pi_{\phi_{old}}}\left(\frac{\pi_{\phi_{old}}}{\pi_{\phi_{old'}}} - 1\right)\right]^{-1}
    \end{split}
\end{equation}
Then with probability $\geq 1-\omega$, we will have,
\begin{equation}
    V_C^{\pi_{\phi_{new}}}(\mu)  \leq \chi 
\end{equation}

In the next theorem, we show that the convergence rate to the optimal solution is not severely affected due to the safety constraint satisfaction guarantee, and gets modified by addition of an extra bounded term. 
\begin{innercustomthm}
\label{th:theorem2proof}
If we run the policy gradient updates through equation~\ref{eq:trpomodifiedeq1main}, for policy $\pi_\phi$,
 with $\mu$ as the starting state distribution, with
$\phi^{(0)}=0$, and learning rate $\eta>0$, then for all policy update iterations $T>0$ we have, with probability $\geq 1-\omega$,
\[
V_R^\ast(\mu) - V_R^{(T)}(\mu) \leq   \frac{\log |\mathcal{A}|}{\eta T}  +\frac{1}{(1-\gamma)^2T} + \left((1-\chi) + \left(1- \frac{2\Delta}{(1-\gamma)}\right) + 2\zeta\right)\frac{\sum_{t=0}^{T-1}\lambda^{(t)}}{\eta T}
\]
\end{innercustomthm}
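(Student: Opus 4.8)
\textbf{Proof proposal for Theorem~\ref{th:theorem2proof}.}

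The plan is to adapt the mirror-descent / NPG convergence analysis of \citet{alekh} to the Lagrangian objective that we actually optimize in Equation~\ref{eq:trpomodifiedeq1main}, treating the safety penalty as a perturbation of the unconstrained policy-gradient update. Concretely, the policy updates here are softmax natural policy gradient steps on the surrogate objective $\mathbb{E}_{s\sim\rho_{\phi_{old}},a\sim\pi_\phi}[A_R^{\pi_{\phi_{old}}}(s,a)] - \lambda^{(t)}\big(V_C^{\pi_{\phi_{old}}}(\mu) + \tfrac{1}{1-\gamma}\mathbb{E}[\hat A_C(s,a)] - \chi\big)$, so each step looks like an NPG step whose effective advantage is $A_R - \lambda^{(t)} A_C^{\mathrm{eff}}$. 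First I would invoke the standard potential/`performance-difference'' argument: write the per-step improvement bound for the composite advantage, sum the telescoping KL potential $\mathbb{E}_{s\sim d^\ast}[\KL(\pi^\ast(\cdot|s)\,\|\,\pi^{(t)}(\cdot|s))]$ over $t=0,\dots,T-1$, and use $\phi^{(0)}=0$ (uniform policy) to bound the initial potential by $\log|\mathcal{A}|$. This reproduces the first two terms $\frac{\log|\mathcal{A}|}{\eta T} + \frac{1}{(1-\gamma)^2 T}$ exactly as in \citet{alekh}.

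The new ingredient is the contribution of the $\lambda^{(t)}$-weighted safety term. At iteration $t$, the update direction differs from the pure task NPG direction by $\lambda^{(t)}$ times the (overestimated) constraint advantage, so in the telescoping sum the cross term picks up $\frac{1}{\eta T}\sum_{t=0}^{T-1}\lambda^{(t)}\cdot\big(\text{bound on the constraint-advantage contribution along the optimal-state distribution}\big)$. I would bound that per-iteration constraint contribution by a constant $K$: using that $C(s)\in\{0,1\}$ gives $V_C\le 1$ hence a $(1-\chi)$-type term from the slack $V_C^{\pi_{\phi_{old}}}(\mu)-\chi$, and using the TV-to-KL (Pinsker) plus Jensen steps already established in the proof of Theorem~\ref{ref:theorem1proof}, together with the trivial bound $\epsilon_C\le 2$ and the $\sqrt{2\delta}$ bound on the per-step KL, gives the $\frac{4\sqrt{2\delta}\gamma}{(1-\gamma)^2}$ term. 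The overestimation $\Delta$ and sampling error $\zeta$ enter exactly as in Lemma~\ref{ref:lemma1}, producing the $\big(1-\tfrac{2\Delta}{1-\gamma}\big) + 2\zeta$ refinement stated in the appendix version; with probability $\ge 1-\omega$ we substitute $\zeta\le \frac{C'\sqrt{\log(1/\omega)}}{|N|}$, which is where the high-probability qualifier comes from. Collecting the three pieces yields the claimed bound with $K \le (1-\chi) + \frac{4\sqrt{2\delta}\gamma}{(1-\gamma)^2}$.

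The main obstacle I anticipate is making the ``perturbed NPG'' step rigorous: \citet{alekh}'s analysis is for a fixed reward, whereas here the effective advantage changes every iteration both because $\lambda^{(t)}$ changes and because the constraint term is evaluated at the moving reference policy $\pi_{\phi_{old}}$ and uses a \emph{learned, overestimated} $\hat A_C$ rather than the true $A_C$. Handling this cleanly requires (i) absorbing the $\lambda^{(t)}$-dependence inside the telescoping sum rather than treating $\lambda$ as constant — which is exactly why the final bound carries $\sum_{t=0}^{T-1}\lambda^{(t)}$ rather than $\lambda T$ — and (ii) controlling the approximation/transfer error between the surrogate advantage (under $\rho_{\phi_{old}}$) and the true advantage (under $d^\ast$), for which I would lean on the same distribution-mismatch coefficient and function-approximation-error assumptions that \citet{alekh} already isolate, noting that the safety overestimation only helps (it makes the constraint term a valid upper bound, so the inequality direction is preserved). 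A secondary, more bookkeeping-level obstacle is keeping the single failure-probability budget $\omega$ consistent across the invocation of Lemma~\ref{ref:lemma1} and the concentration bound on $\zeta$, which a union bound handles at the cost of a constant factor absorbed into $C'$.
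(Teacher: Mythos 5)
Your proposal follows essentially the same route as the paper's proof: both adapt the softmax NPG mirror-descent analysis of \citet{alekh} with the composite advantage $A_R^{(t)} - \lambda^{(t)}A_C^{(t)}$, telescope the KL potential $\E_{s\sim d^\star}[\KL(\pi^\star_s\|\pi^{(t)}_s)]$ with the uniform initialization giving $\log|\mathcal{A}|$, and bound the extra $\lambda^{(t)}$-weighted constraint-advantage contribution via Lemma~\ref{ref:lemma1} to obtain the $\bigl((1-\chi) + (1-\tfrac{2\Delta}{1-\gamma}) + 2\zeta\bigr)$ factor. The only cosmetic difference is that you anticipate needing a distribution-mismatch argument to transfer from $\rho_{\phi_{old}}$ to $d^\star$, whereas the paper sidesteps this by working directly in the tabular softmax parameterization where the NPG update is closed-form.
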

Since the value of the dual variables $\lambda$ strictly decreases during gradient descent updates (Algorithm 1), $\sum_{t=0}^{T-1}\lambda^{(t)}$ is upper-bounded. In addition, if we choose $\alpha$ as mentioned in the discussion of Theorem~\ref{ref:theorem1}, we have $\Delta>\frac{2\sqrt{2\delta}\gamma}{1-\gamma}+\zeta$. Hence, with probability $\geq 1-\omega$, we can ensure that
\[
V_R^\ast(\mu) - V_R^{(T)}(\mu) \leq   \frac{\log |\mathcal{A}|}{\eta T}  +\frac{1}{(1-\gamma)^2T} + K\frac{\sum_{t=0}^{T-1}\lambda^{(t)}}{\eta T} \;\;\;\text{where} \;\;\; K \leq (1-\chi) + \frac{4\sqrt{2\delta}\gamma}{(1-\gamma)^2} 
\]
\begin{proof}
Let superscript $(t)$ denote the $t^{\text{th}}$ policy update iteration. We follow the derivation in Lemma 5.2 of~\citep{alekh} but replace $A(s,a)$ with our modified advantage estimator $\hat{A}^{(t)}(s,a) = A_R^{(t)}(s,a) - \lambda^{(t)} A_C(s,a)$. The quantity $\log Z_t(s)$ is defined in terms of $A_R^{(t)}$ as 
\begin{equation} 
\begin{split}
 \log Z_t(s)  &= \log\sum_a\pi^{(t)}(a|s)\exp{(\eta A^{(t)} /(1-\gamma))}\\ 
 &\geq  \sum_a\pi^{(t)}(a|s)\log\exp{\eta A^{(t)}(s,a)/(1-\gamma))}\\
 &= \frac{\eta}{1-\gamma}\sum_a\pi^{(t)}(a|s)A^{(t)}(s,a)\\
 &= 0 
    \end{split}
\end{equation}
We define an equivalent alternate quantity based on $\hat{A}^{(t)}$
\begin{equation}\label{eq:Zhat} 
\begin{split}
 \log \hat{Z}_t(s)  
  &= \log\sum_a\pi^{(t)}(a|s)\exp{(\eta \hat{A}^{(t)}(s,a)/(1-\gamma))}\\ 
  &= \log\sum_a\pi^{(t)}(a|s)\exp{(\eta (A_R^{(t)}(s,a) - \lambda^{(t)} A_C(s,a))/(1-\gamma))}\\ 
 &\geq \sum_a\pi^{(t)}(a|s)\log\exp{(\eta A_R^{(t)}(s,a)/(1-\gamma))} - \lambda^{(t)}\sum_a\pi^{(t)}(a|s)\log\exp{(\eta A_C^{(t)}(s,a)/(1-\gamma))}\\
 &= 0 - \frac{\lambda^{(t)}\eta}{1-\gamma}\sum_a\pi^{(t)}(a|s)A_C^{(t)}(s,a)
    \end{split}
\end{equation}
For simplicity, consider softmax policy parameterization (equivalent results hold under the function approximation regime as shown in~\citep{alekh}), where we define the policy updates with the modified advantage function $\hat{A}^{(t)}$ to take the form:
\[
\phi^{(t+1)} = \phi^{(t)} + \frac{\eta}{1-\gamma} \hat{A}^{(t)} \quad \mbox{and}\quad \pi^{(t+1)}(a| s) = \pi^{(t)}(a| s) \frac{\exp(\eta \hat{A}^{(t)}(s,a)/(1-\gamma))}{\hat{Z}_t(s)},
\]
Here, $\hat{Z}_t(s) = \sum_{a \in \gA} \pi^{(t)}(a| s)  \exp(\eta \hat{A}^{(t)}(s,a)/(1-\gamma))$. Note that our actual policy updates (with backtracking line search) are almost equivalent to this when $\eta$ is \textit{small}. For the sake of notational convenience, we will denote $\log \hat{Z}_t(s) + \frac{\lambda^{(t)}\eta}{1-\gamma}\sum_a\pi^{(t)}(a|s)A_C^{(t)}(s,a) $ as $G_t(s)$. We have $G_t(s)\geq 0$ from equation~\ref{eq:Zhat}.

We consider the performance improvement lemma~\citep{kakadeandlangford} with respect to the task  advantage function $A_R^{(t)}(s,a)$ and express it in terms of the modified advantage function $\hat{A}^{(t)}(s,a) = A_R^{(t)}(s,a) - \lambda^{(t)} A_C(s,a)$. Let $\mu$ be the starting state distribution of the MDP, and $d^{(t)}$ denote the stationary distribution of states induced by policy $\pi$ in the $t^{\text{th}}$ iteration.

\begin{equation}\label{eq:Vdiff} 
\begin{split}
 V_R^{(t+1)}(\mu) -   V_R^{(t)}(\mu)  &= \frac{1}{1-\gamma}\mathbb{E}_{s\sim d^{(t+1)}} \sum_a\pi^{(t+1)}(a|s)A_R^{(t)}(s,a) \\ 
       &= \frac{1}{1-\gamma}\mathbb{E}_{s\sim d^{(t+1)}} \sum_a\pi^{(t+1)}(a|s)(\hat{A}^{(t)}(s,a) + \lambda^{(t)} A_C^{(t)}(s,a)) \\
 &= \frac{1}{\eta} \E_{s\sim d^{(t+1)}} \sum_a \pi^{(t+1)}(a| s) \log \frac{\pi^{(t+1)}(a| s)\hat{Z}_t(s)}{\pi^{(t)}(a| s)} \\&+\frac{1}{1-\gamma}\mathbb{E}_{s\sim d^{(t+1)}} \sum_a\pi^{(t+1)}(a|s)(\lambda^{(t)} A_C^{(t)}(s,a)) \\
&= \frac{1}{\eta} \E_{s\sim d^{(t+1)}}
      \KL(\pi^{(t+1)}_s||\pi^{(t)}_s) + \frac{1}{\eta} \E_{s\sim 
      d^{(t+1)}} \log \hat{Z}_t(s) \\&+\frac{1}{1-\gamma}\mathbb{E}_{s\sim d^{(t+1)}} \sum_a\pi^{(t+1)}(a|s)(\lambda^{(t)} A_C^{(t)}(s,a)) \\
     &\geq \frac{1}{\eta} \E_{s\sim 
      d^{(t+1)}} \log \hat{Z}_t(s)  +\frac{\lambda^{(t)}}{1-\gamma}\E_{s\sim 
      d^{(t+1)}} \sum_a\pi^{(t)}(a|s)A_C^{(t)}(s,a)\\
      &\geq \frac{1}{\eta}\E_{s\sim 
      d^{(t+1)}} G_t(s)\\
      &\geq \frac{1-\gamma}{\eta}\E_{s\sim 
     \mu} G_t(s) \\ 
      \end{split}
\end{equation}
We note that $G_t(s)\geq 0$ from equation~\ref{eq:Zhat}. We now prove a result upper bounding the difference between the optimal task value for any state distribution $\rho$ and the task value at the $t^{\text{th}}$ iteration for the same state distribution.

\noindent\textbf{Sub-optimality gap.} The difference between the optimal value function and the current value function estimate is upper bounded.

\begin{equation} 
\begin{split}
  V_R^{\pi^\star}(\rho) &- V_R^{(t)}(\rho)
= \frac{1}{1-\gamma} \E_{s\sim d^{\star}} \sum_a \pi^\star(a| s) (\hat{A}^{(t)}(s,a) + \lambda^{(t)} A_C^{(t)}(s,a))\\
&= \frac{1}{\eta} \E_{s\sim d^{\star}} \sum_a \pi^\star(a| s) \log \frac{\pi^{(t+1)}(a| s) \hat{Z}_t(s)}{\pi^{(t)}(a| s)} + \frac{1}{1-\gamma} \E_{s\sim d^{\star}} \sum_a \pi^\star(a| s) \lambda^{(t)} A_C^{(t)}(s,a)\\
&= \frac{1}{\eta} \E_{s\sim d^{\star}} \left(\KL(\pi^\star_s ||
\pi^{(t)}_s) - \KL(\pi^\star_s || \pi^{(t+1)}_s) + \sum_a \pi^*(a| s) \log \hat{Z}_t(s)\right) \\& + \frac{1}{1-\gamma} \E_{s\sim d^{\star}} \sum_a \pi^\star(a| s) \lambda^{(t)} A_C^{(t)}(s,a) \\
&= \frac{1}{\eta} \E_{s\sim d^{\star}} \left(\KL(\pi^\star_s ||
\pi^{(t)}_s) - \KL(\pi^\star_s || \pi^{(t+1)}_s) + \log \hat{Z}_t(s)\right)  + \frac{1}{1-\gamma} \E_{s\sim d^{\star}} \sum_a \pi^\star(a| s) \lambda^{(t)} A_C^{(t)}(s,a) \\
&= \frac{1}{\eta} \E_{s\sim d^{\star}} \left(\KL(\pi^\star_s ||
\pi^{(t)}_s) - \KL(\pi^\star_s || \pi^{(t+1)}_s)\right) +  \frac{1}{\eta} \E_{s\sim d^{\star}} \left( \log \hat{Z}_t(s)  +  \frac{\lambda^{(t)}}{1-\gamma}  \sum_a \pi^\star(a| s)A_C^{(t)}(s,a) \right)\\
&= \frac{1}{\eta} \E_{s\sim d^{\star}} \left(\KL(\pi^\star_s ||
\pi^{(t)}_s) - \KL(\pi^\star_s || \pi^{(t+1)}_s)\right) \\& +  \frac{1}{\eta} \E_{s\sim d^{\star}} \left( G_t(s) +  \frac{\lambda^{(t)}}{1-\gamma}  \sum_a \pi^\star(a| s)A_C^{(t)}(s,a)  - \frac{\lambda^{(t)}}{1-\gamma}  \sum_a \pi^{(t)}(a| s)A_C^{(t)}(s,a) \right)\\
 \end{split}
\end{equation}

Using equation~\ref{eq:Vdiff} with $d^{\star}$ as the
starting state distribution $\mu$, we have:
\[
\frac{1}{\eta} \E_{s\sim d^{\star}} \log G_t(s) \leq \frac{1}{1-\gamma}
\Big(V^{(t+1)}(d^{\star}) - V^{(t)}(d^{\star})\Big)
\]
which gives us a bound on $\E_{s\sim d^{\star}} \log G_t(s)$.

Using the above equation and that $V^{(t+1)}(\rho) \geq V^{(t)}(\rho)$ (as $V^{(t+1)}(s) \geq V^{(t)}(s)$ for all states $s$), we have:
\begin{align*}
V_R^{\pi^\star}(\rho) &- V_R^{(T-1)}(\rho)
\leq \frac{1}{T} \sum_{t=0}^{T-1} (V_R^{\pi^\star}(\rho) - V_R^{(t)}(\rho)) \\
&\leq \frac{1}{\eta T} \sum_{t=0}^{T-1} \E_{s\sim d^{\star}}
(\KL(\pi^\star_s || \pi^{(t)}_s) - \KL(\pi^\star_s || \pi^{(t+1)}_s))
+ \frac{1}{\eta T} \sum_{t=0}^{T-1} \E_{s\sim d^{\star}} \log G_t(s)\\&+\frac{1}{\eta T} \sum_{t=0}^{T-1} \E_{s\sim d^{\star}} \left( \frac{\lambda^{(t)}}{1-\gamma}  \sum_a \pi^\star(a| s)A_C^{(t)}(s,a)  - \frac{\lambda^{(t)}}{1-\gamma}  \sum_a \pi^{(t)}(a| s)A_C^{(t)}(s,a) \right)\\
&\leq \frac{\E_{s\sim d^{\star}} \KL(\pi^\star_s||\pi^{(0)})}{\eta T}
+ \frac{1}{(1-\gamma) T} \sum_{t=0}^{T-1} \Big(V_R^{(t+1)}(d^{\star}) - V_R^{(t)}(d^{\star})\Big)\\&+\frac{1}{\eta T} \sum_{t=0}^{T-1} \lambda^{(t)} \left( \frac{1}{1-\gamma} \E_{s\sim d^{\star}} \sum_a \pi^\star(a| s)A_C^{(t)}(s,a)  - \frac{1}{1-\gamma} \E_{s\sim d^{\star}} \sum_a \pi^{(t)}(a| s)A_C^{(t)}(s,a) \right)\\
&\leq \frac{\E_{s\sim d^{\star}}
\KL(\pi^\star_s||\pi^{(0)})}{\eta T}
+ \frac{V_R^{(T)}(d^{\star}) - V_R^{(0)}(d^{\star})}{(1-\gamma)T} + 2((1-\gamma)(\chi+\zeta)-\Delta)\frac{\sum_{t=0}^{T-1}\lambda^{(t)}}{(1-\gamma)\eta T}\\
&\leq \frac{\log |\mathcal{A}|}{\eta T} + \frac{1}{(1-\gamma)^2T} + 2((1-\gamma)(\chi+\zeta)-\Delta)\frac{\sum_{t=0}^{T-1}\lambda^{(t)}}{(1-\gamma)\eta T}.
\end{align*}

Here, $\Delta$ denotes the CQL overestimation penalty, and we have used the fact that each term of $\left( \frac{1}{1-\gamma}  \sum_a \pi^\star(a| s)A_C^{(t)}(s,a)  - \frac{1}{1-\gamma}  \sum_a \pi^{(t)}(a| s)A_C^{(t)}(s,a) \right)$ is upper bounded by $(\chi+\zeta-\frac{\Delta}{(1-\gamma)})$ from Lemma~\ref{ref:lemma1}, so the difference is upper-bounded by $2(\chi+\zeta-\frac{\Delta}{(1-\gamma)})$.

By choosing $\alpha$ as in equation~\ref{eq:alphabound}, we have $\Delta>\frac{2\sqrt{2\delta}\gamma}{1-\gamma} + (1-\gamma)\zeta$. So, $-\Delta<-\frac{2\sqrt{2\delta}\gamma}{1-\gamma}-(1-\gamma)\zeta$. Hence, we obtain the relation

We also observe that $2(\chi-\frac{\Delta}{(1-\gamma)}) + 2\zeta = \chi + \chi - 2\frac{\Delta}{(1-\gamma)} + 2\zeta\leq 2 - \chi - 2\frac{\Delta}{(1-\gamma)} = (1-\chi) + 2\zeta + (1- 2\frac{\Delta}{(1-\gamma)}) + 2\zeta$

So, we have the following result for convergence rate

\[
V_R^\ast(\mu) - V_R^{(T)}(\mu) \leq   \frac{\log |\mathcal{A}|}{\eta T}  +\frac{1}{(1-\gamma)^2T} + ((1-\chi) + (1- \frac{2\Delta}{(1-\gamma)}) +2\zeta)\frac{\sum_{t=0}^{T-1}\lambda^{(t)}}{\eta T}
\]
Again, with probability $\geq 1-\omega$, we can ensure $\zeta\leq\frac{C'\sqrt{\log(1/\omega)}}{|N|}$. Overall, choosing the value of $\alpha$ from equation~\ref{eq:alphabound1}, we have $\Delta>\frac{2\sqrt{2\delta}\gamma}{1-\gamma} + (1-\gamma)\zeta$. So, $-\Delta<-\frac{2\sqrt{2\delta}\gamma}{1-\gamma}-(1-\gamma)\zeta$. Hence, with probability $\geq 1-\omega$, we can ensure that
\[
V_R^\ast(\mu) - V_R^{(T)}(\mu) \leq   \frac{\log |\mathcal{A}|}{\eta T}  +\frac{1}{(1-\gamma)^2T} + K\frac{\sum_{t=0}^{T-1}\lambda^{(t)}}{\eta T}
\]

where,
\[
 K \leq (1-\chi) + \frac{4\sqrt{2\delta}\gamma}{(1-\gamma)^2} 
\]


\end{proof}

\textcolor{black}{So far we have demonstrated that the resulting policy iterates from our algorithm all satisfy the desired safety constraint of the CMDP which allows for a maximum safety violation of $\chi$ for every intermediate policy. While this guarantee ensures that the probability of failures is bounded, it does not elaborate on the total failures incurred by the algorithm. In our next result, we show that the cumulative failures until a certain number of iterations $T$ of the algorithm grows sublinearly when executing Algorithm~\ref{alg:cscmainalgo}, provided the safety threshold $\chi$ is set in the right way.}

\begin{innercustomthm}
\color{black}
\label{th:theoremregretproof}
Let $\chi$ in Algorithm~\ref{alg:cscmainalgo} be time-dependent such that $\chi_t = \mathcal{O}(1/\sqrt{t})$. Then, the total number of cumulative safety violations until when $T$ transition samples have been collected by Algorithm~\ref{alg:cscmainalgo}, $\text{Reg}_C(T)$, scales sub-linearly with $T$, i.e., $\text{Reg}_C(T)$ =  $\gO(\sqrt{|\gS||\gA|T})$.
\begin{proof}
From Theorem~\ref{ref:theorem1}, with probability $\geq 1-\omega$
\begin{equation}
    V_C^{\pi_{\phi_{new}}}(\mu)  \leq \chi + \zeta - \frac{\Delta}{1-\gamma} + \frac{\sqrt{2\delta}\gamma\epsilon_C}{(1-\gamma)^2} \;\;\;\;\;\;\text{where}\;\;\;\;\;\; \zeta\leq\frac{C'\sqrt{\log(1/\omega)}}{|N|}
\end{equation}
Instead of using $\phi_{new}$, for ease of notation in this analysis, let us write this in terms of subscript $t$ such that $\pi_t$ denotes the policy at the $t^{\text{th}}$ iteration. We can write the bound on $\zeta$ more explicitly as, $\zeta_t\leq\mathbb{E}_{s\sim d^{\pi_t},a\sim\pi_{t}}\left[\frac{C'\sqrt{\log(1/\omega)}}{\sqrt{|N_t(s,a)|}} \right]$
Here, $N_t(s,a)$ denotes the total number of times $(s,a)$ is seen, and can be written as $N_t(s,a)=\sum_{j=1}^tn_j(s,a)$, where $n_j(s,a)$ denotes total number of times $(s,a)$ is seen in episode $j$. $T=\sum_{s,a}N_k(s,a)$ denotes the total number of samples collected so far. Let $k$ be the number of episodes elapsed when this happens. The safety threshold $\chi$ can be varied at every iteration, such that it decreases as $\chi_t\propto\frac{1}{\sqrt{t}}$ . So, we have with probability $\geq 1-\omega$
\begin{equation}
    V_C^{\pi_{t}}(\mu)  \leq \chi_t + \zeta_t - \frac{\Delta}{1-\gamma} + \frac{\sqrt{2\delta}\gamma\epsilon_C}{(1-\gamma)^2} \;\;\text{where}\;\;\;\zeta_t\leq\mathbb{E}_{s\sim d^{\pi_t},a\sim\pi_{t}}\left[\frac{C'\sqrt{\log(1/\omega)}}{\sqrt{|N_t(s,a)|}} \right] \;\;\;\text{and}\;\;\chi_t=\frac{C''}{\sqrt{t}}
\end{equation}

Now, we note that $V_C^{\pi_{t}}$ denotes the expected probability of episodic failures by executing policy $\pi_t$ (as described in section~\ref{sec:prelim} of the main paper). Let us consider that a policy is rolled out for one episode after every training iteration to collect data (exploration).

Hence the total cumulative safety failures when $T$ transition samples have been collected by Algorithm~\ref{alg:cscmainalgo} is:
\begin{equation}
    \begin{split}
        \text{Reg}_C(T) &= \sum_{t=1}^k V_C^{\pi_t}(\mu) \;\;\times\;\; 1 \\
        &\leq \sum_{t=1}^k (\chi_t + \zeta_t) - \sum_{t=1}^k\left(\frac{\Delta}{1-\gamma} - \frac{\sqrt{2\delta}\gamma\epsilon_C}{(1-\gamma)^2}\right)\\
        &\leq  \sum_{t=1}^k \frac{C''}{\sqrt{t}} +  \sum_{t=1}^k\sum_{s,a}d^{\pi_t}(s)\pi_{t}(a|s)\frac{C'\sqrt{\log(1/\omega)}}{\sqrt{|N_t(s,a)|}} \\
          &\leq  \sum_{t=1}^k \frac{C''}{\sqrt{t}} +  C'\sqrt{\log(1/\omega)}\sum_{s,a}\sum_{t=1}^k\frac{n_t(s,a)}{\sqrt{|N_t(s,a)|}} \\
            &\leq C''\sqrt{k} +  C'\sqrt{\log(1/\omega)}\sum_{s,a}\sqrt{ N_k(s,a)}\\
              &\leq C''\sqrt{T} + C'\sqrt{\log(1/\omega)}\sqrt{|\gS||\gA|}\sqrt{\sum_{s,a}N_k(s,a)}\\
            &\leq C'''\sqrt{|\gS||\gA|T}\\
        &= \gO\sqrt{|\gS||\gA|T}
    \end{split}
\end{equation}
\end{proof}
Here, we used the  concavity of square root function to obtain $\sum_{s,a}\sqrt{ N_k(s,a)}\leq\sqrt{|\gS||\gA|}\sqrt{\sum_{s,a}N_k(s,a)}$ and the definition $T=\sum_{s,a}N_k(s,a)$
\end{innercustomthm}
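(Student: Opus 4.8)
The plan is to turn the \emph{per-iteration} failure-probability bound of Theorem~\ref{ref:theorem1} into a \emph{cumulative} bound by summing it over the episodes collected during the first $T$ transitions, and then to exploit the schedule $\chi_t = \gO(1/\sqrt{t})$ together with the decay of the sampling error as state--action pairs are revisited. Concretely, suppose one exploration episode is rolled out after each of the $k$ policy updates performed while $T$ transitions are gathered. Since $V_C^{\pi_t}(\mu)$ is exactly the expected number of episodic failures of the $t$-th iterate $\pi_t$, the total cumulative failures satisfy $\text{Reg}_C(T) = \sum_{t=1}^{k} V_C^{\pi_t}(\mu)$. Applying Theorem~\ref{ref:theorem1} to each iterate --- with a union bound over the $k$ iterations so that the high-probability event holds simultaneously, which only rescales $\log(1/\omega)$ --- gives, with probability $\ge 1-\omega$, $V_C^{\pi_t}(\mu) \le \chi_t + \zeta_t - \frac{\Delta}{1-\gamma} + \frac{\sqrt{2\delta}\gamma\epsilon_C}{(1-\gamma)^2}$, and by choosing $\alpha$ as in the discussion after Theorem~\ref{ref:theorem1} the last two terms are non-positive, so $V_C^{\pi_t}(\mu) \le \chi_t + \zeta_t$.

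First I would bound $\sum_{t=1}^k \chi_t$. Writing $\chi_t = C''/\sqrt{t}$, the integral comparison $\sum_{t=1}^k t^{-1/2} \le 2\sqrt{k}$ yields $\sum_{t=1}^k \chi_t \le 2C''\sqrt{k} \le 2C''\sqrt{T}$, since each episode contributes at least one transition so $k \le T$. Next I would handle the sampling-error term. The scalar estimate $\zeta_t \le C'\sqrt{\log(1/\omega)}/|N_t|$ from Lemma~\ref{ref:lemma1} refines, through a per-state--action concentration inequality, to $\zeta_t \le \mathbb{E}_{s\sim d^{\pi_t}, a\sim\pi_t}\!\big[C'\sqrt{\log(1/\omega)}/\sqrt{|N_t(s,a)|}\big]$, where $N_t(s,a) = \sum_{j=1}^{t} n_j(s,a)$ is the number of visits to $(s,a)$ through episode $t$. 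Summing over episodes and rewriting the expectation as a count-weighted sum gives $\sum_{t=1}^k \zeta_t \le C'\sqrt{\log(1/\omega)}\sum_{s,a}\sum_{t=1}^{k} \frac{n_t(s,a)}{\sqrt{N_t(s,a)}}$.

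The crux is then the visitation-count telescoping estimate $\sum_{t=1}^{k} \frac{n_t(s,a)}{\sqrt{N_t(s,a)}} = \gO\!\big(\sqrt{N_k(s,a)}\big)$, obtained by comparing the partial sums to $\int_0^{N_k(s,a)} x^{-1/2}\,dx$. Inserting this and applying concavity of the square root (equivalently Cauchy--Schwarz) over the $|\gS||\gA|$ state--action pairs, $\sum_{s,a}\sqrt{N_k(s,a)} \le \sqrt{|\gS||\gA|}\,\sqrt{\sum_{s,a} N_k(s,a)} = \sqrt{|\gS||\gA|\,T}$ by the identity $T = \sum_{s,a} N_k(s,a)$. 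Hence $\sum_{t=1}^k \zeta_t = \gO\!\big(\sqrt{|\gS||\gA|\,T\,\log(1/\omega)}\big)$, and combining with the $\chi_t$ bound gives $\text{Reg}_C(T) \le 2C''\sqrt{T} + \gO(\sqrt{|\gS||\gA|T}) = \gO(\sqrt{|\gS||\gA|T})$, which is sublinear.

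I expect the main obstacle to be the passage from the per-iterate guarantee to the cumulative sum: it hinges on (i) refining the sampling error $\zeta$ into the per-$(s,a)$ form that decays like $1/\sqrt{N_t(s,a)}$, (ii) the telescoping bound $\sum_t n_t(s,a)/\sqrt{N_t(s,a)} = \gO(\sqrt{N_k(s,a)})$, which is exactly where the $\sqrt{T}$ rate is born, and (iii) a union bound making Theorem~\ref{ref:theorem1} hold across all $k$ iterations at once. The remaining steps --- the integral bound on $\sum_t \chi_t$ and the concavity/Cauchy--Schwarz step --- are routine.
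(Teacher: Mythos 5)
Your proposal is correct and follows essentially the same route as the paper's proof: summing the per-iterate bound from Theorem~\ref{ref:theorem1} over the $k$ episodes, bounding $\sum_t \chi_t$ by $\gO(\sqrt{k})$, refining $\zeta_t$ to the per-$(s,a)$ form and using the telescoping bound $\sum_t n_t(s,a)/\sqrt{N_t(s,a)} = \gO(\sqrt{N_k(s,a)})$ followed by Cauchy--Schwarz over the $|\gS||\gA|$ pairs. Your explicit union bound over the $k$ iterations and your justification for discarding the $-\frac{\Delta}{1-\gamma}+\frac{\sqrt{2\delta}\gamma\epsilon_C}{(1-\gamma)^2}$ term via the choice of $\alpha$ are minor tightenings of steps the paper leaves implicit.
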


\subsection{Derivation of the policy update equations}
\label{sec:pgupdatesappendix}
Let $a\in\mathcal{A}$ denote an action, $s\in\mathcal{S}$ denote a state, $\pi_\phi(a|s)$ denote a parameterized policy, $r(s,a)$ denote a reward function for the task being solved, and $\tau$ denote a trajectory of actions by following policy $\pi_\phi$ at each state. To solve the following constrained optimization problem:
\begin{equation} \label{maineq}
\begin{split}
    \max_{\pi_\phi} \mathbb{E}_{\tau\sim\pi_\phi}[\sum_\tau r(\cdot)]  \;\;\;\;s.t.\;\;\;\; \mathbb{E}_{\tau\sim\pi_\phi}[\sum_\tau\mathbbm{1}\{failure\}] = 0
    \end{split}
\end{equation}
    
Here, $\tau$ is the trajectory corresponding to an episode. The objective is to maximize the cumulative returns while satisfying the constraint. The constraint says that the agent must never fail during every episode. $\mathbbm{1}\{failure\}=1$ if there is a failure and $\mathbbm{1}\{failure\}=0$ if the agent does not fail. The only way expectation can be $0$ for this quantity is if every element is $0$, so the constraint essentially is to never fail in any episode. Let's rewrite the objective, more generally as
\begin{equation} \label{maintheorytrpo}
\begin{split}
    \max_{\pi_\phi} V_R^{\pi_\phi}(\mu) \;\;\;\;s.t.\;\;\;\; V_C^{\pi_\phi}(\mu) = 0 
    \end{split}
\end{equation}

We can relax the constraint slightly, by introducing a tolerance parameter $\chi\approx 0$. The objective below tolerates atmost $\chi$ failures in expectation. Since the agent can fail only once in an episode, $V_C^{\pi_\phi}(\mu)$ can also be interpreted as the \emph{probability of failure}, and the constraint $V_C^{\pi_\phi}(\mu) \leq \chi$ says that the probability of failure in expectation must be bounded by $\chi$. So, our objective has a very intuitive and practical interpretation.
\begin{equation} \label{maintheorytrporelaxed}
\begin{split}
    \max_{\pi_\phi} V_R^{\pi_\phi}(\mu) \;\;\;\;s.t.\;\;\;\; V_C^{\pi_\phi}(\mu) \leq \chi
    \end{split}
\end{equation}


We learn  one  state value function, $V_R$ (corresponding to the task reward), parameterized by $\theta$ and one state-action value function $Q_C$ (corresponding to the sparse failure indicator), parameterized by $\zeta$. We have a task reward function $r(s,a)$ from the environment which is used to learn $V_R$. For learning $Q_C$, we get a signal from the environment indicating whether the agent is dead (1) or alive (0) i.e. $\mathbbm{1}\{failure\}$. 

The safety critic $Q_C$ is used to get an estimate of how safe a particular state is, by providing an estimate of \textit{probability of failure}, that will be used to guide exploration. We desire the estimates to be conservative, in the sense that the probability of failure should be an \textit{over-estimate} of the actual probability so that the agent can err in the side of caution while exploring. To train such a critic $Q_C$, we incorporate theoretical insights from CQL, and estimate $Q_C$ through updates similar to those obtained by flipping the sign of $\alpha$ in equation 2 of the CQL paper~\citep{cql}. The motivation for this is to get an \textit{upper bound} on $Q_C$ instead of a lower bound, as guaranteed by CQL.

We also note that the CQL penalty term (the first two terms of equation 2 of the CQL paper) can be expressed as an estimate for the advantage function of the policy $\E_{s\sim d^{\pi_{\phi_{old}}}, a\sim\pi_{\phi}(a|s)}[A(s,a)]$,where, $A(s,a)$ is the advantage function.
\begin{equation} \label{eq:cqladvantage}
\begin{split}
&\E_{s\sim d^{\pi_{\phi_{old}}}, a\sim\pi_{\phi}(a|s)}[Q(s,a)]-\E_{s\sim d^{\pi_{\phi_{old}}}, a\sim\pi_{\phi_{old}}(a|s)}[Q(s,a)] \\&= \E_{s\sim d^{\pi_{\phi_{old}}}, a\sim\pi_{\phi}(a|s)}[Q(s,a)-\E_{a\sim\pi_{\phi_{old}}(a|s)}Q(s,a)]\\&=\E_{s\sim d^{\pi_{\phi_{old}}}, a\sim\pi_{\phi}(a|s)}[Q(s,a)-V(s)] = \E_{s\sim d^{\pi_{\phi_{old}}}, a\sim\pi_{\phi}(a|s)}[A(s,a)]
\end{split}
\end{equation}
 Hence, CQL can help provide an upper bound on the advantage function directly. Although the CQL class of algorithms have been proposed for batch RL, the basic bounds on the value function hold even for online training.

We denote the objective inside $\argmin$ as $CQL(\zeta)$, where $\zeta$ parameterizes $Q_C$, and $k$ denotes the $k^{\text{th}}$ update iteration.
\begin{equation} \label{eq:cqloureqnmainappendix}
\begin{split}
  \hat{Q}^{k+1}_C&\leftarrow\argmin_{Q_C} \alpha\left(-\E_{s\sim\gD_{env}, a\sim\pi_{\phi}(a|s)}[Q_C(s,a)]+\E_{(s,a)\sim\gD_{env}}[Q_C(s,a)] \right)\\&+ \frac{1}{2}\E_{(s,a,s',c)\sim\gD_{env}}\left[\left(Q_C(s,a) - \hat{\gB}^{\pi_\phi}\hat{Q}_C^k(s,a)\right)^2\right]
    \end{split}
\end{equation}
For states sampled from the replay buffer $\gD_{env}$, the first term seeks to maximize the expectation of $Q_C$ over actions sampled from the current policy, while the second term seeks to minimize the expectation of $Q_C$ over actions sampled from the replay buffer. $\gD_{env}$ can include off-policy data, and also offline-data (if available). Let the over-estimated advantage, corresponding to the over-estimated critic $Q_C$, so obtained from CQL, be denoted as $\hat{A}_C(s,a)$, where the \textit{true} advantage is $A_C(s,a)$. 

Now, let $\rho_\phi(s)$ denote the stationary distribution of states induced by policy $\pi_\phi$. For policy optimization, we have to solve a constrained optimization problem as described below:
\begin{equation} \label{trpomod}
\begin{split}
    &\max_{\pi_\phi} \mathbb{E}_{s\sim\rho_{\phi_{old}},a\sim\pi_{\phi}}\left[A_R^{\pi_{\phi_{old}}}(s,a)\right]  \;\;\;\;
     \\ & s.t.\;\;\;\; \mathbb{E}_{s\sim\rho_{\phi_{old}}}[D_{\text{KL}}(\pi_{\phi_{old}}(\cdot|s) || \pi_{\phi}(\cdot|s))] \leq \delta
    \\ & s.t.\;\;\;\; V_C^{\pi_\phi}(\mu) \leq \chi
    \end{split}
\end{equation}
This, as per equation~\ref{eq:trpomodifiedeq1appendix} can be rewritten as 

\begin{equation} \label{eq:trpomodifiedeq1}
\begin{split}
  \pi_{\phi_{new}}  = &\max_{\pi_\phi} \mathbb{E}_{s\sim\rho_{\phi_{old}},a\sim\pi_{\phi}}\left[A_R^{\pi_{\phi_{old}}}(s,a)\right]  \;\;\;\;
     \\ & s.t.\;\;\;\; \mathbb{E}_{s\sim\rho_{\phi_{old}}}[D_{\text{KL}}(\pi_{\phi_{old}}(\cdot|s) || \pi_{\phi}(\cdot|s))] \leq \delta
    \\ & s.t.\;\;\;\; V_C^{\pi_{\phi_{old}}}(\mu) + \frac{1}{1-\gamma}\mathbb{E}_{s\sim \rho_{\phi_{old}}, a\sim\pi_{\phi}}  \left[A_C^{\pi_{\phi_{old}}}(s,a)\right] \leq \chi
    \end{split}
\end{equation}


Since we are learning an over-estimate of $A_C$ through the updates in equation~\ref{eq:cqloureqnmain}, we replace $A_C$ by the learned $\hat{A}_C$ in the constraint above. There are multiple ways to solve this constrained optimization problem, through duality. If we consider the Lagrangian dual of this, then we have the following optimization problem, which we can solve \textit{approximately} by alternating gradient descent. For now, we keep the KL constraint as is, and later use its second order Taylor expansion in terms of the Fisher Information Matrix. 


\begin{equation} \label{lagrangian1}
\begin{split}
    &\max_{\pi_\phi}\min_{\lambda\geq 0} \mathbb{E}_{s\sim\rho_{\phi_{old}},a\sim\pi_{\phi}}\left[A_R^{\pi_{\phi_{old}}}(s,a)\right]  - \lambda\left( V_C^{\pi_{\phi_{old}}}(\mu) + \frac{1}{1-\gamma}\mathbb{E}_{s\sim \rho_{\phi_{old}}, a\sim\pi_{\phi}}  \left[\hat{A}_C(s,a)\right] - \chi\right)\;\;\;\;
     \\ & s.t.\;\;\;\; \mathbb{E}_{s\sim\rho_{\phi_{old}}}[D_{\text{KL}}(\pi_{\phi_{old}}(\cdot|s) || \pi_{\phi}(\cdot|s))] \leq \delta
       \end{split}
\end{equation}


We replace $V_C^{\pi_{\phi_{old}}}(\mu)$ by its sample estimate $\hat{V}_C^{\pi_{\phi_{old}}}(\mu)$ and denote $\chi-\hat{V}_C^{\pi_{\phi_{old}}}(\mu)$ as $\chi'$. Note that $\chi'$ is independent of parameter $\phi$ that is being optimized over. So, the objective becomes 

\begin{equation} \label{lagrangian1modified}
\begin{split}
    &\max_{\pi_\phi}\min_{\lambda\geq 0} \mathbb{E}_{s\sim\rho_{\phi_{old}},a\sim\pi_{\phi}}\left[\hat{A}^{\pi_{\phi_{old}}}(s,a) - \frac{\lambda}{1-\gamma}\hat{A}_C(s,a)\right]  + \lambda\chi'\;\;\;\;
     \\ & s.t.\;\;\;\; \mathbb{E}_{s\sim\rho_{\phi_{old}}}[D_{\text{KL}}(\pi_{\phi_{old}}(\cdot|s) || \pi_{\phi}(\cdot|s))] \leq \delta
       \end{split}
\end{equation}
For notational convenience let $\lambda'$ denote the fraction $\frac{\lambda}{1-\gamma}$. Also, in the expectation, we replace $a\sim\pi_{\phi}$ by $a\sim\pi_{\phi_{old}}$ and account for it by importance weighting of the objective.

Let us consider $\max_{\pi_\phi}$ operation and the following gradient necessary for gradient ascent of $\phi$
\begin{equation} \label{policygrad}
\begin{split}
\phi \leftarrow    &\text{arg}\max_{\phi} \mathbb{E}_{s\sim\rho_{\phi_{old}}}\left[ \mathbb{E}_{a\sim\pi_{\phi_{old}}}\left[\frac{\pi_\phi(a|s)}{\pi_{\phi_{old}}(a|s)}(A_R^{\pi_{\phi_{old}}}(s,a) - \lambda'\hat{A}_C(s, a))\right]\right] 
     \\ & s.t.\;\;\;\; \mathbb{E}_{s\sim\rho_{\phi_{old}}}[D_{\text{KL}}(\pi_{\phi_{old}}(\cdot|s) || \pi_{\phi}(\cdot|s))] \leq \delta
       \end{split}
\end{equation}

\begin{equation} \label{policygrad}
\begin{split}
\phi \leftarrow    &\text{arg}\max_{\phi} \nabla_{\phi_{old}}\bar{A}(\phi_{old})^T(\phi-\phi_{old})
     \\ & s.t.\;\;\;\; \mathbb{E}_{s\sim\rho_{\phi_{old}}}[D_{\text{KL}}(\pi_{\phi_{old}}(\cdot|s) || \pi_{\phi}(\cdot|s))] \leq \delta
       \end{split}
\end{equation}
Here, using slide 20 of Lecture 9 in~\citep{deeprlcourse}, and the identity $\nabla_\phi\pi_\phi = \pi_\phi\nabla_\phi\log\pi_\phi$ we have
\begin{equation} \label{policygrad1}
\begin{split}
\nabla_{\phi}\bar{A}(\phi) = \mathbb{E}_{s\sim\rho_{\phi_{old}}}\left[ \mathbb{E}_{a\sim\pi_{\phi_{old}}}\left[\frac{\pi_\phi(a|s)}{\pi_{\phi_{old}}(a|s)}\nabla_\phi\log\pi_\phi(a|s)(A_R^{\pi_{\phi_{old}}}(s,a) - \lambda'\hat{A}_C(s, a))\right]\right] 
       \end{split}
\end{equation}
Using slide 24 of Lecture 5 in~\citep{deeprlcourse} and estimating locally at $\phi=\phi_{old}$,
\begin{equation} \label{policygrad2}
\begin{split}
\nabla_{\phi_{old}}\bar{A}(\phi_{old}) = \mathbb{E}_{s\sim\rho_{\phi_{old}}}\left[ \mathbb{E}_{a\sim\pi_{\phi_{old}}}\left[\nabla_{\phi_{old}}\log\pi_{\phi_{old}}(a|s)(A_R^{\pi_{\phi_{old}}}(s,a) - \lambda'\hat{A}_C(s, a))\right]\right] 
       \end{split}
\end{equation}
We note that, $\mathbb{E}_{s\sim\rho_{\phi_{old}}}\left[ \mathbb{E}_{a\sim\pi_{\phi_{old}}}\left[\nabla_{\phi_{old}}\log\pi_{\phi_{old}}(a|s)\hat{A}^{\pi_{\phi_{old}}}(s,a)\right]\right] = \nabla_{\phi_{old}}J(\phi_{old})$, the original policy gradient corresponding to task rewards. So, we can write equation~\ref{policygrad2} as
\begin{equation} \label{policygrad3}
\begin{split}
\nabla_{\phi_{old}}ar{A}(\phi_{old}) = \nabla_{\phi_{old}}J(\phi_{old}) +  \mathbb{E}_{s\sim\rho_{\phi_{old}}}\left[ \mathbb{E}_{a\sim\pi_{\phi_{old}}}\left[- \lambda'\hat{A}_C(s, a)\right]\right] 
       \end{split}
\end{equation}

In practice, we estimate $A_R^{\pi_{\phi_{old}}}$ through GAE~\citep{gae,trpo,deeprlcourse}
\begin{equation} \label{gae}
\begin{split}
\hat{A}^{\pi_{\phi_{old}}} = \sum_{t'=t}^{\infty}(\gamma)^{t'-t}\Delta_{t'} \;\;\;\; \Delta_{t'} = r(s_{t'},a_{t'}) + \gamma V_R(s_{t'+1}) - V_R(s_{t'})
       \end{split}
\end{equation}

Let $  \hat{A}^{\pi_{\phi_{old}}}(s,a) = A_R^{\pi_{\phi_{old}}}(s,a) - \lambda'A_C(s, a) $ denote the modified advantage function  corresponding to equation~\ref{policygrad2}
 \begin{equation} \label{gaemodified}
\begin{split}
\hat{A}^{\pi_{\phi_{old}}} = \sum_{t'=t}^{\infty}(\gamma)^{t'-t}\Delta_{t'} \;\;\;\; \Delta_{t'} = r(s_{t'},a_{t'}) + \gamma V_R(s_{t'+1}) - V_R(s_{t'}) - \lambda' \hat{A}_C(s_{t'},a_{t'})
       \end{split}
\end{equation}
So, rewriting  equations~\ref{policygrad2} and~\ref{policygrad3} in terms of $\tilde{A}^{\pi_{\phi_{old}}}$, we have
\begin{equation} \label{policygrad2modified}
\begin{split}
\nabla_{\phi_{old}}\bar{A}(\phi_{old}) = \mathbb{E}_{s\sim\rho_{\phi_{old}}}\left[ \mathbb{E}_{a\sim\pi_{\phi_{old}}}\left[\nabla_{\phi_{old}}\log\pi_{\phi_{old}}(a|s)\hat{A}^{\pi_{\phi_{old}}}\right]\right] 
       \end{split}
\end{equation}

\begin{equation} \label{policygrad3}
\begin{split}
\nabla_{\phi_{old}}\bar{A}(\phi_{old}) = \nabla_{\phi_{old}}\tilde{J}(\phi_{old})
       \end{split}
\end{equation} 
Substituting in equation~\ref{policygrad}, we have 
 \begin{equation} \label{policygradmodified}
\begin{split}
\phi \leftarrow    &\text{arg}\max_{\phi} \nabla_{\phi_{old}}\tilde{J}(\phi_{old})^T(\phi-\phi_{old})
     \\ & s.t.\;\;\;\; \mathbb{E}_{s\sim\rho_{\phi_{old}}}[D_{\text{KL}}(\pi_{\phi_{old}}(\cdot|s) || \pi_{\phi}(\cdot|s))] \leq \delta
       \end{split}
\end{equation}
As shown in slide 20 of Lecture 9~\citep{deeprlcourse} and~\citep{trpo}, we can approximate $D_{\text{KL}}$ in terms of the Fisher Information Matrix $\mathbf{F}$ (this is the second order term in the Taylor expansion of KL; note that around $\phi=\phi_{old}$, both the KL term and its gradient are 0),
\begin{equation} \label{FIM}
\begin{split}
D_{\text{KL}}(\pi_{\phi_{old}}(\cdot|s) || \pi_{\phi}(\cdot|s)) = \frac{1}{2}(\phi-\phi_{old})^T\mathbf{F}(\phi-\phi_{old})
       \end{split}
\end{equation}
Where, $\mathbf{F}$ can be estimated with samples as 
\begin{equation} \label{FIM}
\begin{split}
\mathbf{F} = \mathbb{E}_{s\sim\rho_{\phi_{old}}}\left[ \mathbb{E}_{a\sim\pi_{\phi_{old}}}\left[\nabla_{\phi_{old}}\log\pi_{\phi_{old}}(\nabla_{\phi_{old}}\log\pi_{\phi_{old}})^T\right]\right]
       \end{split}
\end{equation}
So, finally, we can write the gradient ascent step for $\phi$ as (natural gradient conversion)
\begin{equation} \label{policygradmodified}
\begin{split}
\phi \leftarrow   \phi_{old} + \beta\mathbf{F}^{-1}\nabla_{\phi_{old}}\tilde{J}(\phi_{old})\;\;\;\; \beta = \sqrt{\frac{2\delta}{\nabla_{\phi_{old}}\tilde{J}(\phi_{old})^T\mathbf{F}\nabla_{\phi_{old}}\tilde{J}(\phi_{old})}}
       \end{split}
\end{equation}
In practice, we perform backtracking line search to ensure the $D_{\text{KL}}$ constraint satisfaction. So, we have the following update rule 
\begin{equation} 
\begin{split}
\phi \leftarrow   \phi_{old} + \beta\mF^{-1}\nabla_{\phi_{old}}\tilde{J}(\phi_{old})\;\;\;\; \beta = \beta^j\sqrt{\frac{2\delta}{\nabla_{\phi_{old}}\tilde{J}(\phi_{old})^T\mF\nabla_{\phi_{old}}\tilde{J}(\phi_{old})}}
       \end{split}
\end{equation}
After every update, we check if $\Bar{D}_{\text{KL}}(\phi||\phi_{old})\leq\delta$, and if not we decay $\beta^j=\beta^j(1-\beta^j)^j$, set $j\leftarrow j+1$ and repeat for $L$ steps until $\Bar{D}_{\text{KL}}\leq\delta$ is satisfied. If this is not satisfied after $L$ steps, we backtrack, and do not update $\phi$ i.e. set $\phi\leftarrow\phi_{old}$.
For gradient descent with respect to the Lagrange multiplier $\lambda$ we have (from equation 5),
\begin{equation} \label{lagrangiangrad}
\begin{split}
   \lambda  \leftarrow \lambda - \left(\frac{1}{1-\gamma}\mathbb{E}_{s\sim\rho_{\phi_{old}},a\sim\pi_{\phi_{old}}}[\hat{A}_C(s, a)] - \chi'\right)\;\;\;\;
       \end{split}
\end{equation}
Note that in the derivations we have ommitted $\sum_t$ in the outermost loop of all expectations, and subscripts (e.g. $a_t$, $s_t$) in order to avoid clutter in notations.

\subsection{Relation to CPO}
\label{sec:relationtocpo}
The CPO paper~\citep{cpo} considers a very similar overall objective for policy gradient updates, with one major difference. CPO approximates the $V_C^{\pi_\phi}(\mu)\leq\chi$ constraint by replacing $V_C^{\pi_\phi}(\mu)$ with its first order Taylor expansion and enforces the resulting simplified constraint exactly in the dual space. On the other hand, we do not make this simplification, and use primal-dual optimization to optimize an upper bound on $V_C$ through the CQL inspired objective in equation~\ref{eq:cqloureqnmain}. Doing this and not not making the linearity modification allows us to handle sparse (binary) failure indicators from the environment without assuming a continuous safety cost function as done in CPO~\citep{cpo}.  

\subsection{Practical considerations}
\label{sec:practical}

Depending on the value of KL-constraint on successive policies $\delta$, the RHS in Theorem 2 can either be a lower or higher rate than the corresponding problem without safety constraint. In particular, let the sampling error $\zeta=0$, then if $\delta\geq\frac{(1-\gamma)^4(2-\chi)^2}{8\gamma^2}$, the third term is negative.

If we set $\gamma=0.99$ and $\chi=0.05$, then for any $\delta>\texttt{1e-8}$, the third term in Theorem 3 will be negative. Also, if $\alpha$ is chosen to be much greater than that in equation~\ref{eq:alphabound}, the value of $\Delta$ can be arbitrarily increased in principle, and we would be overestimating the value of $Q_C$ significantly. While increasing $\Delta$ significantly will lead to a decrease in the upper bound of $V_R^\ast(\mu) - V_R^{(T)}(\mu)$, but in practice, we would no longer have a practical algorithm. This is because, when $Q_C$ is overestimated significantly, it would be difficult to guarantee that line 9 of Algorithm 1 is satisfied, and policy execution will stop, resulting in infinite wall clock time for the algorithm.

In order to ensure that the above does not happen, in practice we loop over line 6 of Algorithm 1 for a maximum of $100$ iterations. So, in practice the anytime safety constraint satisfaction of Theorem 2 is violated during the early stages of training when the function approximation of $Q_C$ is incorrect. However, as we demonstrate empirically, we are able to ensure the guarantee holds during the majority of the training process. 

\newpage
\subsection{Details about the environments}
\label{sec:env_details}
In each environment, shown in Figure~\ref{fig:simenvs}, we define a task objective that the agent must achieve and a criteria for \textit{catastrophic failure}. The goal is to solve the task without dying. In all the environments, in addition to the task reward, the agent only receives a binary signal indicatin whether it is dead i.e. a catastrophic failure has occurred (1) or alive (0). 
\begin{itemize}
    \item \textit{\textbf{Point agent navigation avoiding traps.}} Here, a point agent with two independent actuators for turning and moving forward/backward must be controlled in a 2D plane to reach a goal (shown in green in Figure~\ref{fig:simenvs}) while avoiding traps shown in violet circular regions. The agent has a health counter set to 25 for the episode and it decreases by 1 for every time-step that it resides in a trap. The agent is \emph{alive} when the health counter is positive, and a \textit{catastrophic failure} occurs when the counter strikes 0 and the agent dies. 
    \item \textit{\textbf{Car agent navigation avoiding traps.}} Similar environment as the above but the agent is a Car with more complex dynamics. It has two independently controllable front wheels and free-rolling rear wheel. We adapt this environment from~\citep{ray2019benchmarking}.
   \item  \textit{\textbf{Panda push without toppling. }}  A Franka Emika Panda arm must push a vertically placed block across the table to a goal location without the block toppling over. The workspace dimensions of the table are 20cmx40cm and the dimensions of the block are 5cmx5cmx10cm. The environment is based on Robosuite~\cite{robosuite} and we use Operational Space Control (OSC) to control the end-effevctor velocities of the robot arm. A \emph{catastrophic failure} is said to occur is the block topples.
   \item  \textit{\textbf{Panda push within boundary. }}  A Franka Emika Panda arm  must be controlled to push a block across the table to a goal location without the block going outside a rectangular constraint region. \textit{Catastrophic failure} occurs when the block center of mass ($(x,y)$ position) move outside the constraint region on the table with dimensions 15cmx35cm. The dimensions of the block are 5cmx5cmx10cm. The environment is based on Robosuite~\cite{robosuite} and we use Operational Space Control (OSC) to control the end-effector velocities of the robot arm. 
\item \textit{\textbf{Laikago walk without falling}}, a Laikago quadruped robot must walk without falling. The agent is rewarded for walking as fast as possible (or trotting) and \textit{failure} occurs when the robot falls. Since this is an \textit{extremely} challenging task, for all the baselines, we initialize the agent's policy with a controller that has been trained to keep the agent standing, while not in motion. The environment is implemented in PyBullet and is based on~\citep{peng2020learning}. 
\end{itemize}

\subsection{Hyper-parameter details}
We chose the learning rate $\eta_Q$ for the safety-critic $Q_C$ to be $2e-4$ after experimenting with $1e-4$ and $2e-4$ and observing slightly better results with the latter. The value of discount factor $\gamma$ is set to the usual default value $0.99$, the learning rate $\eta_\lambda$ of the dual variable $\lambda$ is set to $4e-2$, the value of $\delta$ for the $D_{\text{KL}}$ constraint on policy updates is set to $0.01$, and the value of $\alpha$ to be $0.5$. We experimented with three different $\alpha$ values $0.05, 0.5, 5$ and found nearly same performance across these three values. For policy updates, the backtracking co-efficient $\beta^{(0)}$ is set to 0.7 and the max. number of line search iterations $L=20$.  For the \textit{Q-ensembles} baseline, the ensemble size is chosen to be $20$ (as mentioned in the LNT paper), with the rest of the common hyper-parameter values consistent with \algoName, for a fair comparison.All results are over four random seeds.

\newpage
\subsection{Complete results for tradeoff between safety and task performance}
\label{sec:tradeoff}
\begin{figure*}[h!]
\centering
        \begin{subfigure}[b]{0.33\textwidth}
              \centering
    \includegraphics[width=\textwidth]{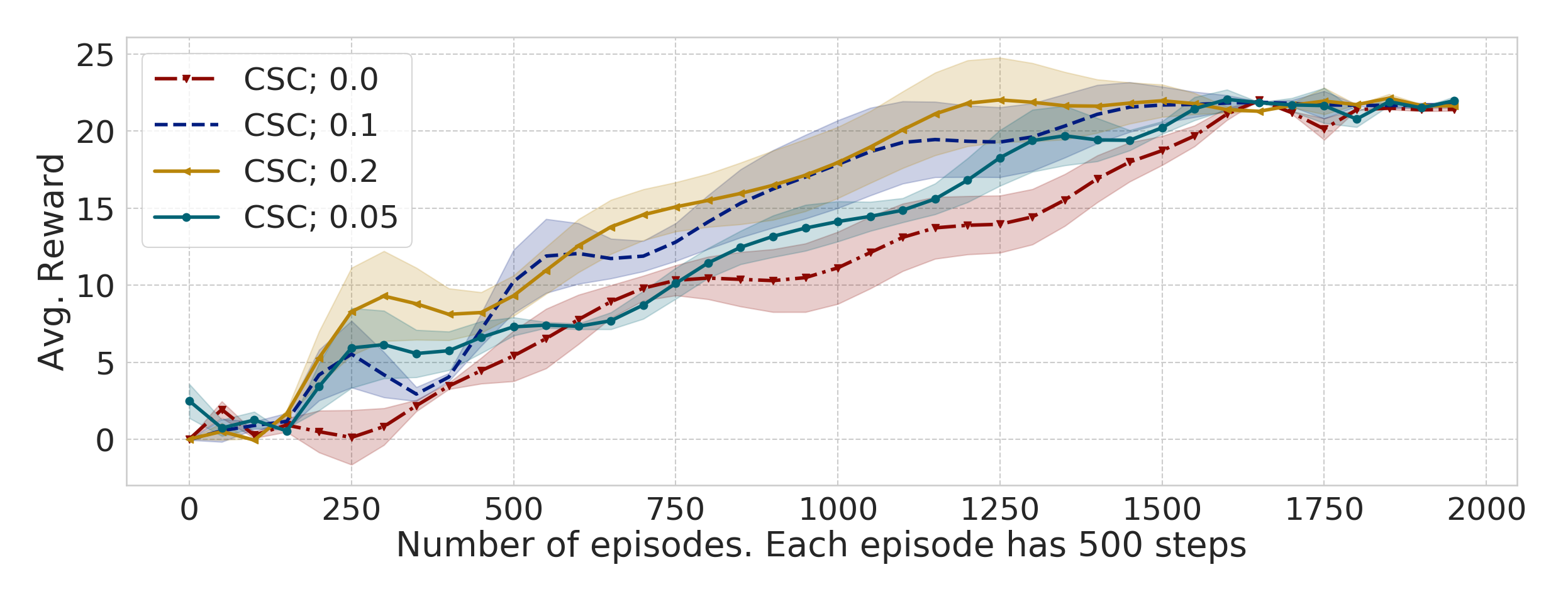}
    \caption{Point 2D Nav. Rewards}
    \label{fig:pointmass}
        \end{subfigure}
         \begin{subfigure}[b]{0.33\textwidth}
              \centering
    \includegraphics[width=\textwidth]{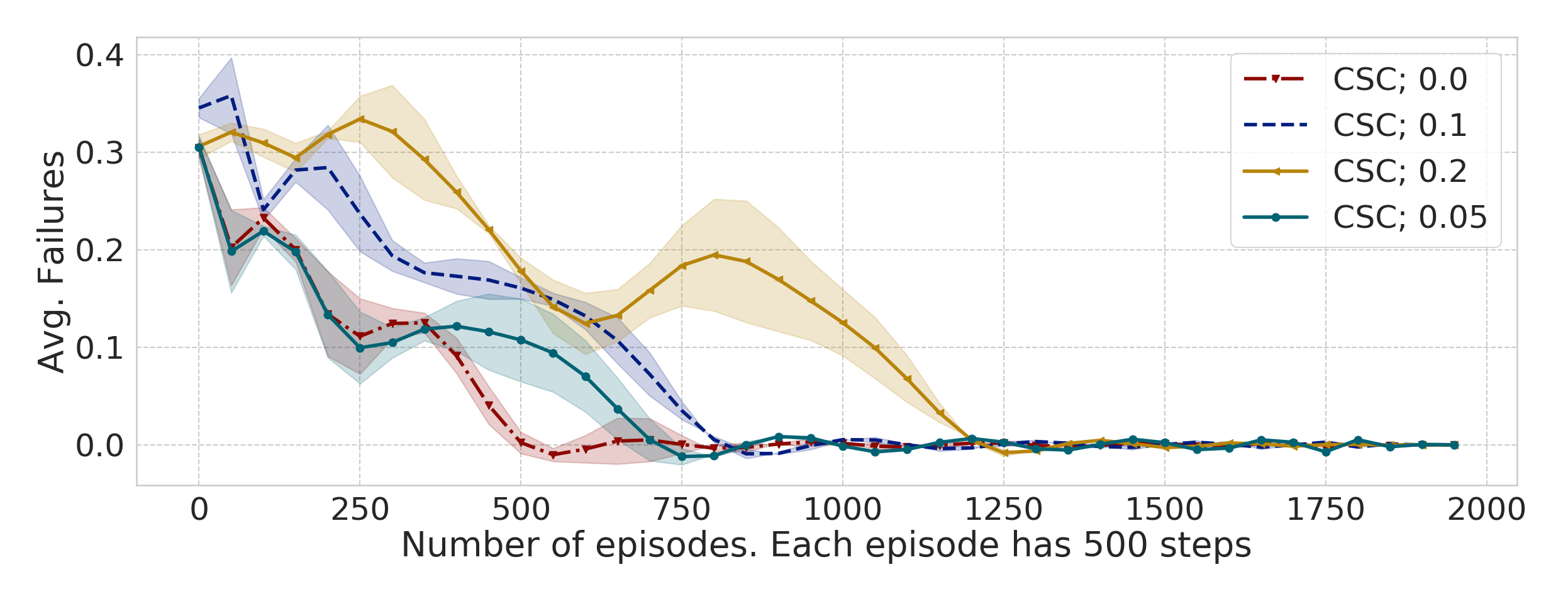}
    \caption{Point 2D Nav. Avg. failures}
    \label{fig:manipulator}
        \end{subfigure}
        \begin{subfigure}[b]{0.33\textwidth}
              \centering
    \includegraphics[width=\textwidth]{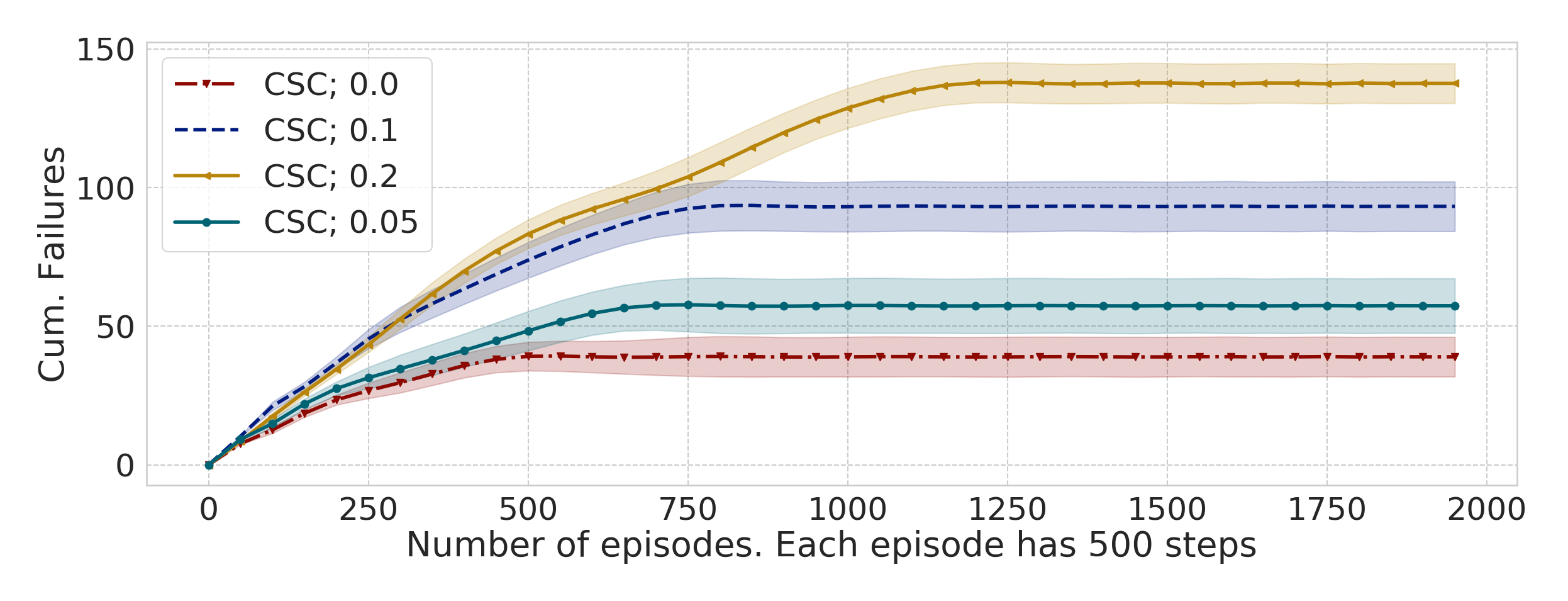}
    \caption{Point 2D Nav. Cum. failures}
    \label{fig:manipulator}
        \end{subfigure}
        
          \begin{subfigure}[b]{0.33\textwidth}
              \centering
    \includegraphics[width=\textwidth]{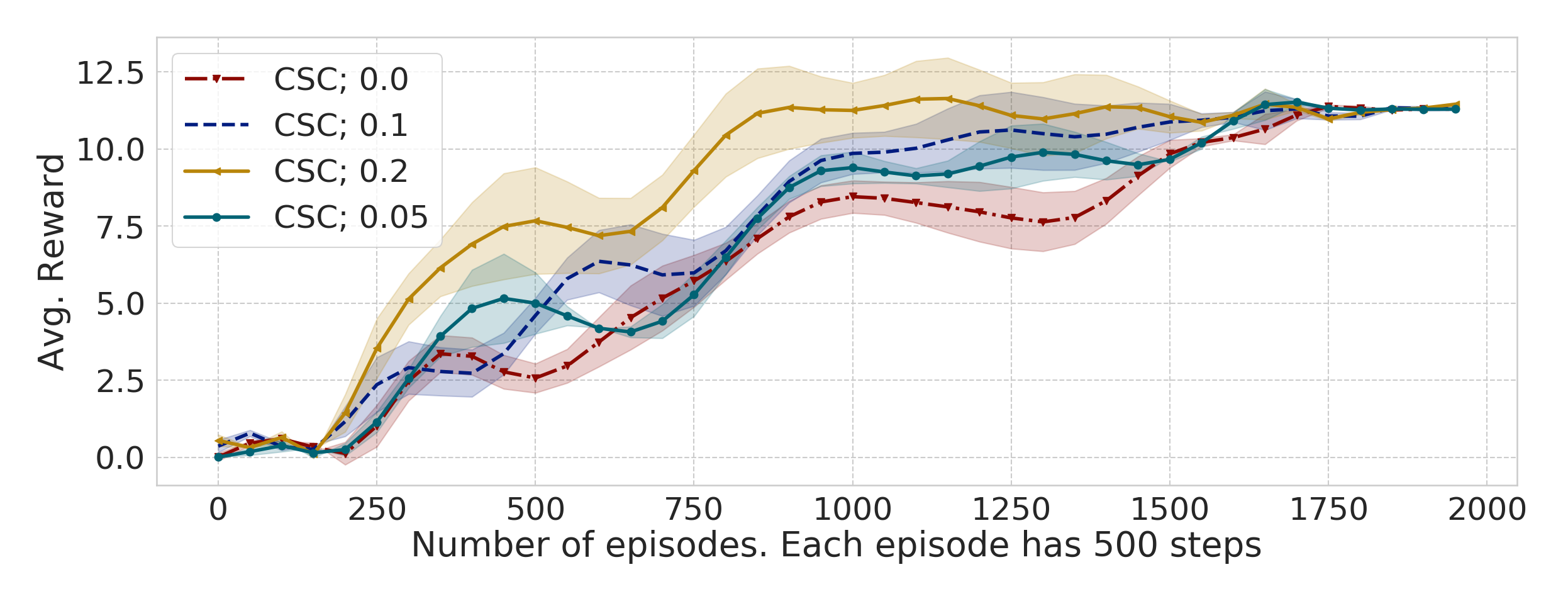}
    \caption{Panda Topple Rewards}
    \label{fig:pointmass}
        \end{subfigure}
         \begin{subfigure}[b]{0.33\textwidth}
              \centering
    \includegraphics[width=\textwidth]{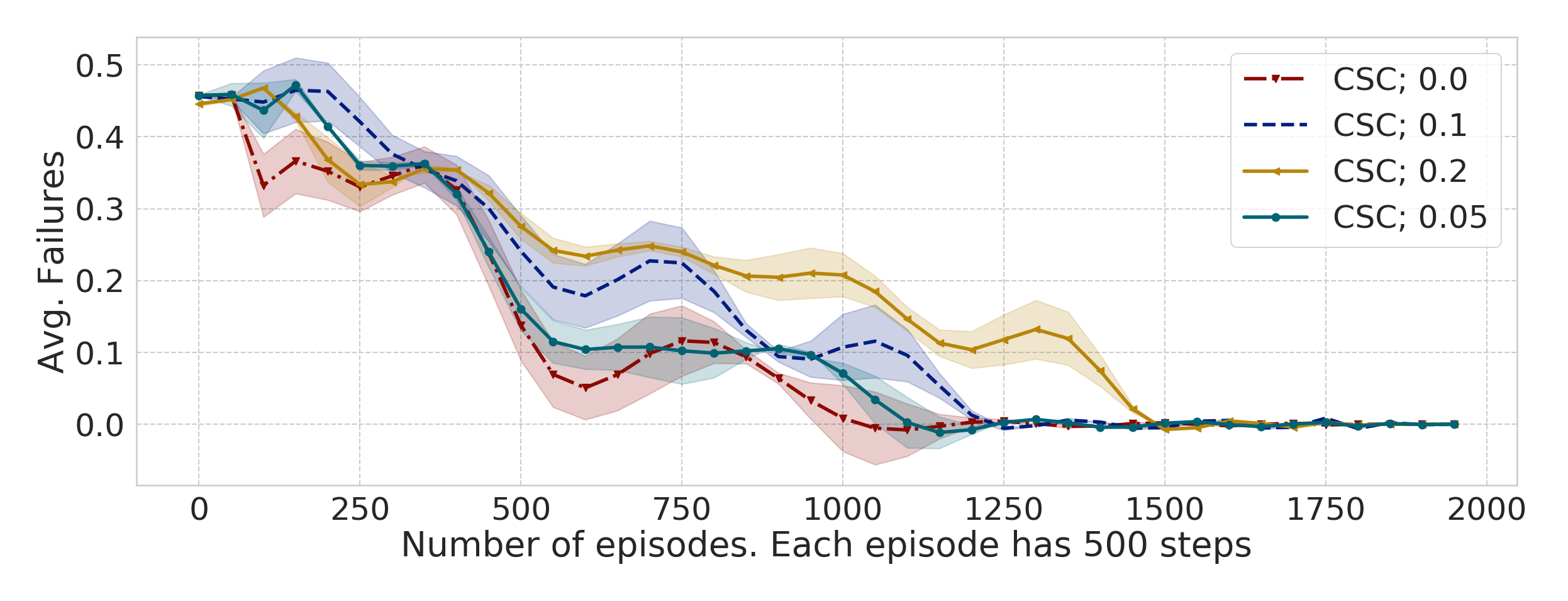}
    \caption{Panda Topple Avg. failures}
    \label{fig:manipulator}
        \end{subfigure}
        \begin{subfigure}[b]{0.33\textwidth}
              \centering
    \includegraphics[width=\textwidth]{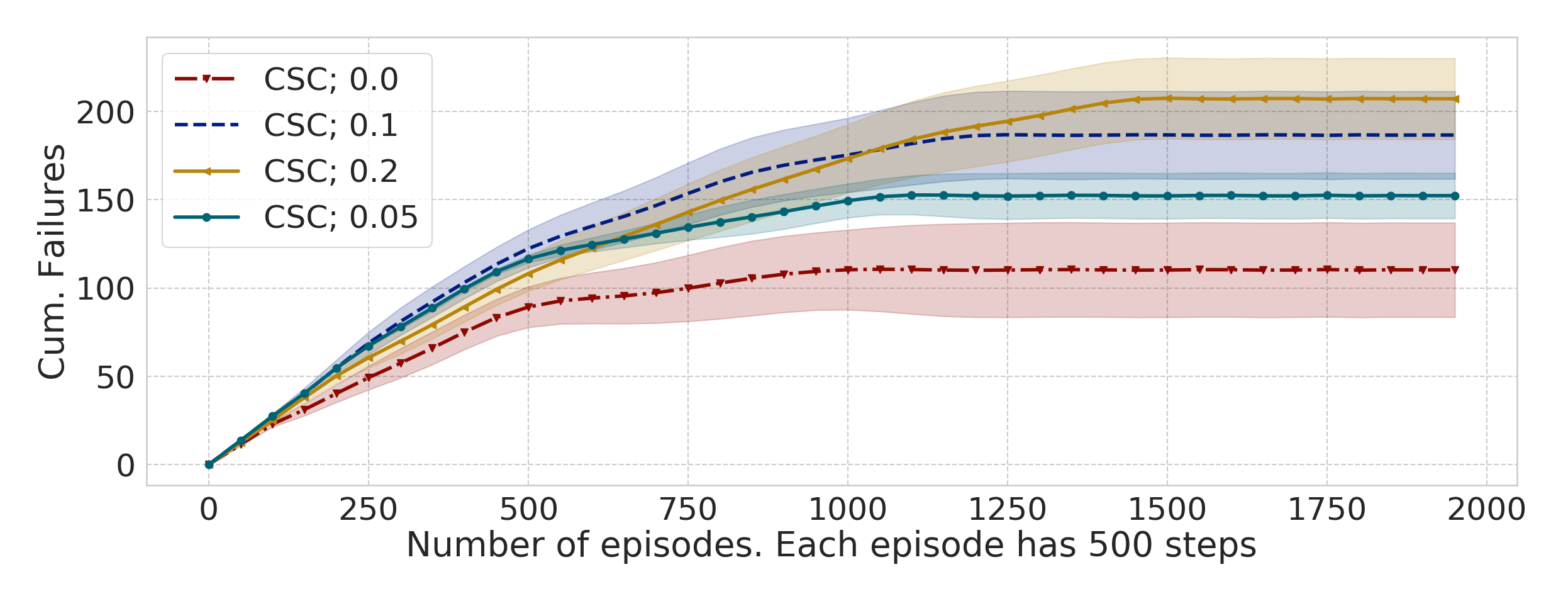}
    \caption{Panda Topple Cum. failures}
    \label{fig:manipulator}
        \end{subfigure}
        
              \begin{subfigure}[b]{0.33\textwidth}
              \centering
    \includegraphics[width=\textwidth]{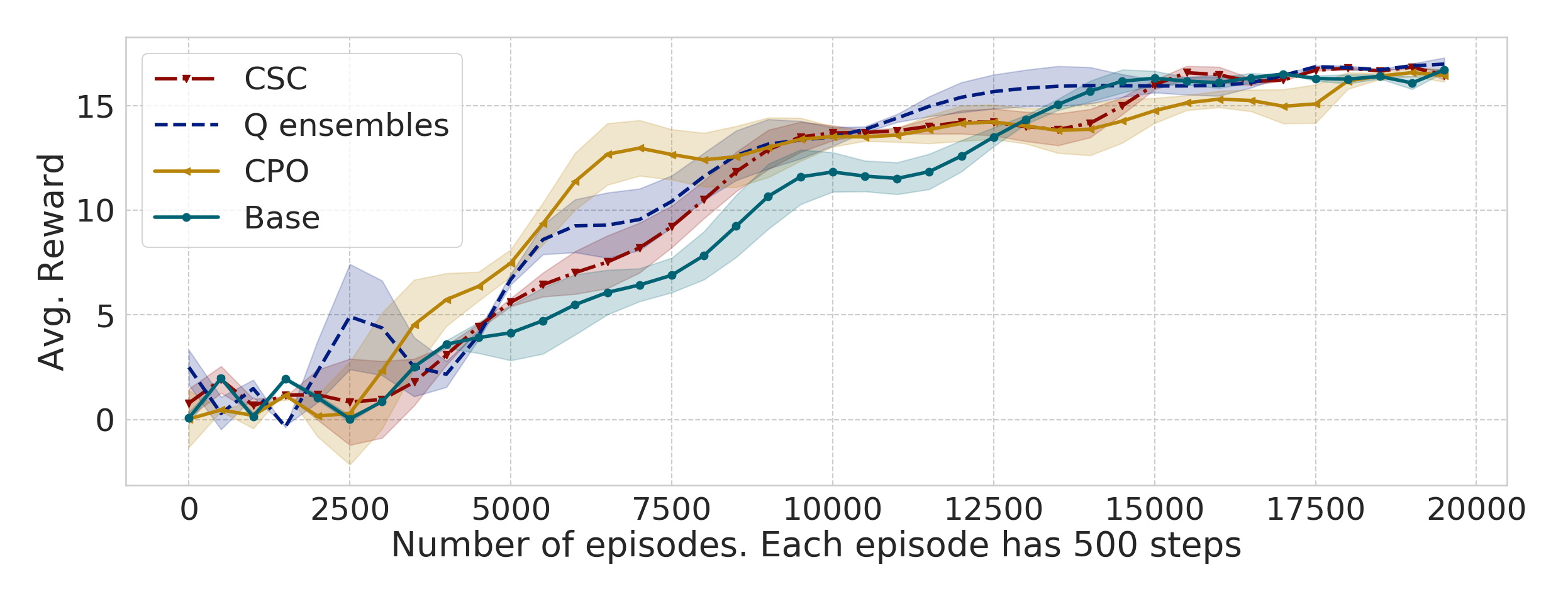}
    \caption{Car Rewards }
    \label{fig:pointmass}
        \end{subfigure}
         \begin{subfigure}[b]{0.33\textwidth}
              \centering
    \includegraphics[width=\textwidth]{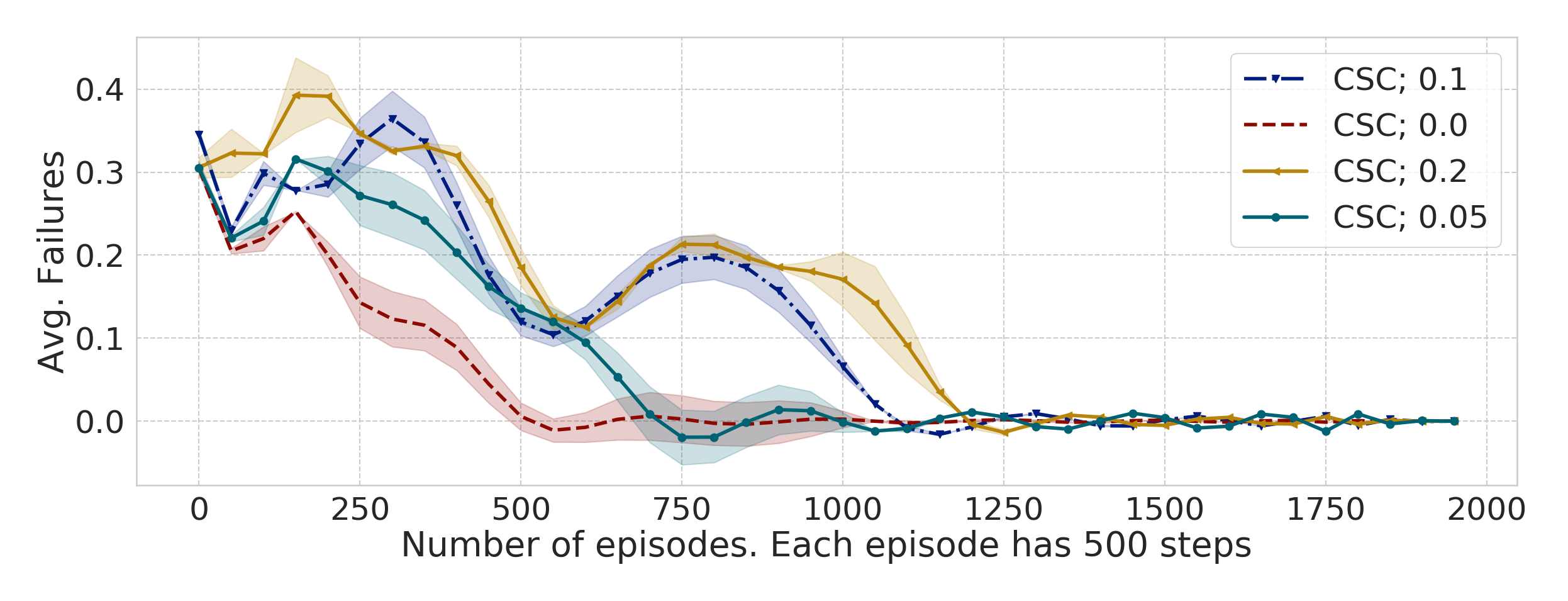}
    \caption{Car Avg. failures }
    \label{fig:manipulator}
        \end{subfigure}
        \begin{subfigure}[b]{0.33\textwidth}
              \centering
    \includegraphics[width=\textwidth]{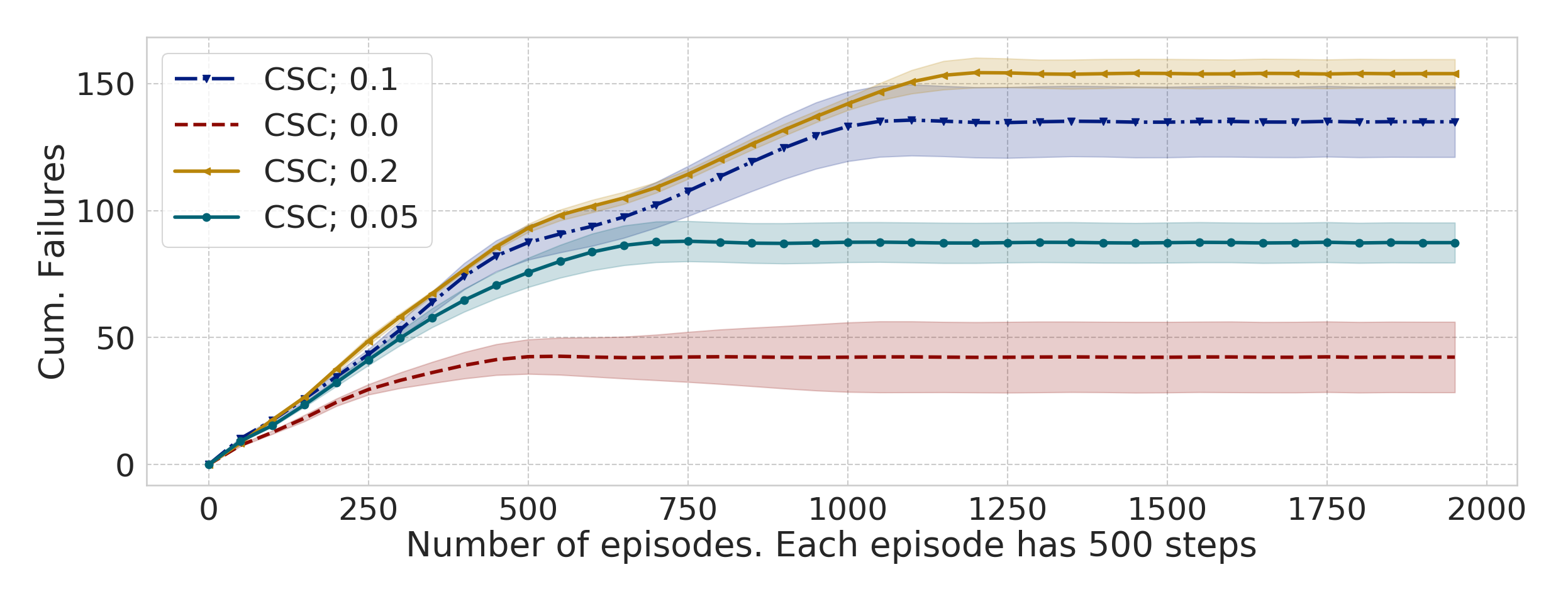}
    \caption{Car Cum. failures}
    \label{fig:manipulator}
        \end{subfigure}
        
        \begin{subfigure}[b]{0.33\textwidth}
              \centering
    \includegraphics[width=\textwidth]{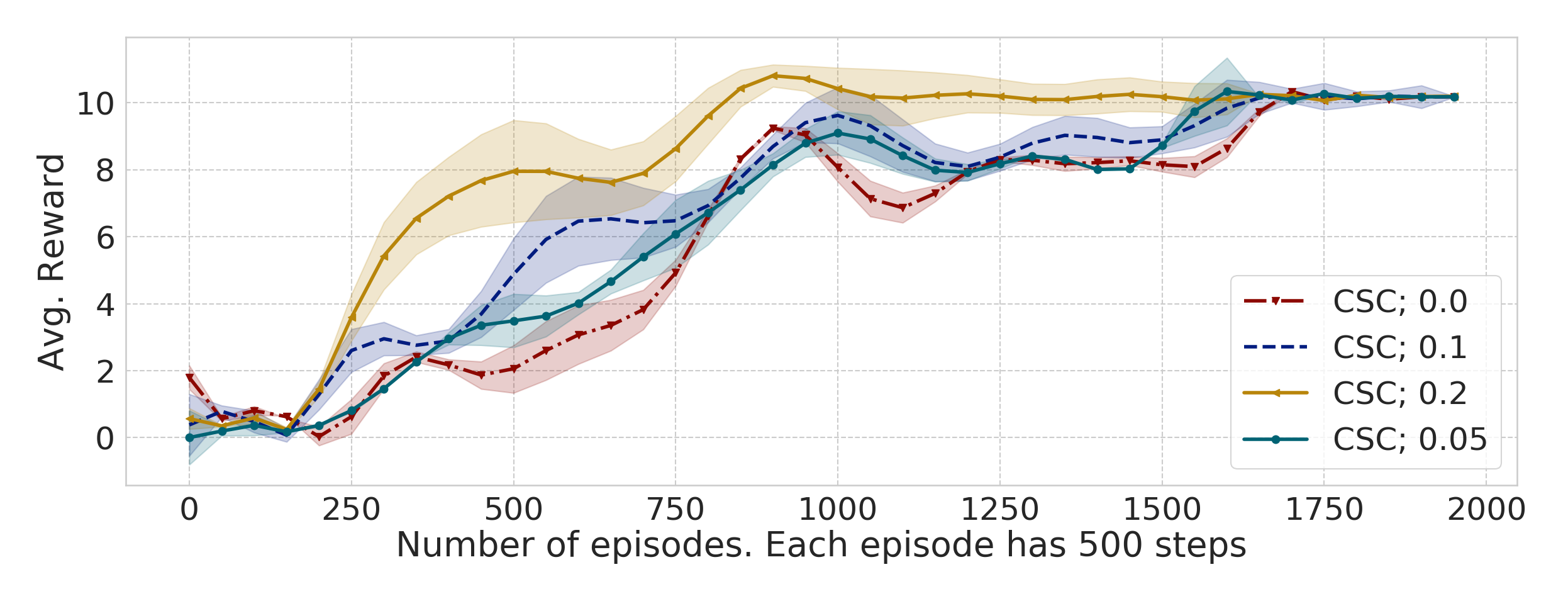}
    \caption{Panda Boundary Rewards}
    \label{fig:pointmass}
        \end{subfigure}
         \begin{subfigure}[b]{0.33\textwidth}
              \centering
    \includegraphics[width=\textwidth]{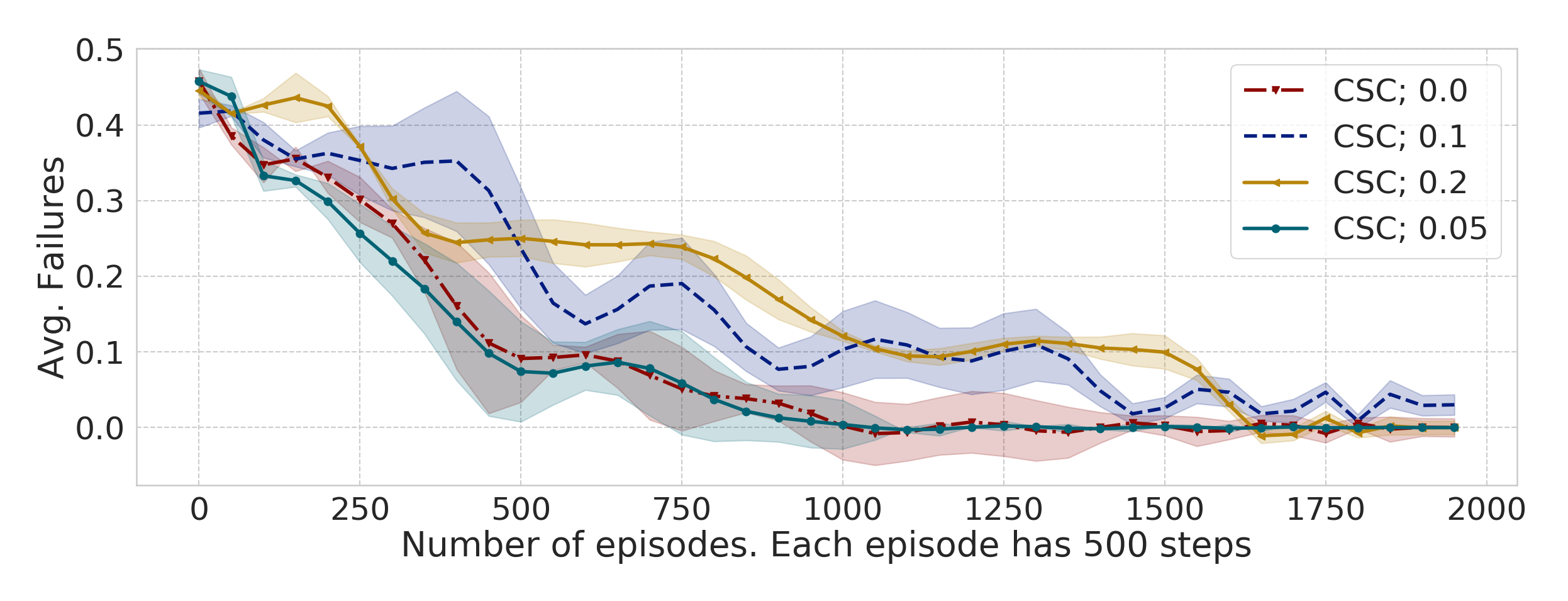}
    \caption{Panda Boundary Avg. failures}
    \label{fig:manipulator}
        \end{subfigure}
        \begin{subfigure}[b]{0.33\textwidth}
              \centering
    \includegraphics[width=\textwidth]{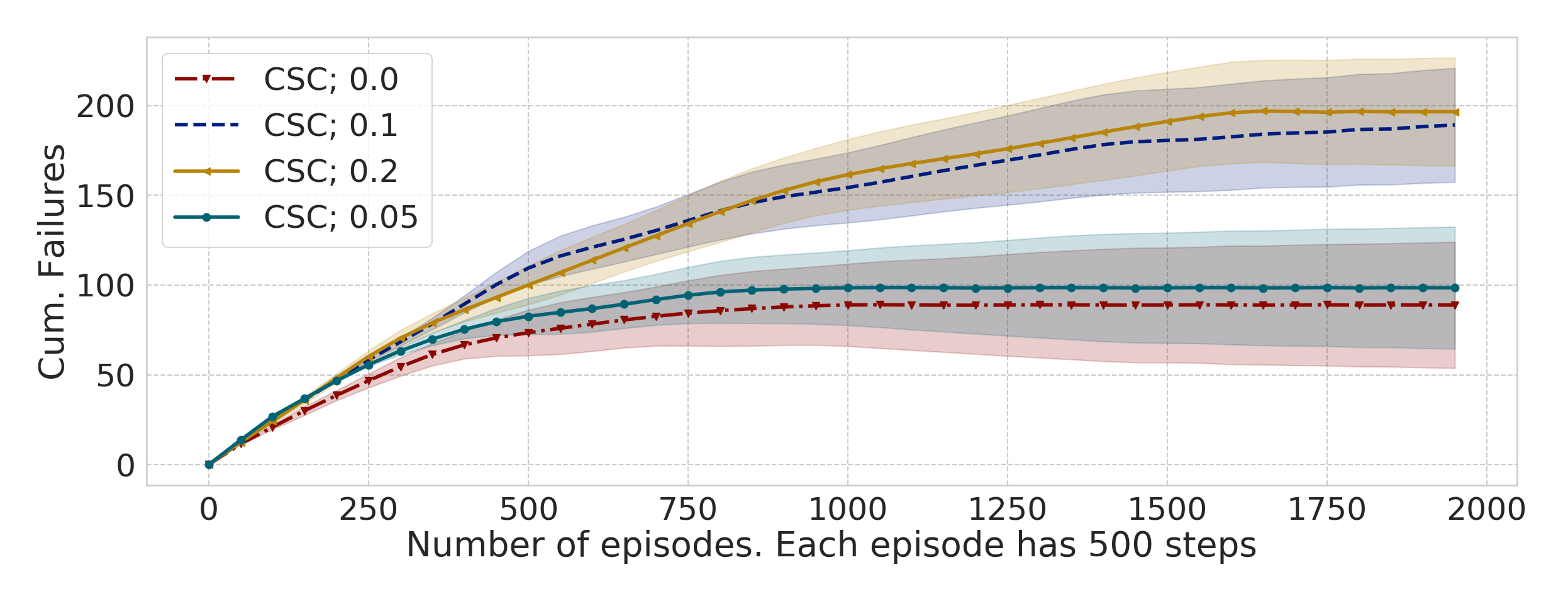}
    \caption{Panda Boundary Cum. failures }
    \label{fig:manipulator}
        \end{subfigure}
        
          \begin{subfigure}[b]{0.33\textwidth}
              \centering
    \includegraphics[width=\textwidth]{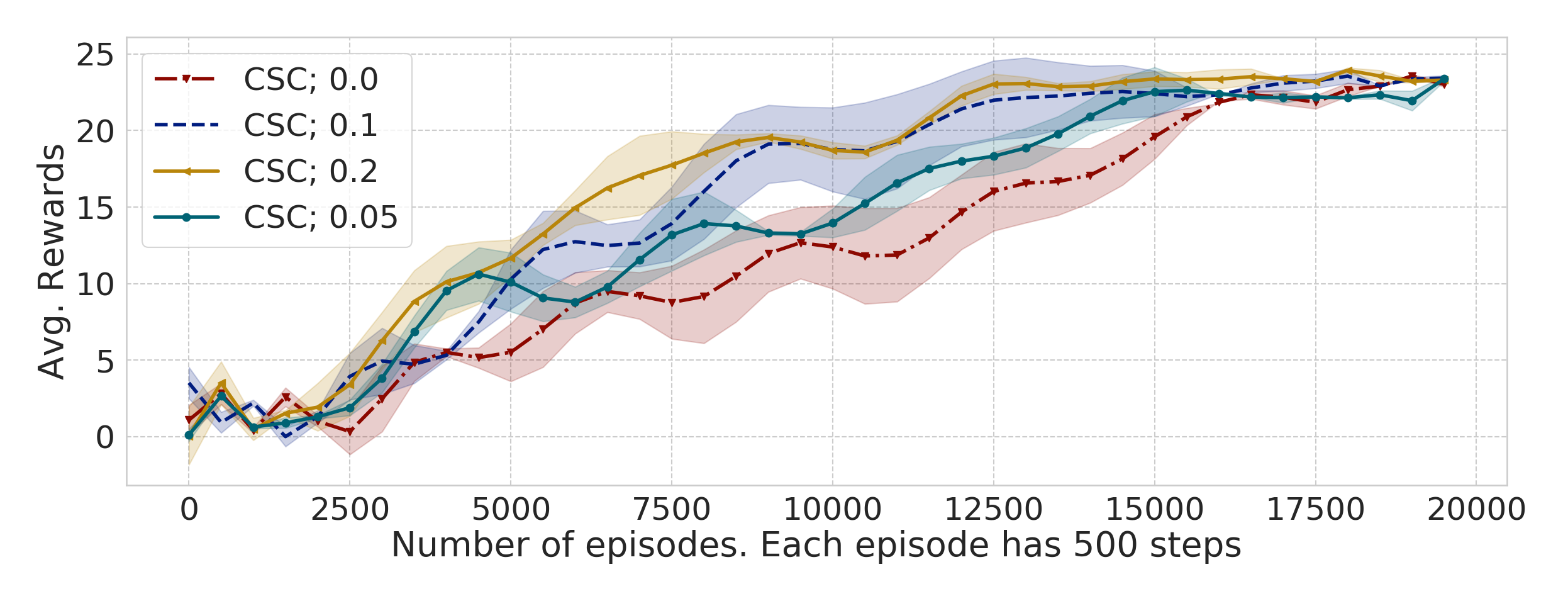}
    \caption{Laikago Rewards}
    \label{fig:pointmass}
        \end{subfigure}
         \begin{subfigure}[b]{0.33\textwidth}
              \centering
    \includegraphics[width=\textwidth]{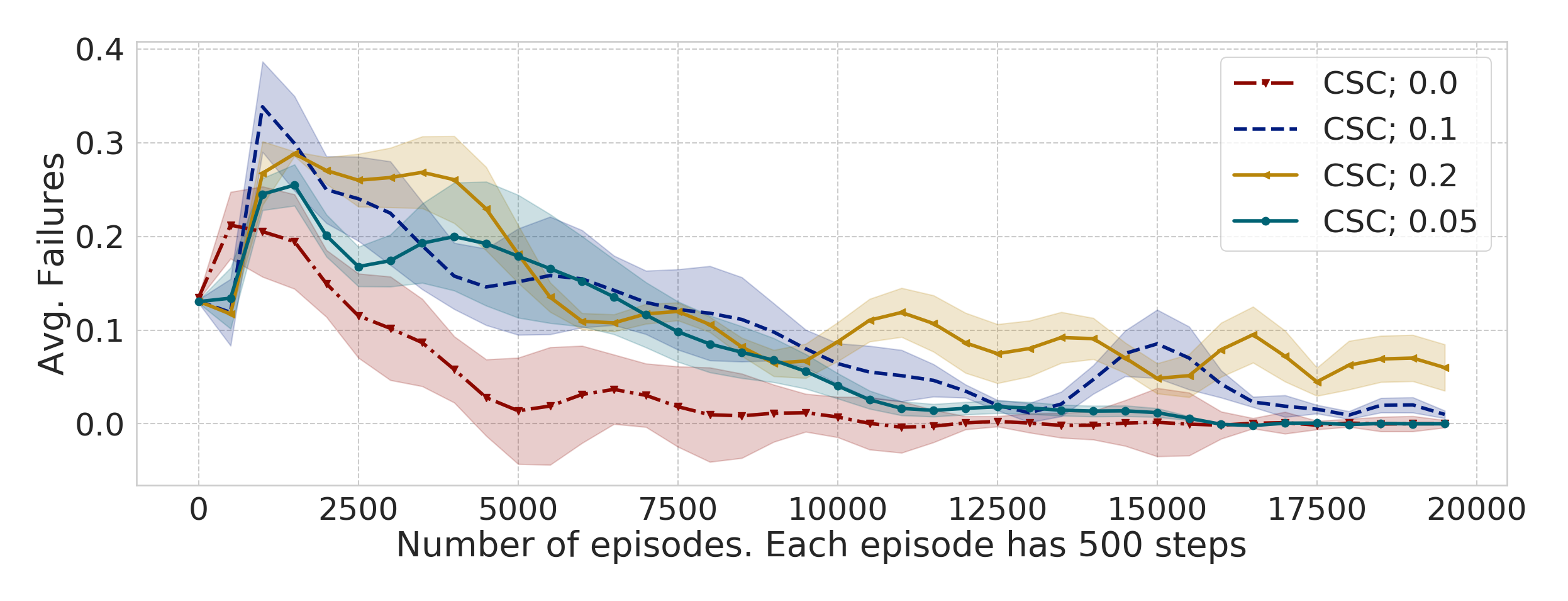}
    \caption{Laikago Avg. failures}
    \label{fig:manipulator}
        \end{subfigure}
        \begin{subfigure}[b]{0.33\textwidth}
              \centering
    \includegraphics[width=\textwidth]{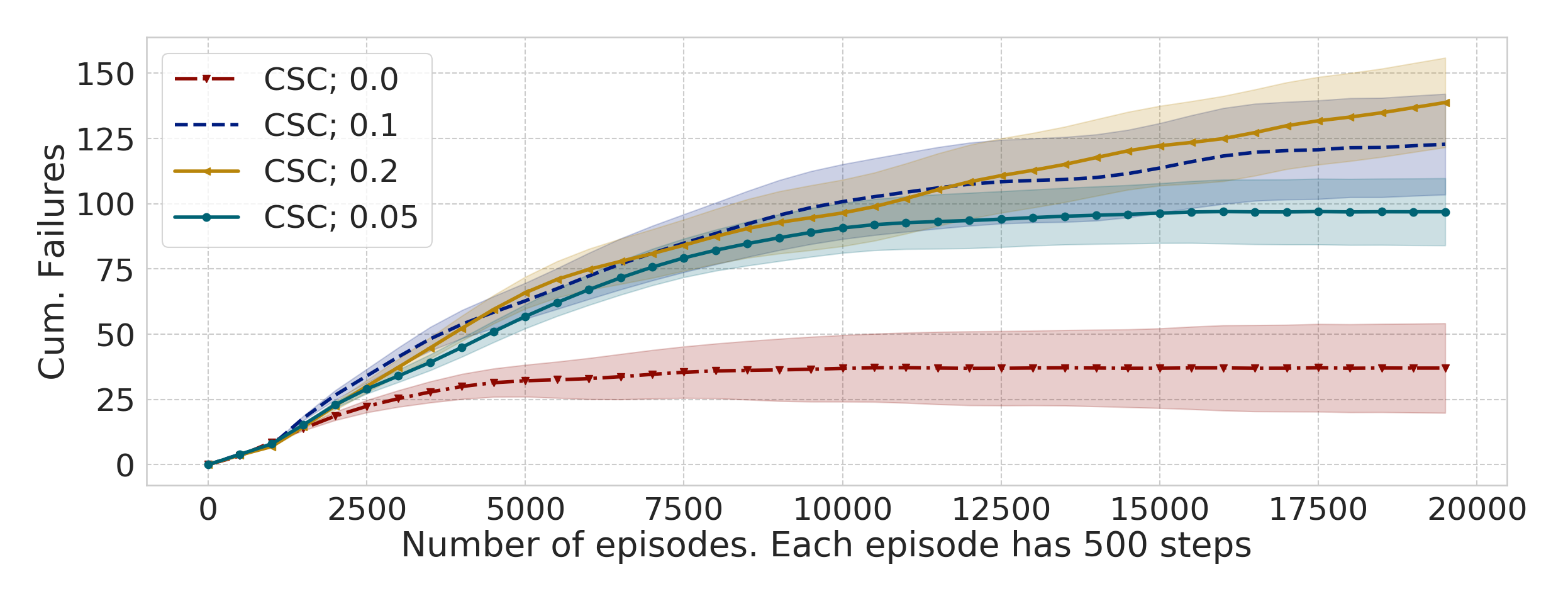}
    \caption{Laikago Cum. failures}
    \label{fig:manipulator}
        \end{subfigure}
     \caption{Results on the five environments we consider for our experiments. For each environment we plot the average task reward, the average episodic failures, and the cumulative episodic failures. All the plots are for our method with different safety thresholds $\chi$. From the plots it is evident that our method can naturally trade-off safety for task performance depending on how strict the safety threshold $\chi$ is set to. In particular, for a stricter $\chi$ (i.e. lesser value), the avg. failures decreases, and the task reward plot also has a slower convergence compared to a less strict threshold.}  
 \label{fig:tradeoffappendix}
\end{figure*}
\newpage
\subsection{Complete results for comparison with baselines}
\begin{figure*}[h!]
\centering
        \begin{subfigure}[b]{0.33\textwidth}
              \centering
    \includegraphics[width=\textwidth]{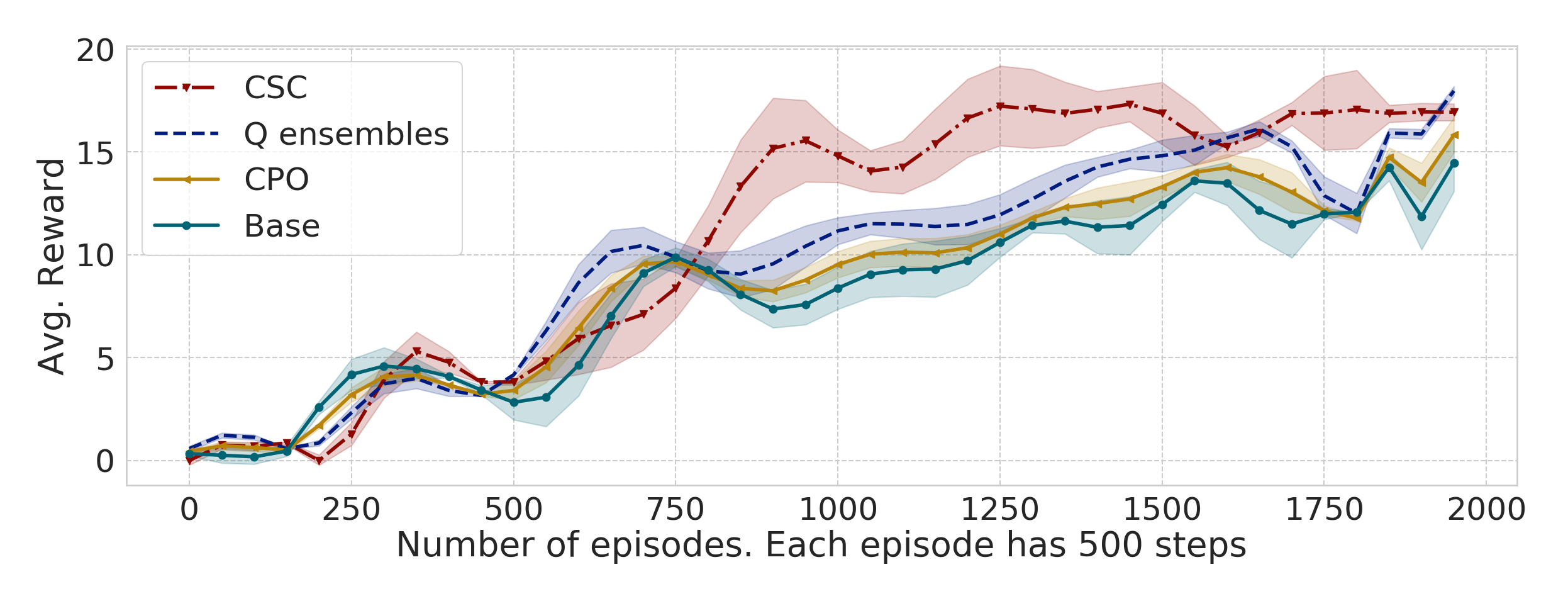}
    \caption{Point 2D Nav. Rewards}
    \label{fig:pointmass}
        \end{subfigure}
         \begin{subfigure}[b]{0.33\textwidth}
              \centering
    \includegraphics[width=\textwidth]{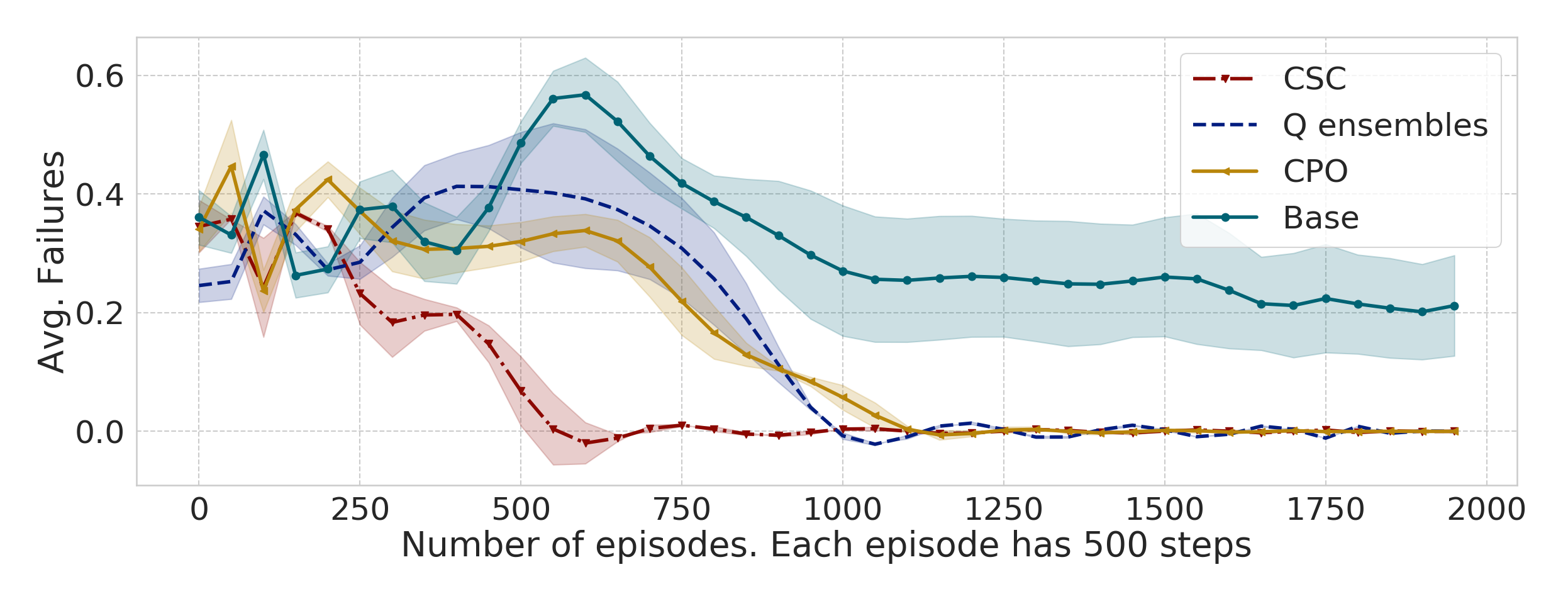}
    \caption{Point 2D Nav. Avg. failures}
    \label{fig:manipulator}
        \end{subfigure}
        \begin{subfigure}[b]{0.33\textwidth}
              \centering
    \includegraphics[width=\textwidth]{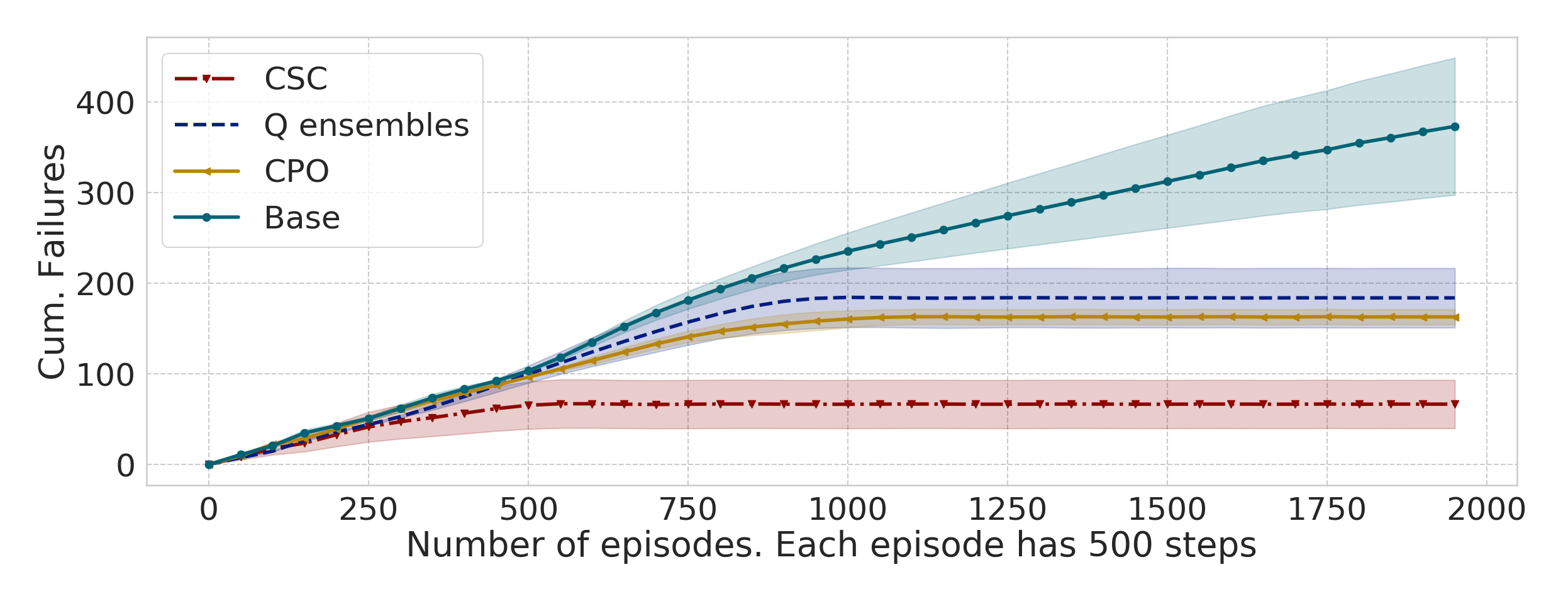}
    \caption{Point 2D Nav. Cum. failures}
    \label{fig:manipulator}
        \end{subfigure}
        
          \begin{subfigure}[b]{0.33\textwidth}
              \centering
    \includegraphics[width=\textwidth]{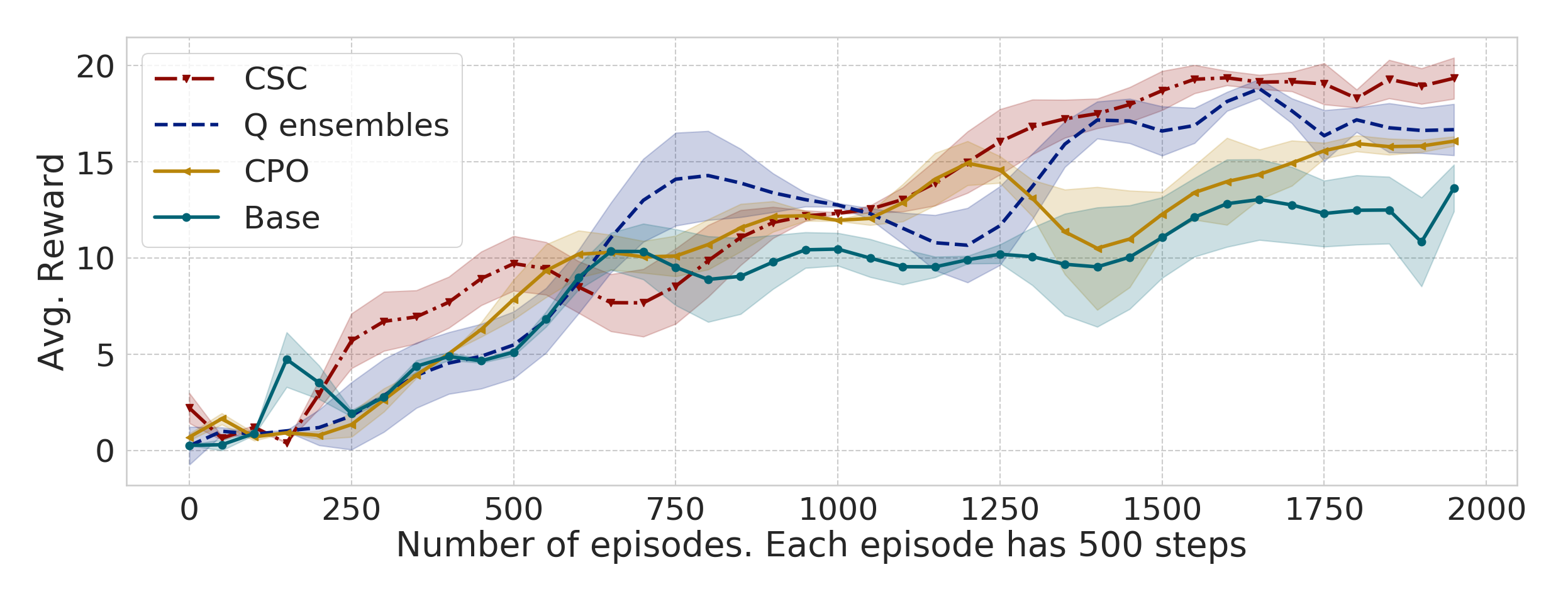}
    \caption{Panda Topple Rewards}
    \label{fig:pointmass}
        \end{subfigure}
         \begin{subfigure}[b]{0.33\textwidth}
              \centering
    \includegraphics[width=\textwidth]{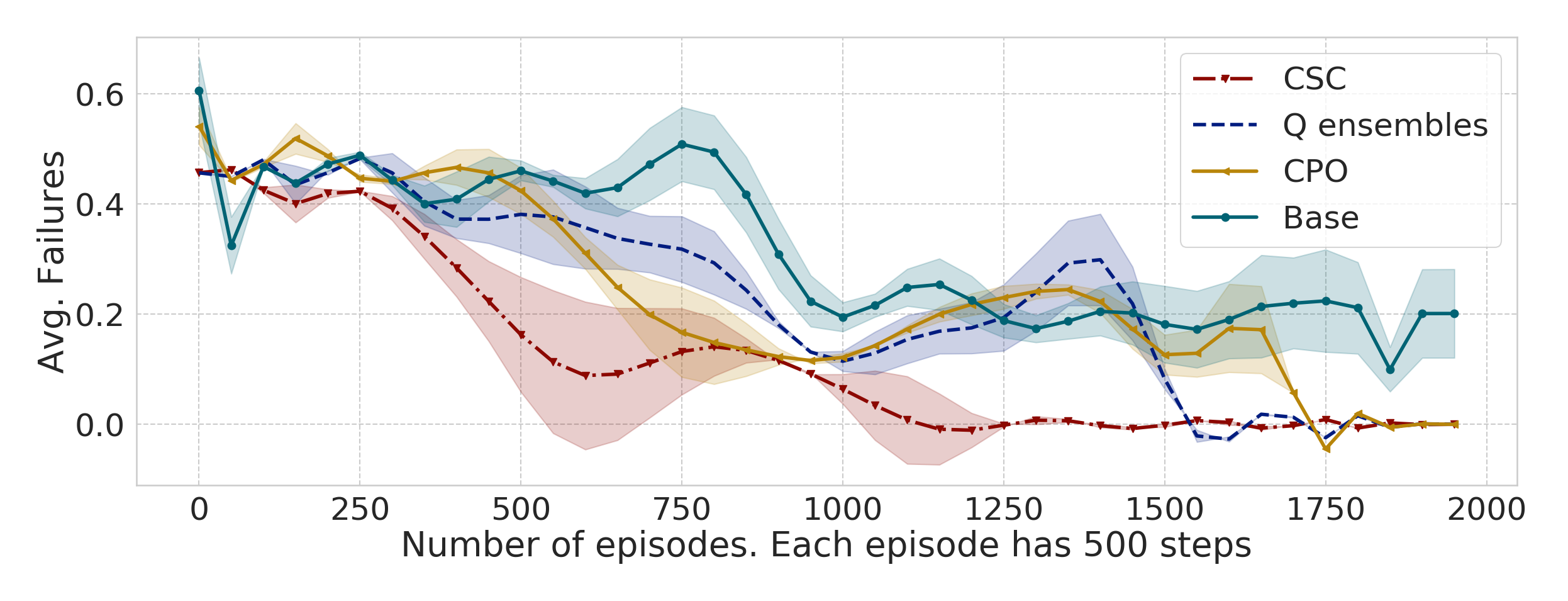}
    \caption{Panda Topple Avg. failures}
    \label{fig:manipulator}
        \end{subfigure}
        \begin{subfigure}[b]{0.33\textwidth}
              \centering
    \includegraphics[width=\textwidth]{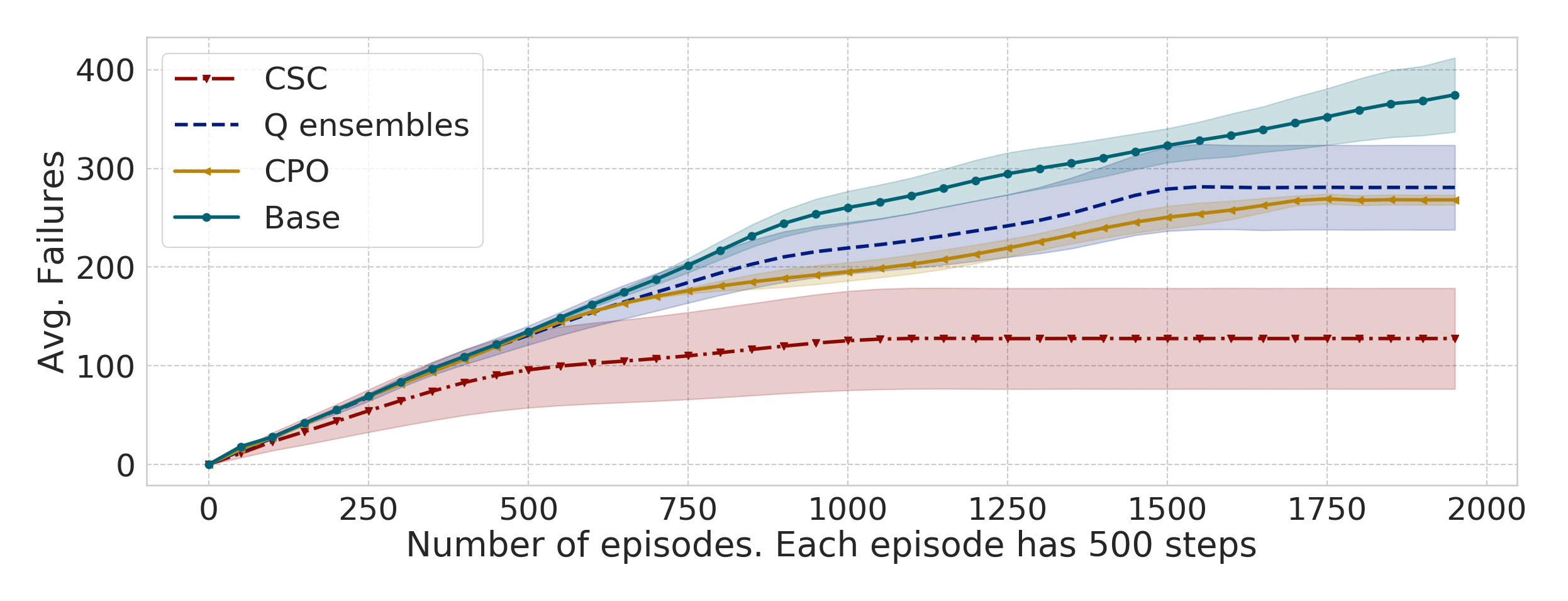}
    \caption{Panda Topple Cum. failures}
    \label{fig:manipulator}
        \end{subfigure}
        
              \begin{subfigure}[b]{0.33\textwidth}
              \centering
    \includegraphics[width=\textwidth]{figures/appendixfigs/carrew.png}
    \caption{Car Rewards}
    \label{fig:pointmass}
        \end{subfigure}
         \begin{subfigure}[b]{0.33\textwidth}
              \centering
    \includegraphics[width=\textwidth]{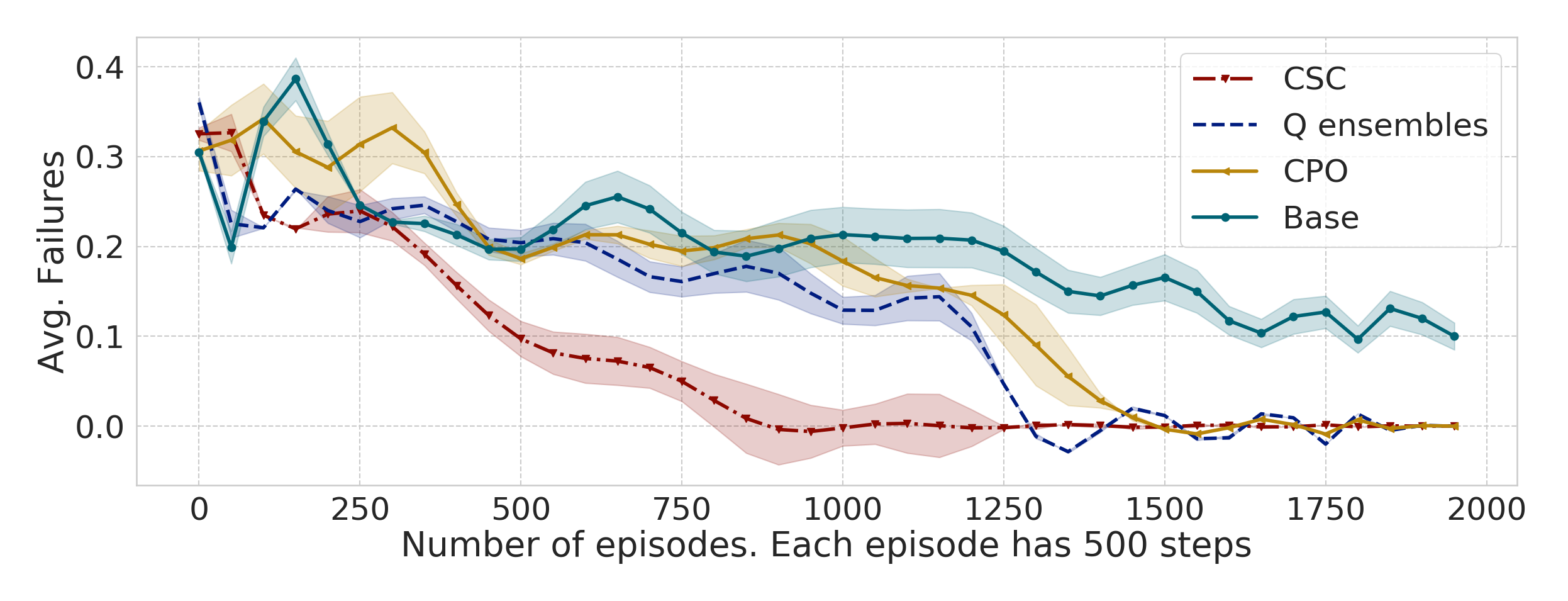}
    \caption{Car Avg. failures}
    \label{fig:manipulator}
        \end{subfigure}
        \begin{subfigure}[b]{0.33\textwidth}
              \centering
    \includegraphics[width=\textwidth]{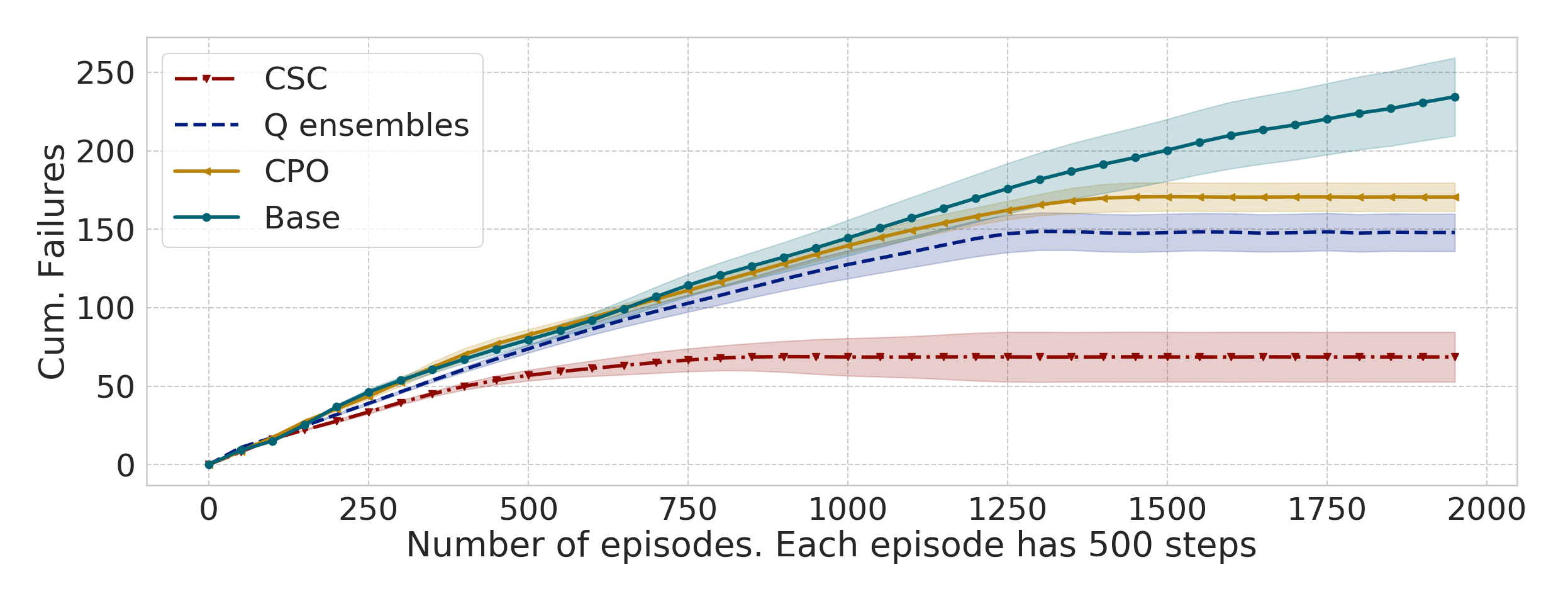}
    \caption{Car Cum. failures}
    \label{fig:manipulator}
        \end{subfigure}
        
        \begin{subfigure}[b]{0.33\textwidth}
              \centering
    \includegraphics[width=\textwidth]{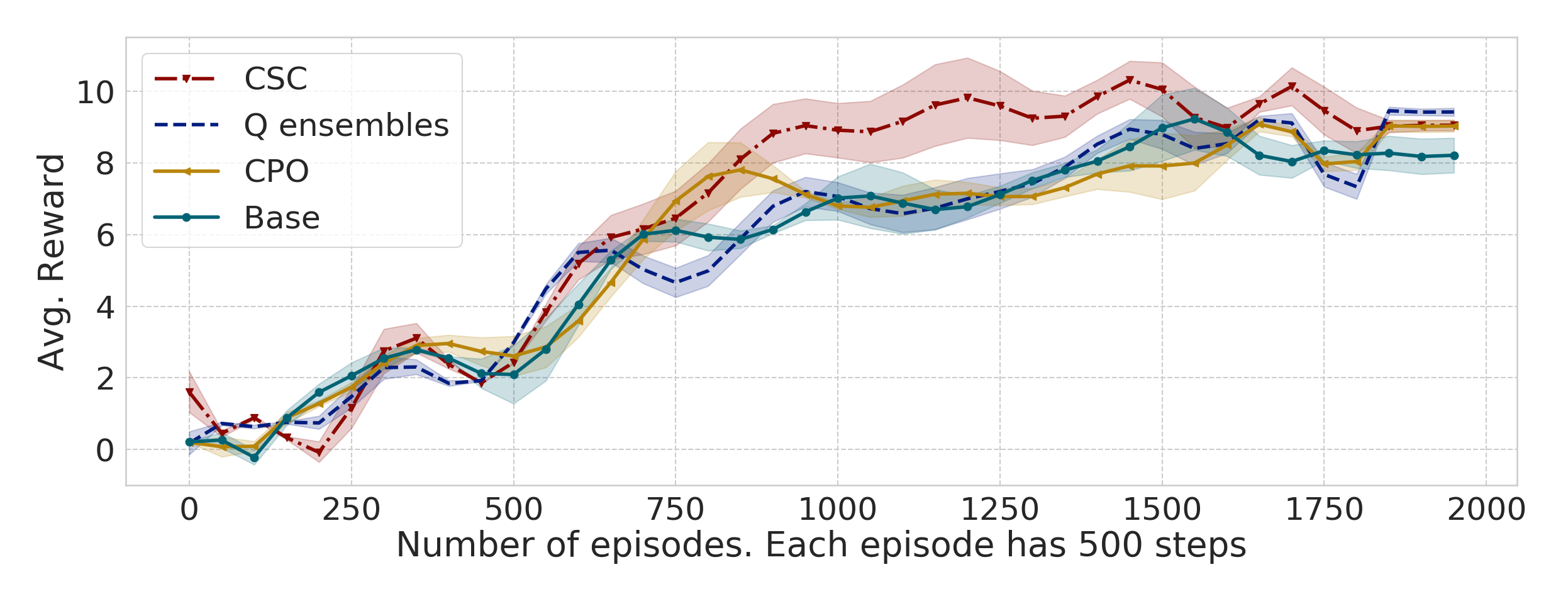}
    \caption{Panda Boundary Rewards}
    \label{fig:pointmass}
        \end{subfigure}
         \begin{subfigure}[b]{0.33\textwidth}
              \centering
    \includegraphics[width=\textwidth]{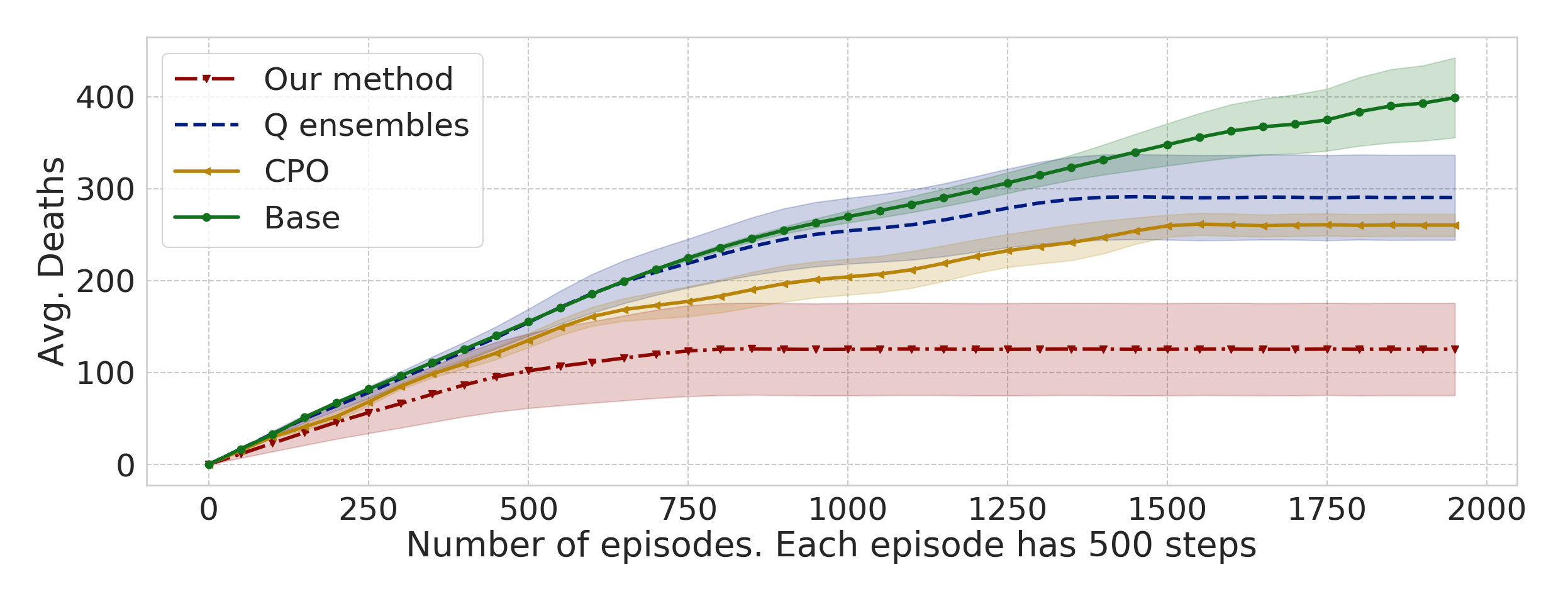}
    \caption{Panda Boundary Avg. failures}
    \label{fig:manipulator}
        \end{subfigure}
        \begin{subfigure}[b]{0.33\textwidth}
              \centering
    \includegraphics[width=\textwidth]{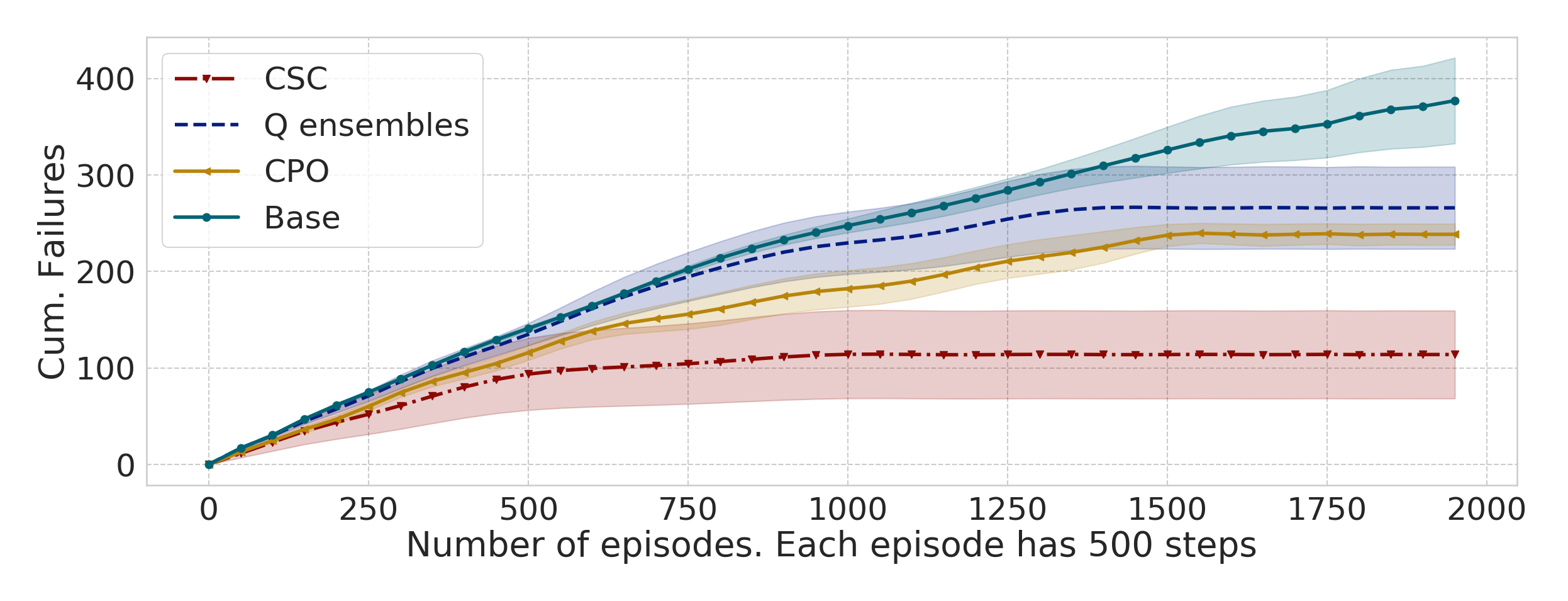}
    \caption{Panda Boundary Cum. failures}
    \label{fig:manipulator}
        \end{subfigure}
        
          \begin{subfigure}[b]{0.33\textwidth}
              \centering
    \includegraphics[width=\textwidth]{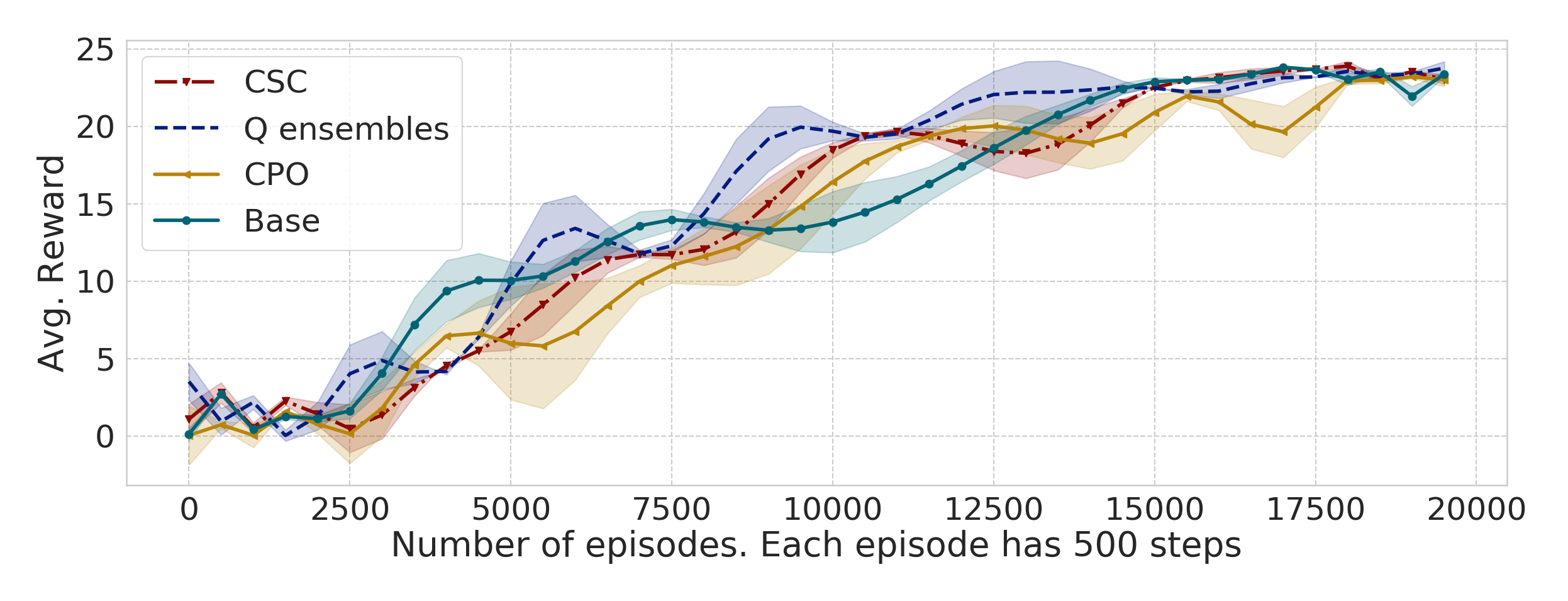}
    \caption{Laikago Rewards }
    \label{fig:pointmass}
        \end{subfigure}
         \begin{subfigure}[b]{0.33\textwidth}
              \centering
    \includegraphics[width=\textwidth]{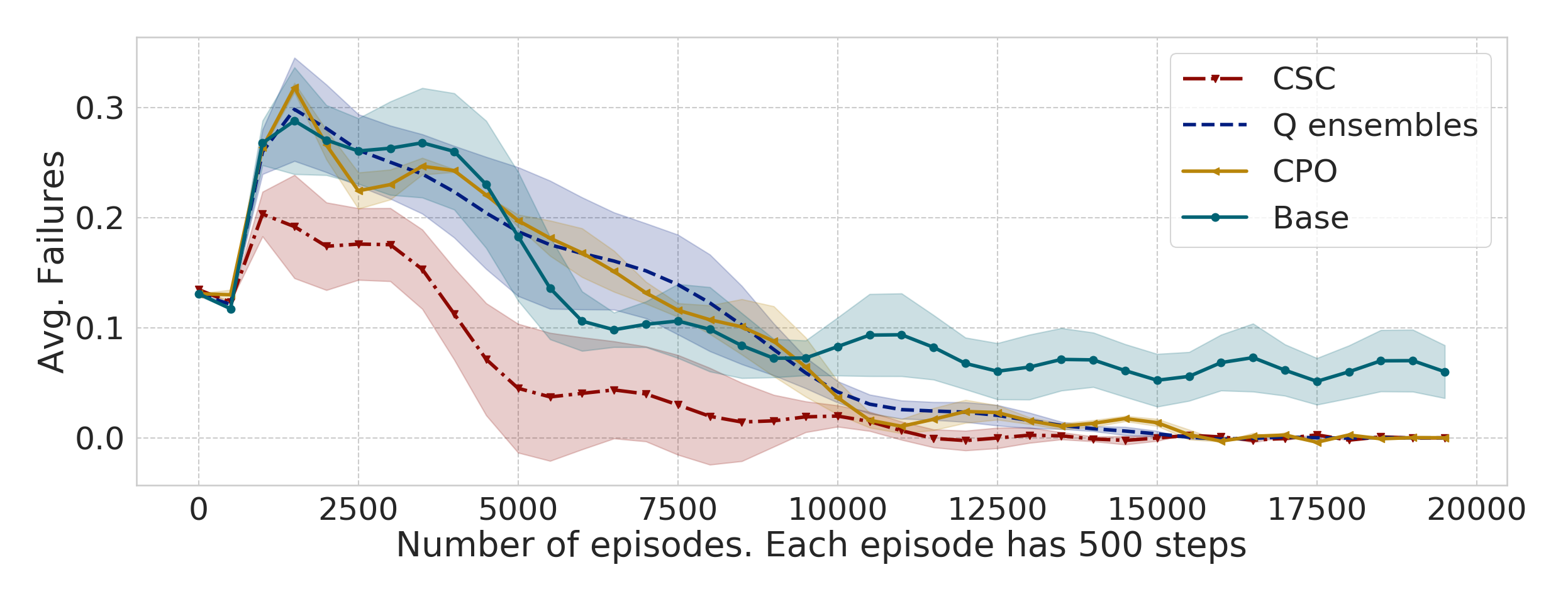}
    \caption{Laikago Avg. failures}
    \label{fig:manipulator}
        \end{subfigure}
        \begin{subfigure}[b]{0.33\textwidth}
              \centering
    \includegraphics[width=\textwidth]{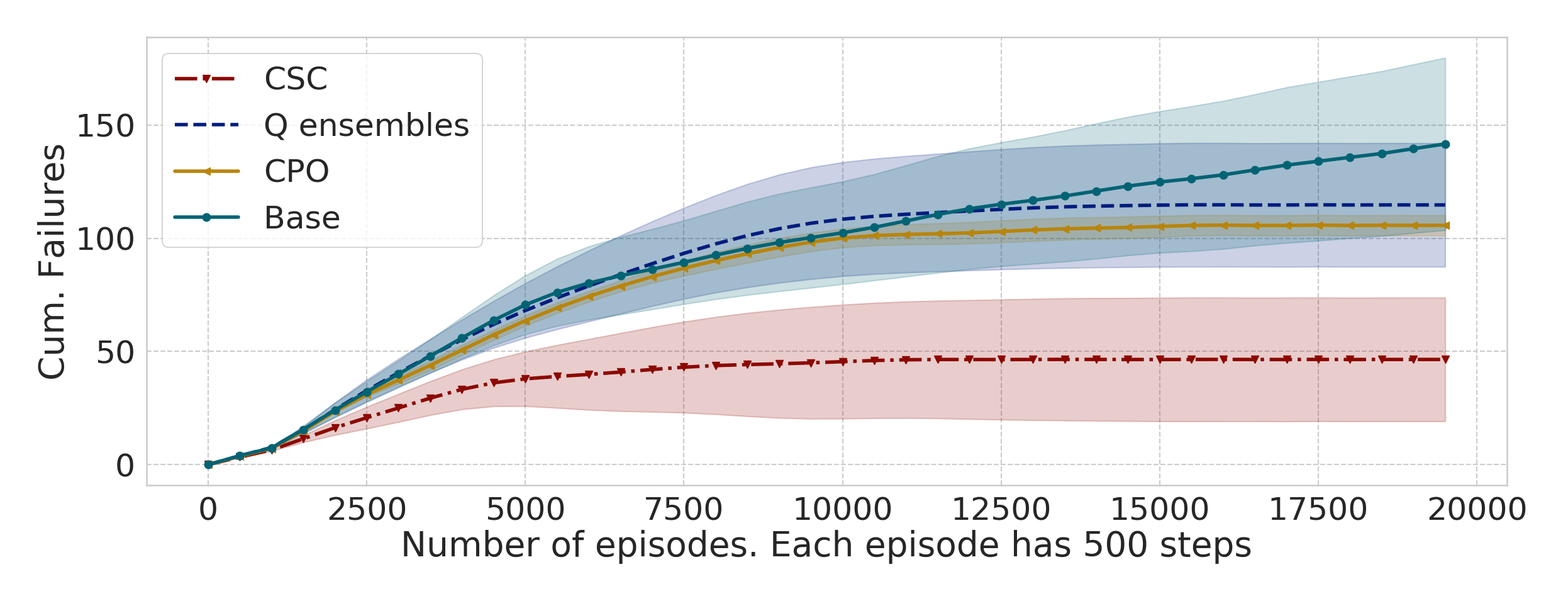}
    \caption{Laikago Cum. failures}
    \label{fig:manipulator}
        \end{subfigure}
     \caption{Results on the five environments we consider for our experiments. For each environment we plot the average task reward, the average episodic failures, and the cumulative episodic failures. Since Laikago is an \textit{extremely} challenging task, for all the baselines, we initialize the agent's policy with a controller that has been trained to keep the agent standing, while not in motion. The task then is to bootstrap learning so that the agent is able to remain standing while walking as well. The safety threshold $\chi=0.05$ for all the baselines in all the environments.}  
 \label{fig:sim_resultsappendix}
\end{figure*}

\subsection{Comparison between two unconstrained RL algorithms}
\begin{figure*}[h!]
\centering
        \begin{subfigure}[b]{0.33\textwidth}
              \centering
    \includegraphics[width=\textwidth]{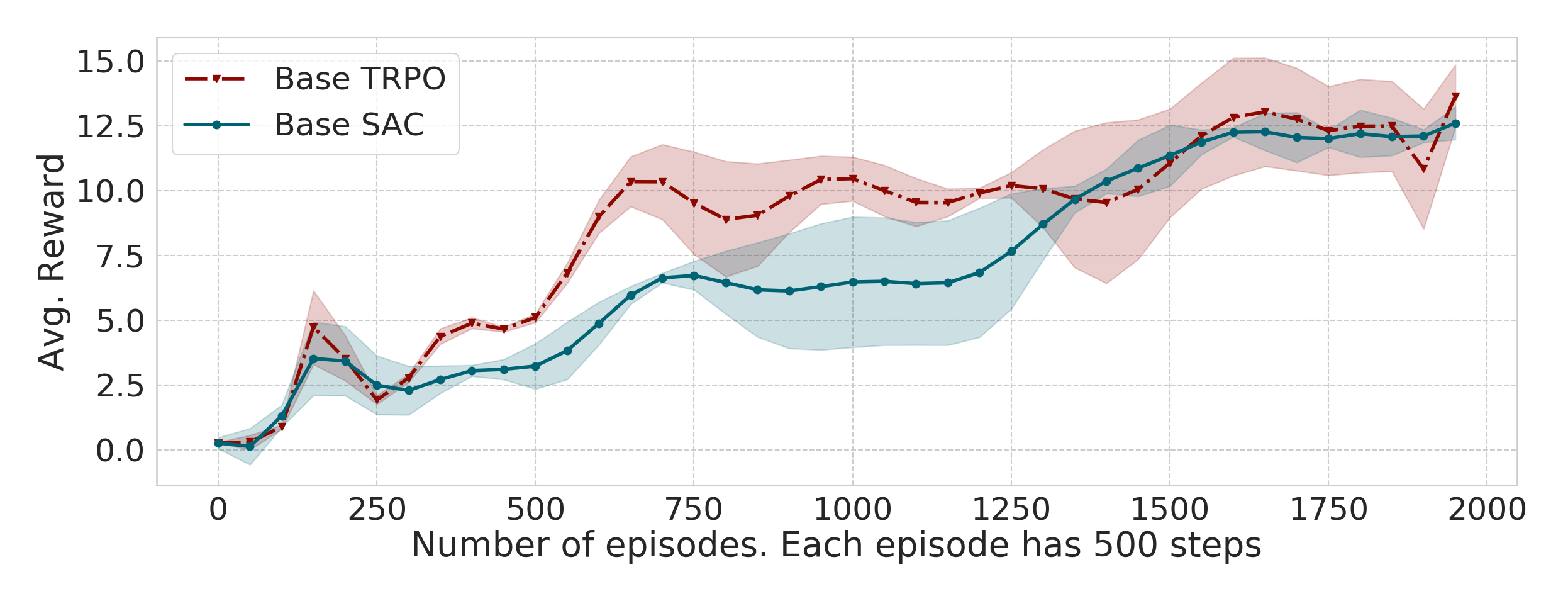}
    \caption{Point 2D Nav. Rewards}
    \label{fig:pointmass}
        \end{subfigure}
         \begin{subfigure}[b]{0.33\textwidth}
              \centering
    \includegraphics[width=\textwidth]{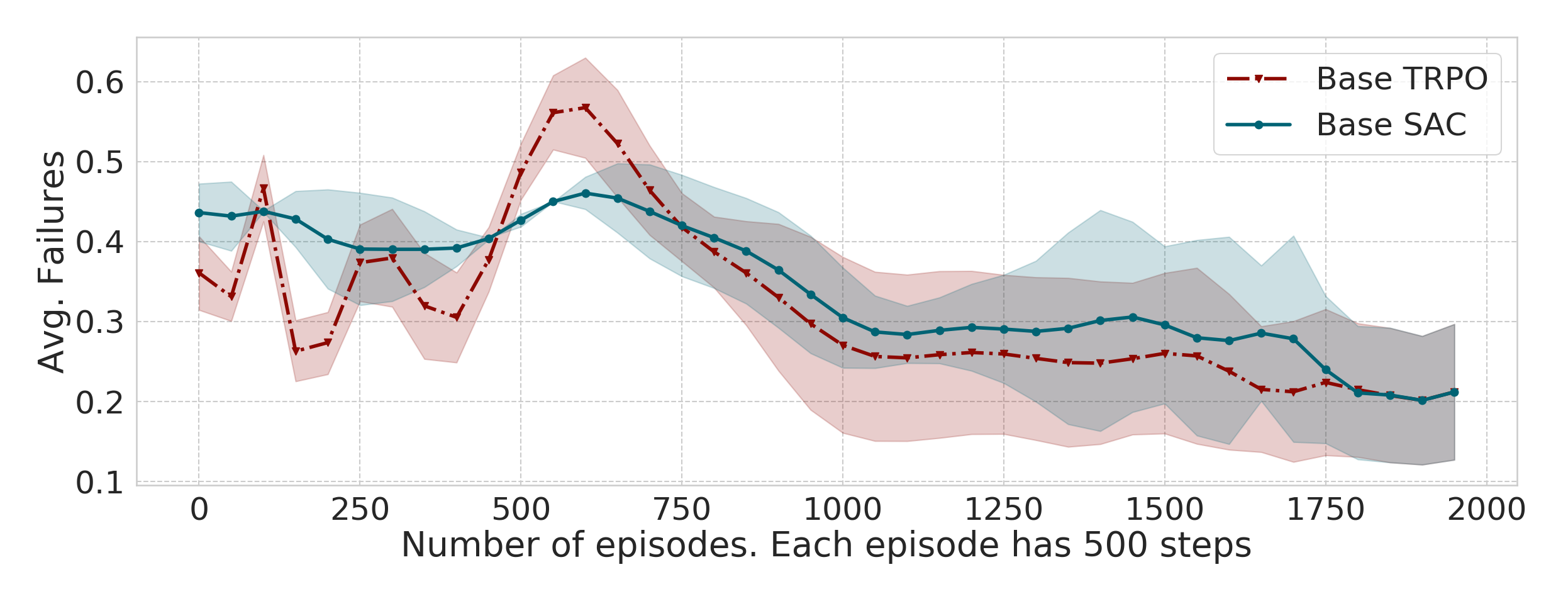}
    \caption{Point 2D Nav. Avg. failures}
    \label{fig:manipulator}
        \end{subfigure}
        \begin{subfigure}[b]{0.33\textwidth}
              \centering
    \includegraphics[width=\textwidth]{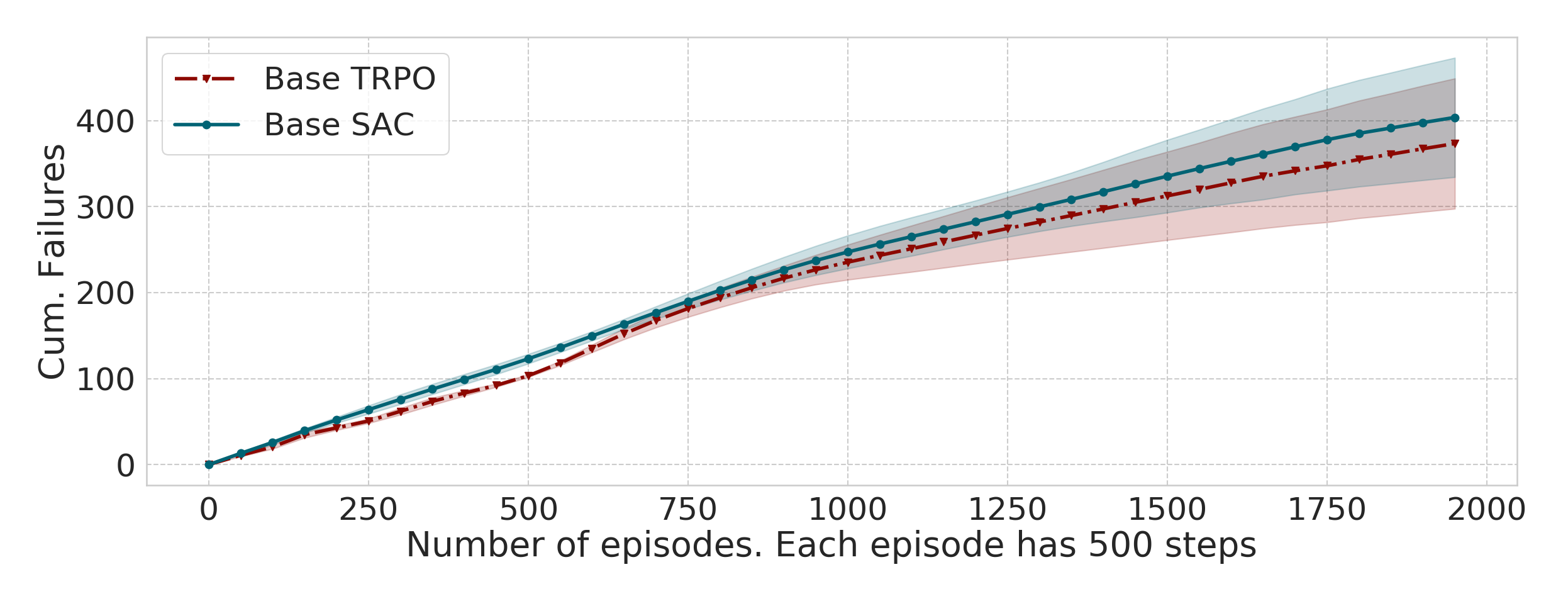}
    \caption{Point 2D Nav. Cum. failures}
    \label{fig:manipulator}
        \end{subfigure}
     \caption{Comparison between two RL algorithms TRPO~\citep{trpo}, and SAC~\citep{sac} in the Point agent 2D Navigation environment. We see that TRPO has slightly faster convergence in terms of task rewards and also slightly lower average and cumulative failures, and so consider TRPO as the \textit{Base} RL baseline in Figures~\ref{fig:sim_results} and~\ref{fig:tradeoff}.}  
 \label{fig:trposac}
\end{figure*}

\newpage
\subsection{Seeding the replay buffer with very few samples}
\label{sec:seeding}
In order to investigate if we can leverage some offline user-specified data to lower the number of failures during training even further, we seed the replay buffer of \algoName and the baselines with 1000 tuples in the Car navigation environment. The 1000 tuples are marked as \textit{safe} or \textit{unsafe} depending on whether the car is inside a trap location or not in those states. If our method can leverage such manually marked offline data (in small quantity as this marking procedure is not cheap), then we have a more practical method that can be deployed in situations where the cost of visiting an unsafe state is significantly prohibitive. Note that this is different from the setting of offline/batch RL, where the entire training data is assumed to be available offline - in this experimental setting we consider very few tuples (only 1000). Figure~\ref{fig:seeding} shows that our method can successfully leverage this small offline data to bootstrap the learning of the safety critic and significantly lower the average failures. We attribute this to training the safety critic conservatively through CQL, which is an effective method for handling offline data.

\begin{figure*}[h!]
\centering
        \begin{subfigure}[b]{0.33\textwidth}
              \centering
    \includegraphics[width=\textwidth]{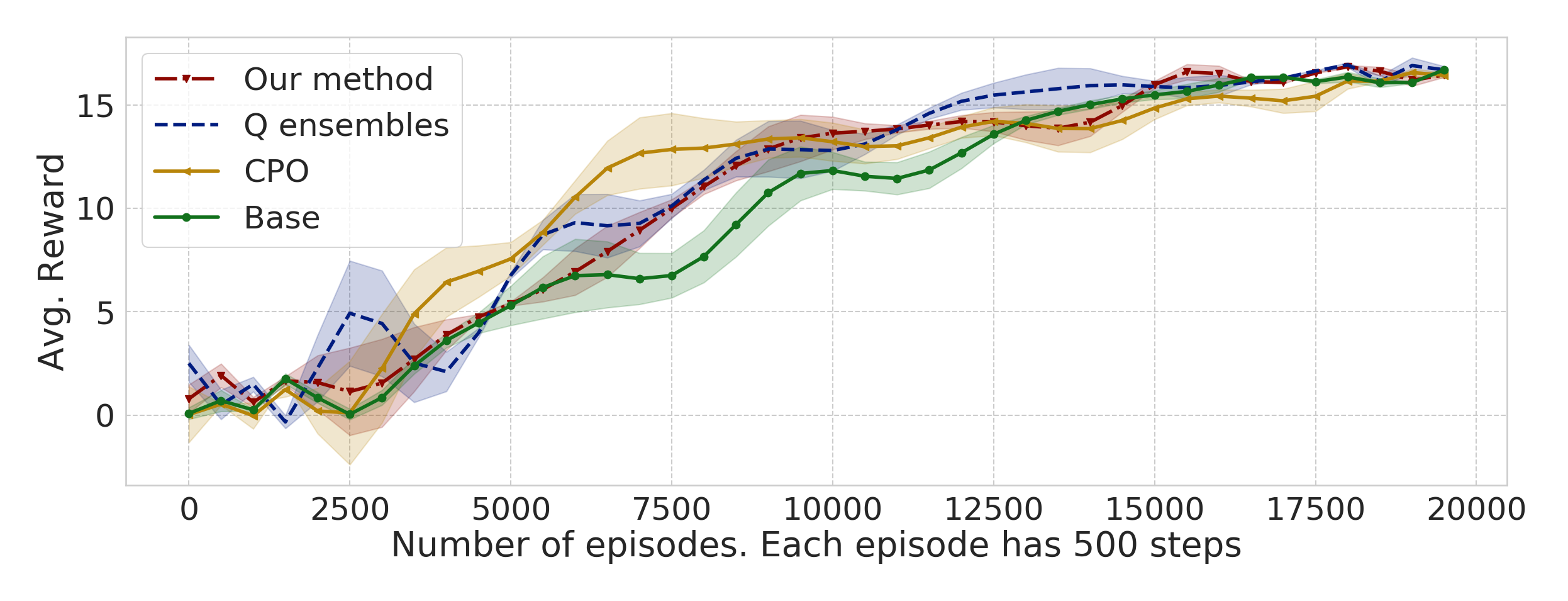}
    \caption{Car Nav. Rewards}
    \label{fig:pointmass}
        \end{subfigure}
         \begin{subfigure}[b]{0.33\textwidth}
              \centering
    \includegraphics[width=\textwidth]{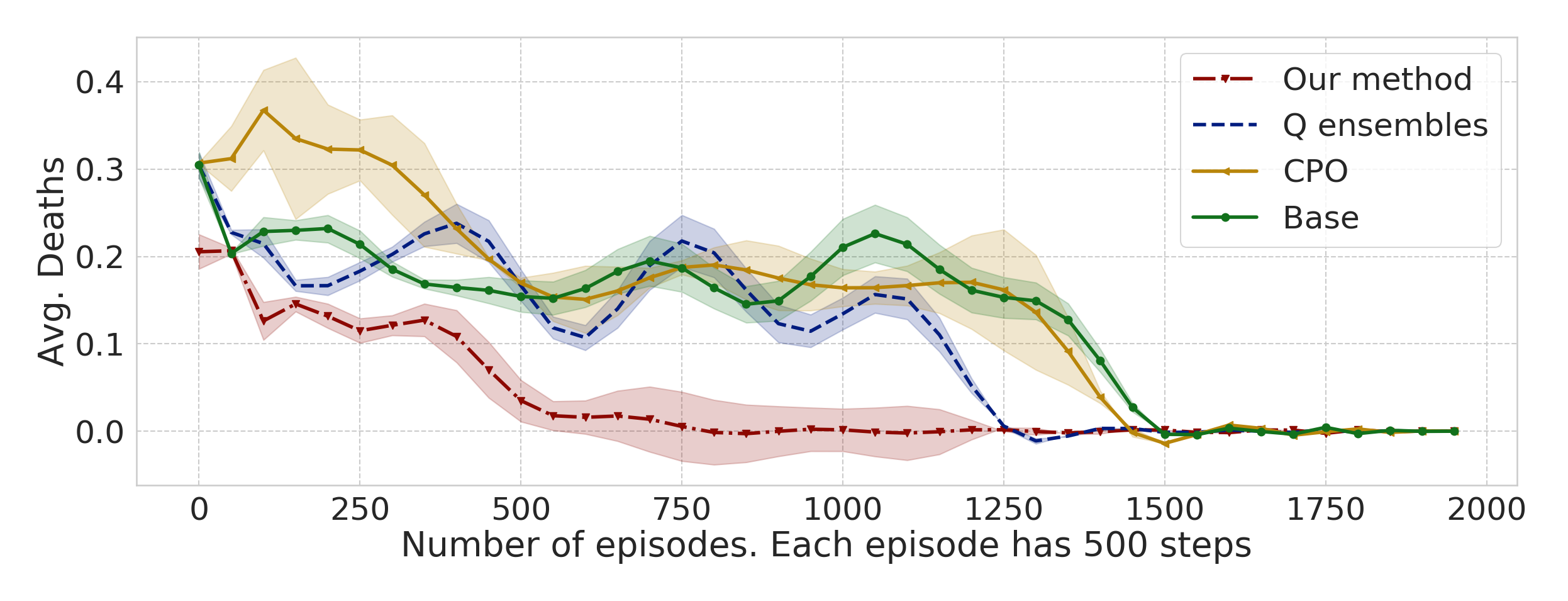}
    \caption{Car Nav. Avg. failures}
    \label{fig:manipulator}
        \end{subfigure}
        \begin{subfigure}[b]{0.33\textwidth}
              \centering
    \includegraphics[width=\textwidth]{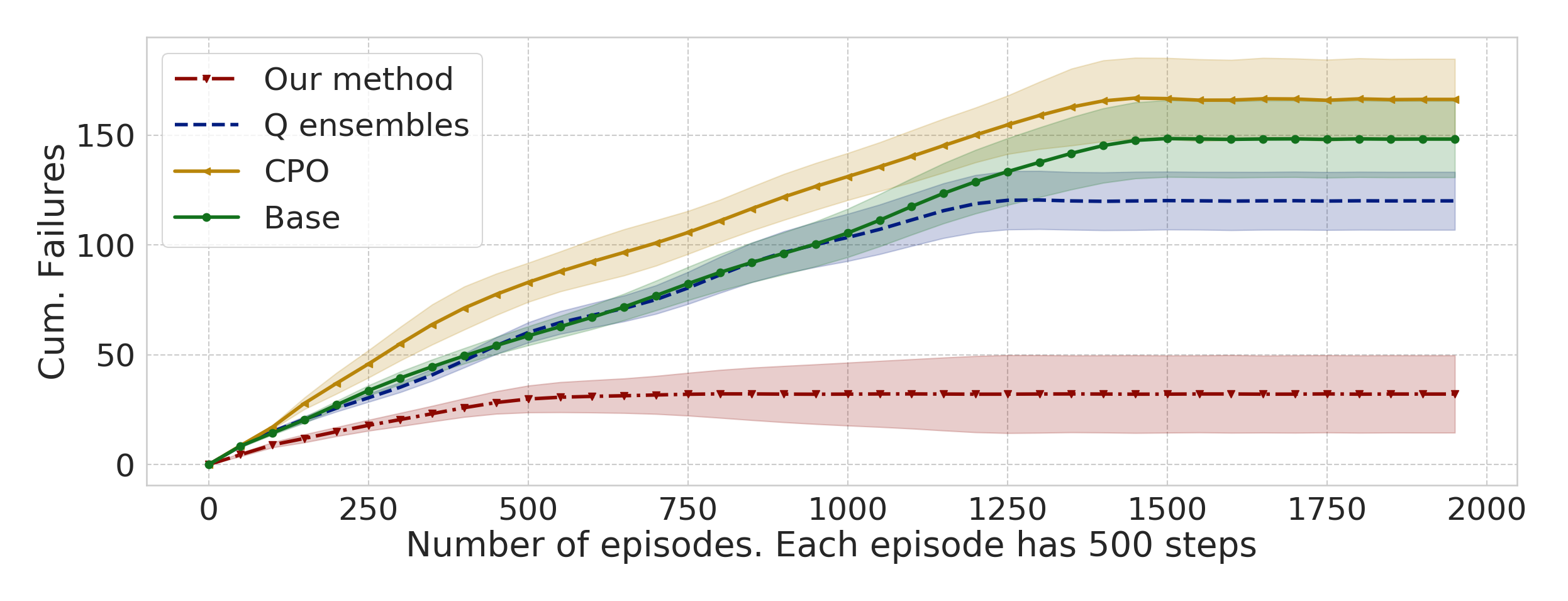}
    \caption{Car Nav. Cum. failures}
    \label{fig:manipulator}
        \end{subfigure}
     \caption{\textbf{Results on the Car navigation environment after seeding the replay buffer with 1000 tuples.} Although all the baselines improve by seeding, in terms of lower failure rates compared to Figure~\ref{fig:sim_results}, we observe that \algoName is able to particularly leverage the offline seeding data and significantly lower the average and cumulative failures during training.}  
 \label{fig:seeding}
\end{figure*}

\subsection{Car Navigation with traps using continuous safety signal}
\textcolor{black}{In this section we consider the case of a continuous safety signal to show that CSC can learn constraints in this setting as well, and minimize failures significantly more compared to the baselines. The car navigation with traps provides a natural setting for this, because every time the agent enters a trap region, it can receive a penalty (and its health counter decreases by 1), and a catastrophic failure occurs when the health counter drops to 0. By training the safety critic with this continuous failure signal instead of on binary failure signals, we can capture a notion of \textit{impending failure} and hence aim to be more safe. Note that this setting is a strictly easier evaluation setting than what we previously considered with the binary safety signal. }
\begin{figure*}[h!]
\centering
        \begin{subfigure}[b]{0.33\textwidth}
              \centering
    \includegraphics[width=\textwidth]{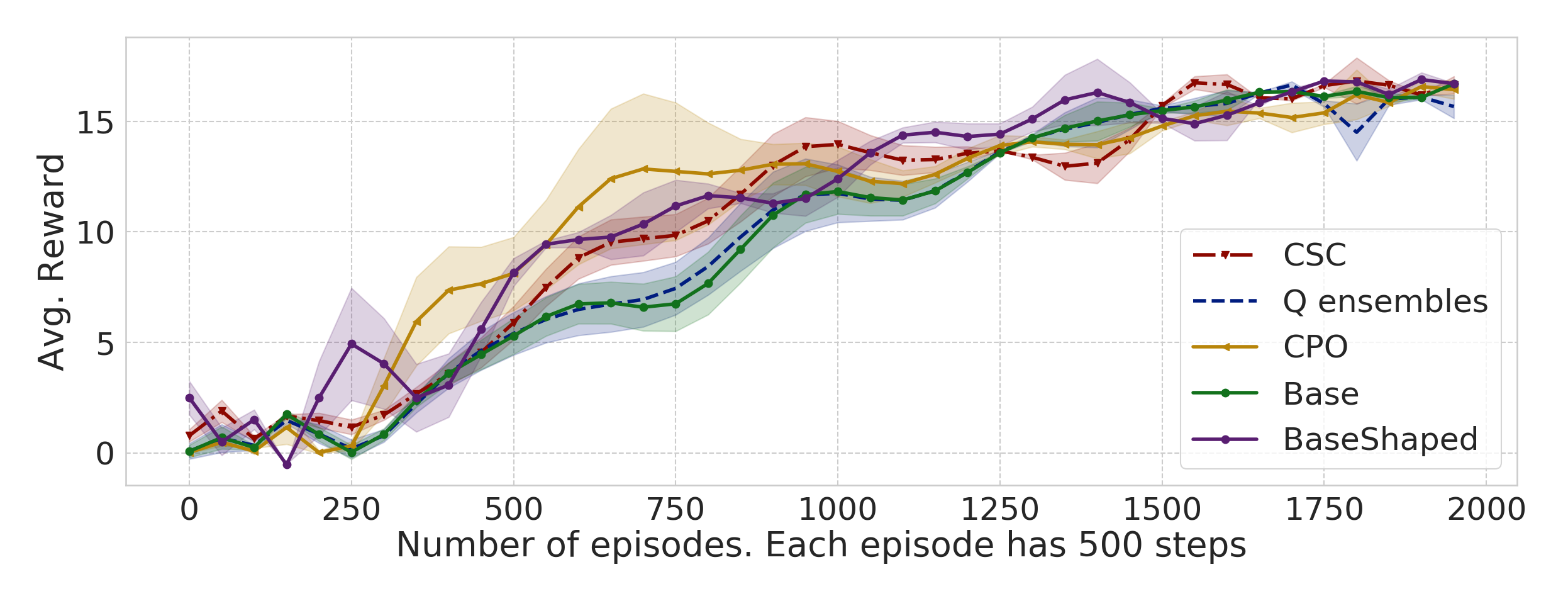}
    \caption{Car Nav. Rewards}
    \label{fig:pointmass}
        \end{subfigure}
         \begin{subfigure}[b]{0.33\textwidth}
              \centering
    \includegraphics[width=\textwidth]{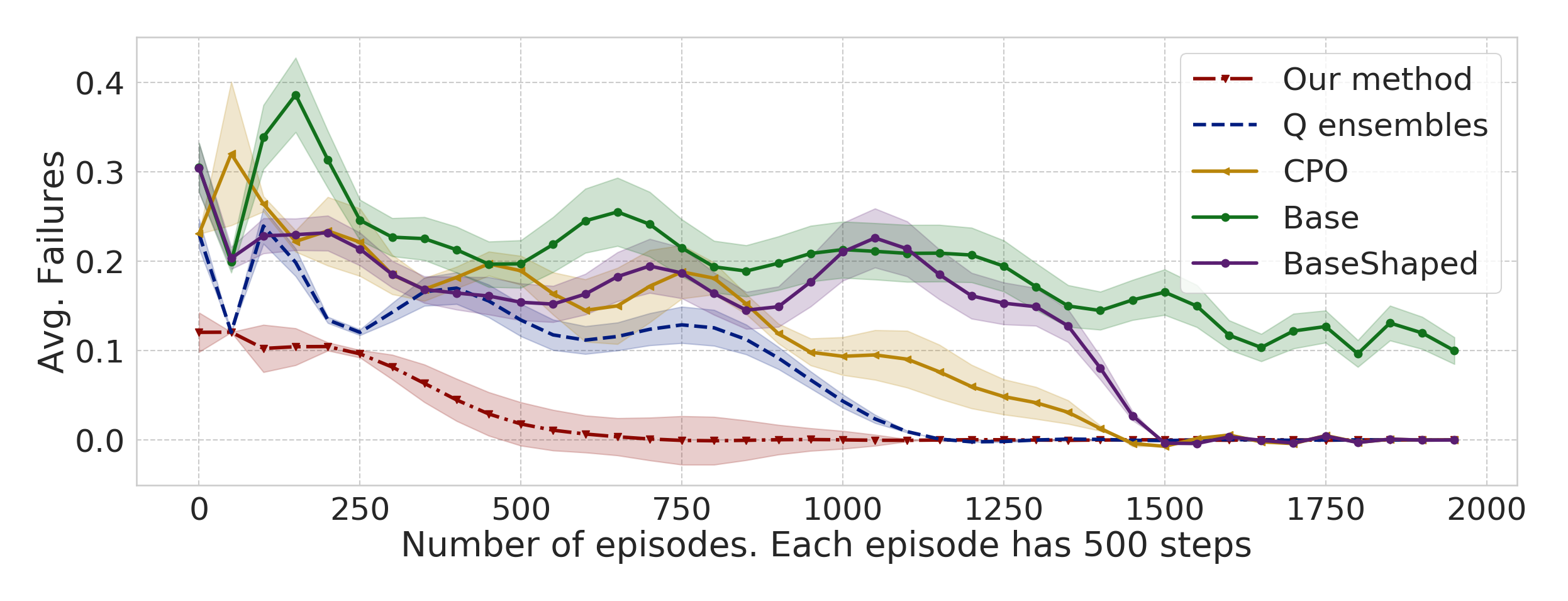}
    \caption{Car Nav. Avg. failures}
    \label{fig:manipulator}
        \end{subfigure}
        \begin{subfigure}[b]{0.33\textwidth}
              \centering
    \includegraphics[width=\textwidth]{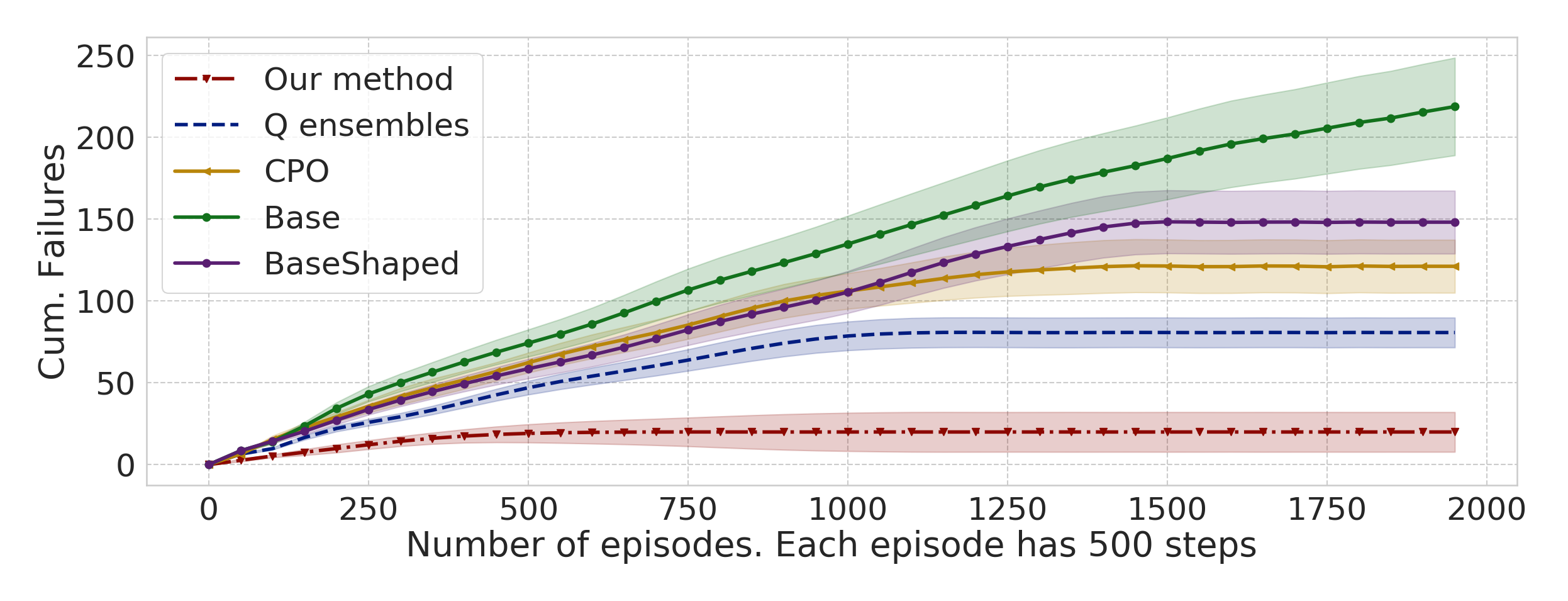}
    \caption{Car Nav. Cum. failures}
    \label{fig:manipulator}
        \end{subfigure}
     \caption{\textbf{Results on the Car navigation environment using continuous safety signal.}}  
 \label{fig:seeding}
\end{figure*}

\end{document}